\documentclass[10pt,onecolumn]{article}
\oddsidemargin=-0.0in \evensidemargin=0.in \topmargin=.0in
\headsep=0.in \textwidth=6.5in \textheight=9in

\usepackage{subcaption}
\usepackage{listings}

\usepackage{ISG_style}
\usepackage{graphicx}
\usepackage{color}
\usepackage{amsmath, amsthm, amssymb, amsfonts, mathrsfs}
\usepackage{bbm}
\usepackage{dsfont}
\usepackage{flushend}

\usepackage{cite}
\usepackage{authblk}

\newtheorem{lemma}{Lemma}
\newtheorem{theorem}{Theorem}

\DeclareMathOperator*{\arginf}{arg\,inf}

\DeclareMathOperator*{\argmax}{arg\,max}
\DeclareMathAlphabet\mathbfcal{OMS}{cmsy}{b}{n}
\makeatletter
\newcommand{\xRightarrow}[2][]{\ext@arrow 0359\Rightarrowfill@{#1}{#2}}
\makeatother

\newcommand*\diff{\mathop{}\!\mathrm{d}}

\usepackage{algorithm}
\usepackage{algpseudocode}
\usepackage{mathtools}
\usepackage[font=small,justification=centering]{caption}


\title{\vspace{-.0 in}  Active Sampling of Multiple Sources for Sequential Estimation}
\date{}


\author{
Arpan Mukherjee$^*$  \qquad  Ali Tajer\thanks{A. Mukherjee and A. Tajer are  with the Electrical, Computer, and Systems Engineering Department, Rensselaer Polytechnic Institute, Troy, NY.}\qquad Pin-Yu Chen$^\dagger$ \qquad Payel Das
\thanks{P.-Y. Chen and P. Das are with the IBM Thomas J. Watson Research Center, Yorktown, NY.}
}

\begin{document}
	
	\maketitle
	\allowdisplaybreaks
	
	\begin{abstract}
		Consider $K$ processes, each generating a sequence of identical and independent random variables. The probability measures of these processes have random parameters that must be estimated. Specifically, they share a parameter $\theta$ common to all probability measures. Additionally, each process $i\in\{1, \dots, K\}$ has a private parameter $\alpha_i$. The objective is to design an active sampling algorithm for sequentially estimating these parameters in order to form reliable estimates for all shared and private parameters with the fewest number of samples. This sampling algorithm has three key components: (i)~data-driven sampling decisions, which dynamically over time specifies which of the $K$ processes should be selected for sampling; (ii)~stopping time for the process, which specifies when the accumulated data is sufficient to form reliable estimates and terminate the sampling process; and (iii)~estimators for all shared and private parameters. Owing to the sequential estimation being known to be analytically intractable, this paper adopts  \emph {conditional} estimation cost functions, leading to a sequential estimation approach that was recently shown to render tractable analysis. Asymptotically optimal decision rules (sampling, stopping, and estimation) are delineated, and numerical experiments are provided to compare the efficacy and quality of the proposed procedure with those of the relevant approaches.
	\end{abstract}
	
	\section{Introduction}
	\subsection{Overview}
	Consider the canonical estimation problem, in which we have a collection of probability measures $\mcP\triangleq\{\P(\cdot\med \theta)\; : \; \theta\in\Theta\}$ defined over a common measurable space. The nature picks $\theta$, the statistician draws samples from $\P(\cdot\med\theta)$, and the objective is to use these samples to estimate $\theta$. Building up on this canonical model, assume that we have a collection of $K$ probability measures $\mcP_i\triangleq \{\P_i(\cdot\med\theta)\; :\; \theta\in\Theta\}$ for $i\in[K]\triangleq\{1,\cdots,K\}$. Similarly, the nature selects $\theta$ and the statistician is given the freedom to collect samples from any one of the $K$ models, and the objective is to form a reliable estimate for $\theta$. With the objective of estimating $\theta$ with the fewest number of samples, a fundamental question pertains to which model(s) are the most reliable for estimating $\theta$. If we can determine, {\em a priori}, which model is expected to be the most informative about $\theta$ in its entire range, then the answer is clear: always sample from that model. For instance, when $K=2$ and   
	\begin{align}
		\P_1(\cdot \med \theta) \sim \mcN(\theta,1) \quad\mbox{and} \quad 
		\P_2(\cdot \med \theta) \sim \mcN(\theta,2)\ ,
	\end{align}   under the mean-squared error cost function, $\P_1$ is always a more reliable model to use. In general, however, different models can be selectively more descriptive about $\theta$ in different regimes. For instance, for $\Theta\in[0,1]$ consider the following two models
	\begin{align}\label{eq:model1}
		\P_1(\cdot \med \theta) \sim \mcN(0,\theta) \quad\mbox{and} \quad 
		\P_2(\cdot \med \theta) \sim \mcN(0,1-\theta)\ .
	\end{align} 
	It can be readily verified that under the mean-squared cost function, when $\theta\in(0,1/2)$, $\P_1$ is the more informative model, and when $\theta\in(1/2,1)$, $\P_2$ is the more informative one. Thus, without knowing the value of $\theta$, it is impossible to specify which model is more informative. In such scenarios, any sampling action that focuses on only one model is sub-optimal. An optimal strategy will involve alternating between models until one can be identified as most informative with sufficient confidence. The complexity of such decisions can be further compounded when there are more parameters involved. As an example, consider a generalization of (\ref{eq:model1}) in which the mean values are also unknown, i.e., 
	\begin{align}
		\P_1(\cdot \med \theta,\mu_1) \sim \mcN(\mu_1,\theta) \;\; \mbox{and} \;\; 
		\P_2(\cdot \med \theta,\mu_2) \sim \mcN(\mu_2,1-\theta)\ ,
	\end{align} 
	where $\theta\subseteq[0,1]$ and $\mu_1,\mu_2\in\R$. In these models, even when the most informative model for $\theta$ is known, we still need to draw samples from both models in order to be able to estimate $\mu_1$ and $\mu_2$, since $\mu_i$ can be estimated exclusively from model $i\in\{1,2\}$. In this paper, we consider a general setting of $K$ models that can capture both aspects discussed (i.e., shared and private parameters). The objective is to design a sequential and data-adaptive sampling procedure such that we can form sufficiently reliable estimates for the parameters involved with the fewest number of samples. 
	\par The design of such a sequential procedure involves performing three intertwined tasks dynamically over time. The first decision pertains to forming and updating estimates about the parameters over time. Besides the ultimate interest in the estimates, they also guide the sampling process. The second decision specifies the next model to be selected for sampling. These decisions at time $t$ are formed based on the sequence of models selected up to time $t$, the data collected, and the estimates formed. Finally, a stopping decision has to be specified, which terminates the sampling process when sufficient confidence about the fidelity of the estimates has been reached. Designing such data-acquisition and inference mechanisms is related to active (controlled) sampling for sequential design of experiments, the foundations of which was laid out by Chernoff for binary composite hypothesis testing \cite{Chernoff} through incorporating a data-acquisition process that dynamically decides about taking a finite number of data-acquisition actions. Under the assumption of uniformly distinguishable hypothesis and statistically independent control actions, there exists a rich body of literature on greedy algorithms (making decisions with best immediate return) that achieve optimality in the asymptote of diminishing rates of erroneous decisions. 
	
	Unlike the extensive literature for detection and classification problems, active sampling  is far less investigated for estimation problems. Sequential designs of experiments for detection and estimation have a wide range of real-world applications. In machine learning, we are generally provided with a large dataset, from which machine learning literature almost universally assumes selecting a subset of samples uniformly at random, with the objective of performing various learning tasks. However, a network-guided sequential sampling strategy is shown to provide significant improvements in terms of sample complexity in~\cite{TajerMarkov,PAC,tong:01}. This framework also finds strong resemblance to that of multi-armed bandits~\cite{bandit}, which have a broad set of applications such as clinical drug testing and trial, brain and behavior modelling and dynamic pricing ~\cite{app4}, \textcolor{black}{computerized adaptive testing (CAT)~\cite{rev1}}, detecting intrusions in computer networks, ~\cite{app5} and target detection in radar systems~\cite{app6}. Next, we provide an overview of the related literature on active sampling for inference problems that are most relevant to the scope of this paper.

	\subsection{Related Literature}
\noindent	\textbf{Active Sampling for Detection.} The notion of active sequential detection was first studied for binary composite hypothesis testing in \cite{Chernoff}, providing asymptotically optimal design rules for sequential hypothesis testing under uniformly distinguishable hypotheses. This approach and its extensions to broader settings in~\cite{TajerMarkov} and~\cite{Bessler,Albert1961,box1967,Meeter,Atia2, Veeravalli2,Zhao:IT15,vaidhiyan2015active, Naghshvar2013,Wang:Arxiv15,Zhao:SP14,Zhao:SP15,TajerDetectionLearning,J26,Yahav}, sequentially take control actions that greedily maximize the immediate return. Specifically, the rules identify the most likely decision at every step, and selects the action that maximally reinforces this decision. The studies in \cite{Bessler} and \cite{Albert1961} extend the results to the cases of an infinite number of available actions, and an infinite number of true states of nature, respectively. The studies in \cite{box1967} and \cite{blot73} provide alternate sampling strategies that are empirically shown to outperform Chernoff's rule in the non-asymptotic regime. In a more recent study, the assumptions of having uniformly distinguishable hypothesis is relaxed in \cite{Atia2}, and a modified Chernoff's rule is designed for multi-hypothesis testing. This approach introduces a sequence of intervals during which actions are selected uniformly at random instead of selecting the action that maximizes the immediate return. An extension to a stationery Markov model is investigated in \cite{Veeravalli2}. Detection of an anomalous process from a finite set of anomalous processes has been considered in \cite{Zhao}, where it is shown that Chernoff's rule is asymptotically optimal even without the assumption of uniformly distinguishable hypotheses, or using uniform selections at certain instants as prescribed in \cite{Atia2}. 
	More applications of Chernoff's rule can be found in \cite{vaidhiyan2015active,mitra} and~\cite{atiaClassification}.
	Active sequential hypothesis testing coupled with a switching cost for switching between actions has been studied in \cite{vaidhiyan2015active}. In this study, a modification of the Chernoff's procedure is shown to be asymptotically optimal in the regime of diminishing error probabilities. The study in ~\cite{mitra} considers the problem of active state tracking for a dynamic stochastic system whose states are varying through a discrete time Markov chain, and a dynamic programming-based approach is proposed for estimating the underlying states. The problem of active classification of graphs based on connectivity through partially observable nodes is studied in ~\cite{atiaClassification}. The problem of active model selection in Markov random fields (MRFs) is studied in \cite{TajerMarkov}, where data acquisition is modelled as a sequential process. Induced by the underlying dependence structure of an MRF, different sampling decisions over time are not necessarily statistically independent, rendering Chernoff's rule sub-optimal in this setting, even in asymptotic regimes.  This arises mainly due to the fact that Chernoff’s rule focuses on maximizing the immediate return, without paying attention to the expected future return. The approaches in \cite{TajerMarkov} incorporates the effect of sampling actions on the expected rewards from future sampling decisions. Chernoff's test has been extended to a distributed setup in \cite{rangi2018distributed}, designing distributed and consensus-based Chernoff tests for decentralized detection.
	
	Other related literature on the sequential design of experiments for inference include the studies presented in \cite{schwarz1962} and~\cite{kiefer1963}, which propose approaches that take an initial number of samples according to certain predesignated rule in order to guess the true state of nature, and then select actions that maximize proper information measures based on the predicted states. Furthermore, \cite{lalley1986} studies the problem of sequential design of experiments in the finite-sample regime, and proposes a switching policy which achieves an average sample complexity that is within a constant factor of the optimal average sampling complexity, while being computationally simpler than the optimal solution. The problem of multi-hypothesis testing using multiple controls is studied in \cite{Naghshvar2013}, which frames the problem as a dynamic program, whose optimal solution is intractable. Thus, two heuristic policies are proposed and analyzed in both asymptotic and non-asymptotic regimes.
	
{\noindent \textbf{Active Sampling for Estimation -- Theoreitial Aspects}.} Compared to the extensive literature on active sampling for detection and classification problems, {the theoretical aspects of} parameter estimation in an active sampling setting are far less investigated. Sequential estimation, without the additional complexity of control actions, has a rich body of literature in statistics. Reviews of sequential point and interval estimation can be found in~\cite{rev3,rev4,ghosh2011sequential,seq_est1,seq_est2}. Furthermore, a representative list of active sampling for parameter estimation includes~\cite{rev1,Yahav}, and~\cite{yohai1973,rev2,Atia,MoustakidesISIT}.

	Sequential estimation from one time series is investigated in \cite{Yahav}, where a fixed sampling cost is charged every time a new observation is procured, and a unified cost function aggregating the estimation accuracy and the total cost of sampling the observations is adopted. The stochastic unified cost function captures a trade-off between estimation fidelity and agility of arriving at an estimate. This study formalizes the notion of asymptotic point-wise optimality (A.P.O.) for assessing the performance of any sequential design, and proposes sequential procedures that are asymptotically optimal.
	
	This study was extended in ~\cite{yohai1973} to active sampling for estimation by incorporating a set of control actions for collecting the data. In ~\cite{yohai1973}, the objective is to estimate a function of an unknown parameter underlying each control action, such that the parameters under different control actions are different. Hence, the data collected based on a control action does not provide any information about the parameters associated with other actions. In \cite{Atia}, this drawback is mitigated by considering the case of estimating a single shared unknown parameter, while allowing the distributions under different control actions to be non-identical. This study also considers estimating a linear function of a set of shared unknown parameters. In a different setting, an intertwined problem of model detection and model probability kernel estimation is considered in \cite{TajerDetectionLearning}, where in addition to discerning the true model (detection), an unknown probability kernel should be estimated too (estimation). Thus, the generalized stopping rule has to cater to both the quality of estimation and the intertwined task of detection. In all these studies, a unified cost function involving a linear combination of the estimation cost and the sampling cost are considered. Such approaches, in general, are known to be analytically intractable. To address this issue, a different approach to sequential estimation is introduced in \cite{MoustakidesISIT}, where a single control is assumed, and the conventional average estimation cost function is replaced by an average {\em conditional} estimation cost function. In~\cite{MoustakidesISIT}, the objective is to form a reliable estimate of an unknown parameter using the fewest number of samples such that a Bayes posterior risk falls below a pre-specified threshold. This study provides an exactly optimal solution for this setting. In this paper, we adopt the approach of~\cite{MoustakidesISIT} for the estimation cost function in the active sampling framework. In this framework, we propose asymptotically optimal sequential procedures, with the main emphasis being on the design of efficient active sampling strategies.
	
{	\noindent \textbf{Active Sampling for Estimation -- Applied Aspects}.	On the application side, one domain in which active sampling for estimation is used is in the context of computerized adaptive testing (CAT). The goal in this setting is to devise a sequence of sampling decisions (items) that match the trait level of an examinee, characterized by an unknown parameter. This general setting and objective of CAT, while relevant, has major differences in its settings, assumptions, objective, and performance metrics. First, CAT falls in the category of finite-horizon settings. Since it has a fixed number of items to choose from~\cite{rev1,CAT_nstop1,CAT_nstop2,CAT_nstop3} and each item can be used at most once. This enforces a hard constraint on its stopping time. Even though some investigations introduce an adaptive stopping time~\cite{CAT_stop1,CAT_stop2,CAT_stop3,wang2013deriving}, the stopping time may not exceed the number of items in the pool. In contrast, we consider an infinite-horizon setting, in which each experiment can be sampled as many times as necessary. In sequential analyses, finite versus infinite-horizon have distinct guarantees and designs. Secondly, CAT considers binary observations from the examinees, using the item response theory with a parameterized logistic regression model. This is distinct from the setting in this paper, which is much more general; essentially, we consider any model that satisfies some mild regularity conditions. Third, the objective function optimized in CAT is different from the one investigated in this paper. Specifically, CAT optimizes measures associated with the sample covariance, such as the largest eigenvalue or the trace~\cite{wang2013deriving}. On the contrary, we optimize the average sample complexity with a constraint on the estimation cost. This cost function has no precedence in the active sequential literature and has only been considered in~\cite{MoustakidesISIT} for the case of single control action. Finally, the key performance metrics investigated in CAT are the properties of the estimator, such as the bias and mean squared error reduction. In sequential statistics, the canonical performance metric is the average sample complexity~\cite{Chernoff,Veeravalli2,Veeravalli,TTEI,pmlr-v49-garivier16a,pmlr-v108-shang20a}.  The results investigated in this paper are based on characterizing the average sample complexity for the proposed procedures and analyzing their optimality.}
	
	\subsection{{Contributions}}
	
	{Our framework for active sampling for sequential parameter estimation consists of three intertwined decision rules: (i) a sequence of estimators for dynamically updating the estimates of interest over time, (ii) a sampling rule that specifies how the sampling decisions are guided by and adapted to the data, and (iii) a stopping rule, at which the sampling process is terminated, and the available data is deemed sufficient for forming sufficiently reliable estimates. The overall theoretical contribution is designing a combination of rules that collectively admit a form of optimality, addressing a setting for which theoretical guarantees are hitherto unknown. Besides the theoretical guarantee, we also specify our decision rules' relevance (similarities and differences) to the existing ones in active sampling and sequential estimation literature.
\begin{itemize}
	\item \textbf{Estimator.}  Generally, sequential estimation problems are not as analytically tractable as their detection counterparts, often rendering a lack of optimality guarantees. In this paper, we adopt a new estimation cost function introduced in~\cite{MoustakidesISIT}. This cost function focuses on the \emph{conditional} estimation cost, distinct from the conventional average estimation cost functions adopted in sequential estimation. This choice of the cost function facilitates tractable analysis and closed-form characterization of the decision rules that enjoy optimality guarantees. This cost function has no precedence in the active sampling frameworks. Besides the literature on the theoretical aspects, this cost function does not have precedence in the relevant applied domains (e.g., CAT), which generally focus on the conventional average estimation cost function, distinct from the average posterior cost function adopted in this paper.
	\item \textbf{Sampling rule.} Our setting and analysis generalize those of~\cite{MoustakidesISIT}, which focuses on a single experiment. We consider a two-fold generalization. First, we consider multiple experiments, where one experiment can be chosen at each instant. Secondly, each experiment can have both shared and private parameters, each having different guarantees on the estimation cost. Thus, the sampling rule, at its core, has to balance between greedily sampling the most informative experiment and ensuring that the guarantees for the less informative experiments are satisfied. We adopt a sampling rule that ensures an optimal trade-off between exploiting the most informative experiment while providing sufficient exploration for the other experiments. This is a key distinction from the existing theoretical investigation of active sampling for sequential estimation~\cite{Atia} and their applications, e.g., in CAT. In CAT, specifically, the active selection of the experiments has two main differences. First, they do not have the notion of shared versus private parameters and consequently do not have the inherent estimation trade-off involved. Secondly, CAT by design is a finite-horizon problem, rendering the design of the sampling rules fundamentally a dynamic programming problem. This is distinct from our setting, which is infinite-horizon with a stochastic stopping time. 
	\item\textbf{Stopping rule.} Finally, we propose a stopping rule that ensures that the respective guarantees on the estimation cost for both the shared and private parameters are satisfied while, in parallel, the average delay is minimized. Specifically, we provide universal lower bounds on the average delay required to achieve the guarantees on the estimation cost and then show that our proposed procedures match these bounds asymptotically. Furthermore, we show that the most relevant sampling rule in~\cite{Atia} is sub-optimal in the general setting. These stopping rules, being infinite-horizon, are quite distinct from those generally adopted in CAT.
	\end{itemize}}

	\section{Active Sampling Model}\label{Sec:single_param}

	Consider a finite set of experiments $\{S_1,\cdots,S_K\}$. When experiment $S_i$ for $i\in[K]\triangleq\lbrace 1,\cdots, K\rbrace$ is selected, it generates a random variable whose probability distribution depends on two unknown parameters $\theta\in\Theta\subseteq \R$ and $\alpha_i\in\mcA_i\subseteq\R$. We call $\theta$ the {\em shared} unknown parameter of all experiments, and call $\alpha_i$ the {\em private} unknown parameter of experiment $i\in[K]$. As an example, consider a sensor network in which each sensor produces noisy observations of a shared parameter $\theta$ (such as the temperature and pressure), and the task is to estimate the shared parameter from the observations obtained from the sensors. Furthermore, the stochastic processes corresponding to each sensor $i\in[K]$ could also have unknown private parameters $\alpha_i$ governing the data generation, such as the noise variance of each sensor. Our objective is to have a sequential design of experiments such that we can estimate $\theta$ and $\balpha \triangleq (\alpha_1,\cdots,\alpha_K)$ with the fewest number of experiments (samples). We denote the probability density function (pdf) of the samples generated by experiment $S_i$ by $f_i(\cdot\med \theta,\alpha_i)$, where for simplicity of notations we are assuming that the pdfs are well-defined without any zero over lower-dimensional manifolds. We also assume that the samples generated by an experiment are independent and identically distributed (i.i.d.). Our focus is on the Bayesian setting and we denote the prior pdfs of $\theta$ and $\alpha_i$ by $\pi_\theta$ and $\pi_i$, respectively. 

	\begin{figure*}[h]
\centering
				\includegraphics[width=0.8\linewidth]{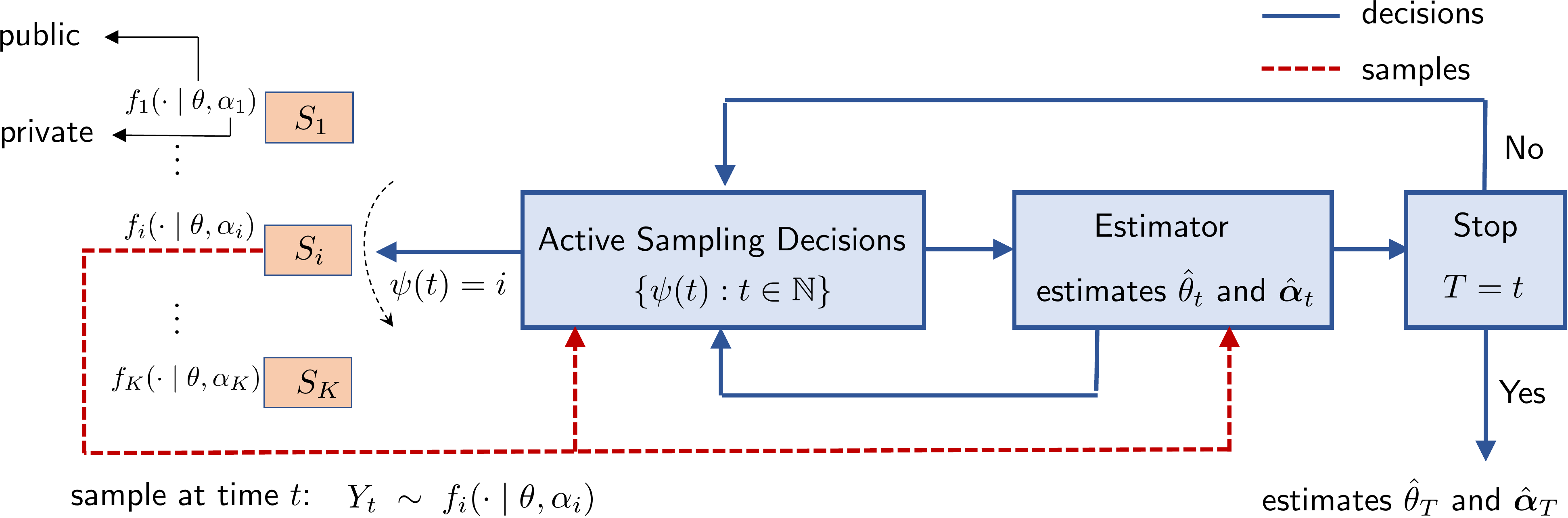} 
		\caption{Active sampling for sequential parameter estimation. $\hat\theta_t$ and $\hat\balpha_t$ denote the estimates of the shared and private parameters respectively, $\{\psi(t):t\in\N\}$ represents the sequence of selected experiments, and $T$ represents the stopping time.}
		\label{fig:full_unknown} 
\end{figure*}

	\subsection{Sampling Model}\label{Sec:Sampling_model_1}
	We consider a fully sequential data-acquisition mechanism, according to which we are allowed to gather one sample at-a-time. The sampling model is described in Figure~\ref{fig:full_unknown}.  Our primary goal is to identify a sequence of experiments for forming reliable estimates for $\theta$ and $\balpha$ using the minimum number of experiments on average. An optimal sampling process will require balancing a number of competing measures in an optimal way. The trade-offs we are facing include:
	\begin{itemize}
		\item {\bf Delay versus fidelity:} Collecting more samples, when used judiciously, results in improved estimation fidelity at the expense of increased delay in forming estimates.
		\item {\bf Model discrepancies:} Not all experiments are equally informative about the shared parameter. Nevertheless, {\em a priori}, it might not be known which experiments are the most informative about the realization of $\theta$. Hence, this creates a natural exploration versus exploitation trade-off, according to which the experiments serve a two-fold purpose: identifying the most informative experiments and using the data from these experiments to estimate~$\theta$.
		\item {\bf Shared versus private parameters:} While for estimating the shared parameter there is the tendency to identify the most informative experiment for $\theta$, for estimating the private parameters each experiment should be selected sufficiently often to render a reliable estimate for its private parameter.
	\end{itemize}
	To formalize the sampling process, we need to specify three decision-making tasks. First, we specify a sampling rule $\psi: \N\rightarrow [K]$ where $\psi(t)\in[K]$ denotes the experiment to be selected at time $t$. We denote the sample collected from the experiment $S_{\psi(t)}$ by $Y_t$. Accordingly, we define the {\em ordered} sets
	\begin{align}
		Y^t\;\triangleq\;\big\{ Y_1,\cdots,Y_t\big \}\ , \;\;\mbox{and} \;\; \psi^t\;\triangleq\; \big\{\psi(1),\cdots,\psi(t)\big\}\; .
	\end{align} 
	The observed filtration induced by a sampling rule $\psi$ is denoted by $\mcF_t^\psi\triangleq \sigma (Y^t,\psi^t)$\footnote{Unless otherwise stated, we use the short-hand $\F_t$ for $\F_t^\psi$, where the sampling rule $\psi$ is clear from the context.}. We assume that $\psi(t+1)$ is $\mcF_t^\psi$-measurable. The second decision-making task is a stopping rule that determines the instance at which we have accumulated sufficient evidence to form reliable estimates. Specifically, given a sampling rule $\psi$, we define an $\mcF_T^\psi$-stopping time that is a randomization $T$ such that $\{T=t\}\in\mcF_t^\psi$,
	for all $t\in\N$. Finally, corresponding to a given sampling rule $\psi$, we define $\hat\theta (Y^t,\psi^t)$ and $\hat\alpha_{i}(Y^t,\psi^t)$ as the estimators for $\theta$ and $\alpha_i$ respectively\footnote{Throughout the paper, sometimes we use the short-hands $\hat\theta_t$ and $\hat\alpha_{i,t}$ for $\hat\theta(Y^t,\psi^t)$ and $\hat\alpha_{i,t}(Y^t,\psi^t)$ respectively.}. We refer to the tuple 
	\begin{align}
		\Delta\;\triangleq\; (T,\psi,\Phi_T)\ ,
	\end{align}
	as the collection of rules involved in active sampling for sequential estimation, where we have defined 
	\begin{align}
		\Phi_t\;\triangleq\; (\hat\theta(Y^t,\psi^t), \hat\balpha(Y^t,\psi^t))\;.
	\end{align}
	
	\section{Problem Statements}
	In this section, we formalize the problem of active sampling for sequential estimation. There are two key figures of merit involved in characterizing the performance of the active sampling framework: the average delay (sample complexity) and the estimation costs incurred by the final estimates. There exists a tension between these two quantities, since improving one penalizes the other one. Specifically, improving the estimates necessitates collecting more samples, which in turn, penalizes the sample complexity. Capturing this trade-off, our formulation aims to maintain the estimation cost below a target threshold and, in parallel, minimize the average sample complexity. 
	\subsection{Estimating Shared Parameters}
	We start by focusing on the setting in which we have unknown shared parameters, and the private parameters are fully known. In this setting, we denote the pdf of the samples generated by the experiment $S_i$ by $f_i(\cdot\med\theta)$, for $i\in[K]$. Define $\ell(\theta,\hat\theta)$ as a non-negative cost function that captures the fidelity of the estimate $\hat\theta$ with respect to the ground truth $\theta$. Accordingly, $\E[\ell(\hat{\theta},\theta)]$ is the associated average cost of the estimate $\hat{\theta}$, where the expectation is taken with respect to the data and the prior distribution of $\theta$. 

	The posterior distribution of $\theta$ at time $t$ given the set of samples and control actions taken up to time $t$ is given by
	\begin{align}
		\pi^t_{\theta}(\theta) \;= \;\frac{\pi_\theta(\theta)\displaystyle \prod\limits_{i\in\psi^t} f_{i}(Y_i\;|\;\theta)}{\displaystyle \int_{v\in\Theta} \pi_\theta(v)\displaystyle \prod\limits_{i\in\psi^t} f_{i}(Y_i\;|\;v)\; \diff v}\; .
	\end{align}
	Based on the above posterior, we define the \textit{ conditional average} posterior cost for an estimate $\hat{\theta}_t$ formed at time $t$ as
	\begin{align}\label{eq:con1}
		{\sf C}(\hat\theta_t\;|\;\F_t)\;\triangleq\; \E_t\big[\ell(\hat\theta_t,\theta)\;|\;\F_t\big]\ ,
	\end{align}
	where $\E_t$ denotes expectations with respect to $\pi^t_\theta$. We denote the associated average posterior cost by 
	\begin{align}
		\label{eq:cost_C}
		{\sf C}(\hat\theta_t)\;=\;\E_t[\ell(\theta,\hat\theta_t)]\ .
	\end{align}
	\textcolor{black}{Note that the estimation cost ${\sf C}(\hat\theta_t)$ depends on the choice of the cost function $\ell(\cdot,\cdot)$. Thus, the algorithm design and its optimality properties depend on the choice of the cost function. In this paper, we use the quadratic cost function for our analyses.} 
	The sequence of conditional posterior cost functions $\{{\sf C}(\hat\theta_t\med\F_t) : t\in\N\}$ and posterior cost functions $\{{\sf C}(\hat\theta_t) : t\in\N\}$ are $\mcF_t$-measurable and form increasing sequences of $\sigma$-fields. A natural approach to formulating sequential estimation is finding a solution~to:
	\begin{align}\label{eq:unsovable}
		\tilde{\mathcal{P}}(\tilde{\beta})\;\triangleq\;\begin{cases}
			\inf\limits_{\Delta}\;&\E[T]\\
			\text{s.t.}\; &{\sf C}(\hat\theta_t)\;\leq\;\tilde{\beta} 
		\end{cases}\ ,
	\end{align} 
	where $\tilde\beta\in\R_+$ controls the estimation quality. However, as discussed in~\cite{MoustakidesISIT} and its references [3-6], solving (\ref{eq:unsovable}) even in simpler settings, e.g., $K=1$, in which we do not have the action sampling decisions, is analytically intractable. In this paper, instead of (\ref{eq:unsovable}), we adopt the approach of~\cite{MoustakidesISIT} and in (\ref{eq:unsovable}) replace the average posterior cost ${\sf C}(\hat\theta_t)$ with the conditional average posterior cost ${\sf C}(\hat\theta_t\med\F_t)$. Hence our objective is to minimize the average sample size such that the conditional average posterior cost falls below a prescribed threshold, formalized as  
	
	\begin{align}\label{eq:var}
		\mcP(\beta)\triangleq \left\{
		\begin{array}{ll}
			\inf\limits_\Delta & \E[T]
			\\
			\mbox{s.t.} & {\sf C}(\hat\theta_T\;|\;\F_T) \leq \beta 
		\end{array}\right. \ ,
	\end{align}
	where $\beta\in\R_+$ controls the quality of the estimate $\hat\theta_T$ given $\F_T$. 
	
	\subsection{Estimating Shared and Private Parameters}
	Next, we consider a generalization of the framework, such that in addition to the shared parameter, experiment $S_i$ also has a private statistically independent parameter $\alpha_i$ to estimate. To formalize this, we denote the posterior joint pdf of parameters $\theta$ and $\{\alpha_i : i\in[K]\}$ conditioned on $Y^t$ and $\psi^t$ by $g_t$. By defining the space $\mcA\triangleq \prod\limits_{i=1}^K \mcA_i$ we have 
	\begin{align}
		g_t(\theta,\balpha)\;=\;\;\frac{\pi_\theta(\theta)\displaystyle\prod\limits_{j=1}^K\pi_i(\alpha_j)\displaystyle \prod\limits_{i=1}^t f_{i}(Y_i\;|\;\theta,\balpha)}{\displaystyle \int_{\mathbf{u}\in\mcA}\int_{v\in\Theta} \pi_\theta(v)\displaystyle\prod\limits_{j=1}^K\pi_j(u_j)\displaystyle \prod\limits_{i=1}^t f_{i}(Y_i\;|\;v,u)\; \diff v \diff \mathbf{u}}\; .
	\end{align}
	Subsequently, the marginal posterior pdf of $\theta$ conditioned on $Y^t$ and $\psi^t$ is denoted by
	\begin{align}
		\pi_\theta^t(\theta)\triangleq&\int\limits_{\mathbf{v}\in\mcA} g_t(\theta,\bv)\diff \mathbf{v}\;.
	\end{align}
	Furthermore, let $S_i^t \triangleq \{s\in\{1,\cdots,t\}\; :\; \psi(s)=i\}$ denote the set of instances up to time $t$ at which experiment $i\in[K]$ is selected. Hence, the posterior pdf of $\alpha_i$ conditioned on $\theta$, $Y^t$ and $\psi^t$ is
	\begin{align}
		h_i^t(\alpha_i)\;=\; \frac{\pi_i(\alpha_i)\displaystyle\prod\limits_{s\in S_i^t}f_i(Y_s\med \alpha_i,\theta)}{\displaystyle\int_{v\in\mcA_i}\pi_i(v)\displaystyle\prod\limits_{s\in S_i^t}f_i(Y_s\med v,\theta) \diff v}\ .
	\end{align}
	Based on the above definitions, we now define the \textit{conditional average estimation cost} for estimating $\alpha_i$ as 
	\begin{align}
		\label{eq:cost_D}
		{\sf D}(\hat\alpha_{i,t}\;|\;\F_t,\hat{\theta}_t)\;\triangleq\; \E_t^i\big[\ell(\hat\alpha_{i,t},\alpha_i)\;|\;\F_t,\hat\theta_t\big]\ ,
	\end{align}
	where $\E_t^i$ is the expectation with respect to $h_i^t(\alpha_i)$. This setting emphasizes a hierarchy of inference objectives in which the primary objective is estimating the shared parameter $\theta$. The estimate of $\theta$, subsequently, guides estimating the private parameters. By incorporating constants that capture the fidelity of the estimates for $\{\alpha_i : i\in[K]\}$, an optimal sequential estimation procedure can be found as the solution to:
	\begin{align}\label{eq:var_multi}
		\mcP(\bbeta)\triangleq \left\{
		\begin{array}{ll}
			\min\limits_\Delta & \E[T]
			\\
			\mbox{s.t.} & {\sf C}(\hat\theta_T\;|\;\F_T) \leq \beta \\
			\text{and} & {\sf D}(\hat\alpha_{i,T}\;|\;\F_T,\;\hat\theta_T) \leq \beta_i,\;\forall i\in[K]
		\end{array}\right. \ ,
	\end{align}
	where we have defined $\bbeta \triangleq [\beta,\beta_1,\cdots,\beta_K]$, and $\beta_i\in\R_+$ controls the estimation quality of $\alpha_i$.
	
	\subsection{Technical Assumptions}
	In this section, we provide the assumptions under which the performance guarantees are established. The assumptions are mainly necessary for the existence and consistency of the maximum likelihood (ML) estimates and the existence of the Fisher information (FI) measures of the relevant parameters. To proceed, corresponding to the pdfs $f_i(x\med\theta)$ (only shared parameters) and $f_i(x\med\theta,\alpha_i)$ (shared and private parameters), for $i\in[K]$, we define the log-likelihood functions
	\begin{align}
		&\lambda_i(x\med\theta)\;\triangleq\; \log f_i(x\med\theta)\ ,\quad\text{for }i\in[K]\ ,\\
		&\lambda_i(x\med\theta,\alpha_i)\;\triangleq\; \log f_i(x\med\theta,\alpha_i)\ ,\quad\text{for }i\in[K]\ .
	\end{align} 
	\begin{itemize}
		\label{assumption:1}
		\item[A$_1$:] Parameter spaces $\Theta$ and $\{\mcA_i : i\in[K]\}$ are assumed to be non-empty and compact.
		\item[A$_2$:]
		We assume that the ML estimate of the parameter $\theta$ exists, and it is finite. Specifically, when we have only the shared parameter, $\mathbb{E}[\lvert \lambda_i(x\;|\;\theta)\rvert]<+\infty$ for all $\theta\in\Theta$. Similarly, for the setting with shared and private parameters, for each $i\in[K]$, we assume $\mathbb{E}[\lvert \lambda_i(x\;|\;\theta,\alpha_i)\rvert]<+\infty$ for all $\theta\in\Theta$ and $\alpha_i\in\mcA_i$.
		\item[A$_3$:]
		The log-likelihood functions $\{\lambda_i(x\;|\;\theta) : i\in[K]\}$ are assumed to be continuous and differentiable for all $\theta\in\Theta$. The first order derivative $\frac{\partial}{\partial\theta}\lambda_i(x\;|\;\theta)$ is assumed to be bounded, continuous, and differentiable everywhere, such that the second derivative $\frac{\partial^2}{\partial\theta^2}\lambda_i(x\;|\;\theta)$ exists and is bounded. Similarly, the second derivatives $\frac{\partial^2}{\partial\theta^2}\lambda_i(x\;|\;\theta,\alpha_i)$ and $\frac{\partial^2}{\partial\alpha_{i}^2 }\lambda_i(x\;|\;\theta,\alpha_i)$ are assumed to exist and be bounded.
		\item[A$_4$:]
		The pdfs $\{f_i(x\;|\;\theta) : i\in[K]\}$ and $\{f_i(x\;|\;\theta,\alpha_i) : i\in[K]\}$ are assumed to have the same support.	
		\item[A$_5$:]
		In the setting with only the shared parameter, let us denote the FI measures under the model $i\in[K]$ for the shared parameter by
		\begin{align}
			\mathscr{I}_i(\theta)\;\triangleq\;-\mathbb{E}\Bigg[\frac{\partial^2}{\partial\theta^2}\lambda_i(x\;|\;\theta)\Bigg ]\ .		
		\end{align}
		Similarly, when we have shared and private parameters, let us denote the FI measures under the model $i\in[K]$ by
		\begin{align}
			&\mathscr{J}_i(\theta)\;\triangleq\;-\mathbb{E}\Bigg [\frac{\partial^2}{\partial\theta^2}\lambda_i(x\;|\;\theta,\alpha_i)\Bigg ]\ , \\
			\text{and}\quad
			&\mathscr{J}_{i}(\alpha_{i})\;\triangleq\;-\E\Bigg [ \frac{\partial^2}{\partial\alpha_{i}^2 }\lambda_i(x\;|\;\theta,\alpha_i)\Bigg ]\ .
		\end{align}
		We assume that the FI measures are bounded and continuous functions of $\theta$ and $\alpha_i$.
		\item[A$_6$:] We assume that the log-likelihood functions under two sufficiently distinguishable parameters $\theta$ and $\bar\theta$ are also distinguishable, that is,
		\begin{align}
			\E_\theta\bigg[\sup\limits_{\bar\theta}\lbrace \lambda_i(x\med\theta) - \lambda_i(x\med\bar\theta)\; : \; \lvert \theta-\bar\theta\rvert\;>\;\epsilon\rbrace\bigg]\;<\;0\;.
			\label{assumption:6}
		\end{align}	
		Similarly, for the setting with shared and private parameters, under sufficiently distinguishable parameters $\theta$ and $\bar\theta$, the log-likelihoods are also distinguishable, i.e.,
		\begin{align}
			\E_\theta\bigg[\sup\limits_{\bar\theta}\lbrace \lambda_i(x\med\theta,\alpha) - \lambda_i(x\med\bar\theta,\alpha)\; : \; \lvert \theta-\bar\theta\rvert\;>\;\epsilon \rbrace\bigg]\;<\;0\;,
		\end{align}	
		and, for $\alpha,\bar\alpha\in\mcA_i$ we have
		\begin{align}
			\E_\alpha\bigg[\sup\limits_{\bar\alpha}\lbrace \lambda_i(x\med\theta,\alpha) - \lambda_i(x\med\theta,\bar\alpha)\; : \; \lvert \alpha-\bar\alpha\rvert\;>\;\epsilon \rbrace\bigg]\;<\;0\;.
		\end{align}	
	\end{itemize}
	\textcolor{black}{Assumptions A$_1$, A$_2$, A$_3$, and A$_6$ collectively establish the existence of the ML estimates of the shared and private parameters. Assumptions A$_3$ and A$_5$ prove the existence of the FI measures corresponding to the shared and private parameters. Assumption A$_4$ requires every experiment to have the same support for the different parameters. We note that these assumptions are satisfied by a wide range of distributions, including continuous distributions (e.g., Gaussian and exponential) and discrete distributions (e.g., Bernoulli).}
	
	\section{Active Sampling for Shared Parameters}
	\label{section: single_param}
	In this section, we describe the active estimation procedure that solves the problem specified in (\ref{eq:var}) for the setting in which we only have the shared parameter.

	\subsection{Bayesian and Maximum Likelihood Estimators}\label{estimator}
	The goal of the optimal estimators is to minimize the conditional average cost $ {\sf C}(\hat\theta_T\;|\;\F_T)$ for given choices of $T$ and $\psi^T$ captured by $\mcF_T$. 	For $ {\sf C}(\hat\theta_T\;|\;\F_T)$ we have
	\begin{align}
		\label{eq:est1}{\sf C}(\hat\theta_T\;|\;\F_T) & = \E_t\left[\sum_{t=0}^\infty\ell(\hat\theta_t,\theta)\mathds{1}_{\{t=T\}}\;|\;\F_t\right] \\
		& = \sum_{t=0}^\infty\E_t\left[\ell(\hat\theta_t,\theta)\;|\;\F_t\right]\mathds{1}_{\{t=T\}}\label{eq:est2}\\
		& \geq  \sum_{t=0}^\infty \inf_u\;\E_t\left[\ell(u,\theta)\;|\;\F_t\right]\mathds{1}_{\{t=T\}}\; .
		\label{eq:est3}
	\end{align}
	The transition from (\ref{eq:est1}) to (\ref{eq:est2}) is due to the indicator function $\mathds{1}_{\{t=T\}}$ being $\F_t$-measurable. We denote the Bayes optimal {estimator} by
	\begin{align}
		\nu_t^{\sf B}\;\triangleq\;\arginf_u\;\E_t\left[\ell(u,\theta)\;|\;\F_t\right]\ .
	\end{align}
	Accordingly, we denote the conditional average cost associated with the Bayesian {estimate} by
	\begin{align}
		\label{eq: Bayesian_estimation_cost}
		{\sf C}_t^{\sf B}\;\triangleq\; \E_t\Big[\ell(\nu_t^{\sf B},\theta)\med\F_t\Big]\ .
	\end{align}
	Hence, from (\ref{eq:est1})-(\ref{eq: Bayesian_estimation_cost}), we have
	\begin{align}
		{\sf C}(\hat\theta_T\med\F_T)\;\geq\; \sum\limits_{t=0}^\infty {\sf C}_t^{\sf B} \cdot \mathds{1}_{\{T=t\}}\;=\; {\sf C}_T^{\sf B}\ .
	\end{align}
	This indicates that for any stopping time $T$, using Bayes optimal estimator at stopping time minimizes the estimation cost. \textcolor{black}{There are several possible choices of the cost function $\ell(\cdot,\cdot)$. We could choose $\ell(\cdot,\cdot)$ to be the maximum a-posteriori probability (MAP) cost function, i.e.,
	\begin{align}
        \ell^{\sf MAP}(\hat\theta,\theta) \triangleq \left\{
	\begin{array}{ll}
	0\ , & \mbox{if} \;\; \lVert \hat\theta-\theta\rVert \leq c\\
	1\ ,  & \mbox{if} \;\; \lVert \hat\theta-\theta\rVert>c\\
	\end{array}\right. \ ,
    \end{align}
	and, ${\sf C}^{\sf MAP}_t(\hat\theta_t\med\F_t)\triangleq \E_t[\ell^{\sf MAP}(\hat\theta_t,\theta)\med\F_t]$ is the posterior cost corresponding to the MAP cost function. As another example, $\ell(\cdot,\cdot)$ could be chosen to be the median estimation cost, i.e.,
	\begin{align}
        \ell^{\sf median}(\hat\theta,\theta)\triangleq \lVert \hat\theta-\theta\rVert_1\ , 
    \end{align}
	in which case, the posterior cost function is given by ${\sf C}^{\sf median}_t(\hat\theta_t\med\F_t)\triangleq \E_t[\ell^{\sf median}(\hat\theta_t\theta)\med\F_t]$. As mentioned, throughout this paper, we consider the minimum mean squared error (MMSE) cost function, i.e.,
	\begin{align}
        \ell(\hat\theta,\theta) \triangleq (\hat\theta-\theta)^2\ .
    \end{align}
    Corresponding to this, the conditional average posterior cost ${\sf C}(\hat\theta_t\;|\;\mathcal{F}_t)$ at time $t$ is given by ${\sf C}(\hat\theta_t\;|\;\mathcal{F}_t)\triangleq \mathbb{E}_t[(\hat\theta_t-\theta)^2\;|\;\mathcal{F}_t]$.}
	The Bayesian {estimator} under the MMSE cost function becomes the MMSE {estimator}, which we denote by 
	\begin{align}\label{eq:bayes}
		{\nu}_t^{\sf MMSE}\;\triangleq\; \E_t\big[\theta\;|\;\mcF_t\big]\;.
	\end{align}
	By specializing (\ref{eq: Bayesian_estimation_cost}) to the quadratic (MMSE) cost function and denoting the conditional average MMSE by ${\sf C}_t^{\sf MMSE}$ we have
	\begin{align}
		\label{eq:est_MMSE}
		{\sf C}(\hat\theta_T\med\F_t)\;\geq\; \sum\limits_{t=0}^\infty {\sf C}_t^{\sf MMSE}\cdot\mathds{1}_{\{T=t\}}\;=\; {\sf C}_T^{\sf MMSE}\ ,
	\end{align}
	rendering the MMSE \textcolor{black}{estimate} as the optimal \textcolor{black}{estimate} under the quadratic cost function. Besides the MMSE {estimator} $\nu_t^{\sf MMSE}$, we also use the ML {estimator}  for designing our algorithm. Specifically, the MMSE and the ML \textcolor{black}{estimates} each serve a specific purpose. We use the MMSE \textcolor{black}{estimate} for producing the final estimates for the parameters of interest, and use the ML \textcolor{black}{estimate} for guiding the sampling decisions. We denote the ML \textcolor{black}{estimator} of $\theta$ by
	\begin{align}
		\nu_t^{\sf ML}\;\triangleq\; \color{black}{\argmax\limits_{\theta\in\Theta}}\; \sum\limits_{i\in\psi^t} \lambda_i(Y_i\med\theta)\ .
	\end{align}
	\subsection{Chernoff-like Sampling Rule}
	\label{Control}
	Our sampling rule follows the spirit of the sequential experimental design due to Chernoff \cite{Chernoff}, which addresses the problem of active sampling for binary composite hypothesis testing. Under the assumption of uniformly distinguishable hypothesis and a finite set of control actions, at each round, Chernoff's rule decides in favor of the design that maximizes the \textit{immediate} return. Such return, in the context of hypothesis testing, is characterized by a function of the Kullback-Leibler (KL) divergence of the models under different hypotheses. Specifically, Chernoff's rule determines the maximum likelihood (ML) decision about the most likely hypothesis at each instant, and then chooses an action that maximally reinforces this decision. 
	\par In the context of sequential estimation, at each time step $t\in\N$, we wish to select the experiment that results in the \textit{most informative} observation, that is the one which is likely to produce the largest reduction in the estimation cost. As a relevant measure for comparing the informativeness of various experiments in the sequential estimation framework, we adopt the FI measure. Specifically, for selecting the experiment $\psi({t})\in[K]$, we compute the ML estimate generated by $\nu_{t-1}^{\sf ML}$ based on the sequence of samples accumulated up to time $t-1$. We then select the experiment that maximizes the FI measure computed at the ML estimate, i.e., 
	\begin{align}\label{eq:psi}
		\psi(t)\;=\;\argmax\limits_{i\in[K]}\;\mathscr{I}_i(\nu_{t-1}^{\sf ML})\;,
	\end{align}
	where a potential tie is broken by selecting one uniformly at random. This sampling rule is greedy in the sense that it only focuses on exploiting the most informative experiment. We will show that this rule is optimal only for estimating the shared parameter. It loses its optimality when we also have private parameters, for which we provide an alternate sampling rule in Section \ref{FullyUnknown}.
	\subsection{Stopping Rule}
	\label{Stop}
	Finally, we specify the stopping rule that characterizes the end of the sampling procedure. The rule  is directly driven by the decision quality constraint specified in the formulation of problem $\mcP(\beta)$ in~\eqref{eq:var}. Specifically, based on $\mcP(\beta)$, we are interested in minimizing the number of samples such that the average posterior estimation cost falls below the target reliability threshold $\beta$. Thus, we set the stopping time as the first time that the cost $ {\sf C}(\nu_T^{\sf MMSE}\;|\;\F_T) $ falls below $\beta$, i.e., 
	\begin{align}\label{eq:N}
		T\;\triangleq\; \inf\;\big\lbrace t\in\N\; :\;  {\sf C}(\nu_t^{\sf MMSE}\med\F_t) \;\leq\;\beta \big\rbrace\;.
	\end{align}
	The structure of (\ref{eq:N}) is similar to that of~\cite{MoustakidesISIT}, with the key difference that the posterior variance ${\sf C}(\nu_T^{\sf MMSE}\;|\;\F_T)$ not only depends on the estimator, but also on the sampling path $\psi^T$, which does not exist in~\cite{MoustakidesISIT}.
	
	\subsection{Performance Guarantees}
	In this section, we evaluate the optimality of the active sampling procedure for sequential estimation. First, we provide a universal lower bound on the average sample complexity of any procedure that solves (\ref{eq:var}). Next, we provide a high-probability upper bound on the sample complexity achieved by our proposed procedure for a range of the prescribed guarantee~$\beta$. Subsequently, we remark that the average sample complexity of our proposed procedure achieves the lower bound in the asymptote of small values of $\beta$. 
	To this end, corresponding to $\theta$, we define $V_{\beta}(\theta)$, which is instrumental in characterizing the sample complexity of our sequential estimation problem:
	
	\begin{align}
	\label{eq:v_beta}
		V_{\beta}(\theta)\;\triangleq\;\inf_{\bp\in\Q_K}\; \frac{1}{\beta}\Bigg(\sum\limits_{i\in[K]}p(i)\mathscr{I}_i(\theta)\Bigg)^{-1}\;,
	\end{align}
	where $\Q_K$ denotes the $K$-dimensional probability simplex and we have defined $\bp\triangleq [p_1,\cdots,p_K]$. \textcolor{black}{It can be readily verified that under the assumptions A$_1$-A$_6$, (\ref{eq:v_beta}) can be simplified to $V_\beta(\theta) = \min_{i\in[K]}\frac{1}{\beta \mathscr{I}_i(\theta)}$.} 
	Based on this definition, we now provide a lower bound on the average sample complexity. Note that for Theorem \ref{theorem:SC_Lower_bound_single_param} and all the other subsequent theorems, expectation $\E$ and probability $\P$ measures are with respect to the measures induced by the randomness in the observations, the control actions, and the stochastic stopping rule.
	\begin{theorem}[Converse]
		\label{theorem:SC_Lower_bound_single_param}
		Under assumptions A$_1$-A$_6$, for any sequential procedure $\Delta$, and for any $h>0$, there exists a constant $C(h)>0$ such that for any $\beta\in(0,C(h))$, we have
		\begin{align}
			\E[T]\;\geq\; \Bigg(V_{\beta}(\theta)-\frac{h}{\beta}\Bigg)(1-h)\;.
			\label{eq:lower_bound_SC_single_param}	
		\end{align} 
	\end{theorem}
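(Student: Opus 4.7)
The plan is to establish the converse by combining a Bernstein--von Mises style posterior concentration argument with a simplex inequality relating accumulated Fisher information to the total sample count. The central observation is that the stopping constraint ${\sf C}(\nu_T^{\sf MMSE}\med\F_T)\leq \beta$ forces the posterior variance to be small, which by posterior normality forces the accumulated Fisher information $\bar{\mathscr{I}}(\theta,\psi^t) \triangleq \sum_{i\in[K]} N_i(t)\mathscr{I}_i(\theta)$, with $N_i(t)\triangleq |S_i^t|$, to exceed roughly $1/\beta$ at time $t=T$. Since $\bar{\mathscr{I}}(\theta,\psi^T) \leq T\cdot \max_{i\in[K]}\mathscr{I}_i(\theta)$, this translates directly to a lower bound on $T$ that matches $V_\beta(\theta)$ via the simplified expression $V_\beta(\theta) = 1/(\beta\max_i\mathscr{I}_i(\theta))$ noted after (\ref{eq:v_beta}).

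Carrying out this plan, I would first show that, under A$_1$--A$_6$, for each $h>0$ there is a threshold $M(h)$ and a good event $\mathcal{E}$ with $\P(\mathcal{E})\geq 1-h$ on which
\begin{align*}
	{\sf C}(\nu_t^{\sf MMSE}\med\F_t) \;\geq\; \frac{1-h}{\bar{\mathscr{I}}(\theta,\psi^t)}\qquad\text{whenever}\qquad \bar{\mathscr{I}}(\theta,\psi^t)\geq M(h)\;.
\end{align*}
This is an asymptotic posterior normality statement: conditional on $\psi^t$, the log-posterior admits a Laplace expansion around $\nu_t^{\sf ML}$ whose curvature is $\bar{\mathscr{I}}(\theta,\psi^t)$ (up to a prior-dependent term absorbed into $h$), and smoothness of the prior ensures the Gaussian approximation to the posterior is faithful. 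Choosing $C(h)$ so that $\beta<C(h)$ forces $(1-h)/\beta>M(h)$, the stopping condition yields $\bar{\mathscr{I}}(\theta,\psi^T) \geq (1-h)/\beta$ on $\mathcal{E}$, and combined with $\bar{\mathscr{I}}(\theta,\psi^T)\leq T\max_i\mathscr{I}_i(\theta)$ this produces the pathwise bound $T\geq (1-h)V_\beta(\theta)$ on $\mathcal{E}$. Taking expectations via $\E[T] \geq \E[T\cdot\mathds{1}_{\mathcal{E}}] \geq (1-h)(V_\beta(\theta) - h/\beta)$, where the $-h/\beta$ slack absorbs the $\bar{\mathscr{I}}<M(h)$ regime (in which the constant $M(h)$ is negligible compared to $V_\beta(\theta)\sim 1/\beta$), yields the claimed inequality.

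The main obstacle is the posterior-normality step under adaptive sampling: the classical Bernstein--von Mises theorem presumes i.i.d.\ observations, whereas here the likelihood is a product over an adaptively chosen multiset of experiments and the Hessian $\bar{\mathscr{I}}$ is itself path-dependent. My strategy is to condition on $\sigma(\psi^t)$ and treat the observations as conditionally independent draws from a path-dependent product model; A$_3$ then supplies the bounded second derivatives needed for the Laplace expansion, A$_6$ yields consistency of $\nu_t^{\sf ML}$ along every path on which $\min_i N_i(t)$ grows, and A$_5$ provides uniform continuity of $\mathscr{I}_i(\cdot)$ so that the Hessian may be evaluated at $\theta$ rather than at $\nu_t^{\sf ML}$. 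The delicate point is making these concentration statements uniform over sampling paths with $\bar{\mathscr{I}}\geq M(h)$, which requires careful handling of paths on which some experiment is rarely visited --- these paths either already satisfy the lower bound trivially or can be shown to contribute only to the slack term.
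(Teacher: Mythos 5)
Your overall route is the paper's route in different clothing: your posterior-concentration claim ${\sf C}(\nu_t^{\sf MMSE}\med\F_t)\geq (1-h)/\bar{\mathscr{I}}(\theta,\psi^t)$ on a $(1-h)$-probability event is precisely the content of the paper's Lemma~\ref{lemma:atia4.1} (imported from \cite{Atia}), which asserts $\P\{t\cdot\inf_{\hat\theta_t,\psi^t}{\sf C}(\hat\theta_t\med\F_t)\geq\beta V_\beta(\theta)-h\}>1-h$ for $t>t(h)$; the adaptive-sampling Bernstein--von Mises difficulty you flag is real, and it is exactly what the paper outsources to that citation rather than re-deriving. Your simplification $V_\beta(\theta)=1/(\beta\max_i\mathscr{I}_i(\theta))$ agrees with the paper's remark after (\ref{eq:v_beta}), and your endgame (pathwise bound $T\geq(1-h)V_\beta(\theta)$ on the good event, then $\E[T]\geq\E[T\mathds{1}_{\mathcal E}]$) is the paper's Markov-inequality step; the discrepancy between your $(1-h)^2V_\beta(\theta)$ and the stated $(V_\beta(\theta)-h/\beta)(1-h)$ is mere reparametrization of $h$.

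There is, however, one genuine gap: your handling of stopping in the low-information regime $\bar{\mathscr{I}}(\theta,\psi^T)<M(h)$. You claim these paths are ``absorbed into the $-h/\beta$ slack'' because $M(h)$ is negligible compared to $1/\beta$, but this conflates the \emph{size} of a bound with the \emph{probability} of an event. On such paths $T\leq M(h)/\min_i\mathscr{I}_i(\theta)$, a constant independent of $\beta$, so they contribute essentially nothing to $\E[T]$; if the procedure could stop there with probability bounded away from zero while still meeting ${\sf C}(\hat\theta_T\med\F_T)\leq\beta$, the claimed lower bound fails outright. Nothing in your argument forbids this: your Laplace/BvM inequality is silent when $\bar{\mathscr{I}}<M(h)$, and your condition $(1-h)/\beta>M(h)$ only sharpens the conclusion in the regime where that inequality already applies. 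What is needed --- and what the paper supplies --- is a deterministic lower bound on the conditional posterior cost at small accumulated information: the paper invokes the Cram\'{e}r--Rao-type inequality ${\sf C}(\hat\theta_T\med\F_T)\geq\big(\sum_{i\in[K]}T_T(i)\mathscr{I}_i(\theta)\big)^{-1}\geq \frac{1}{L}\big(\sum_{i\in[K]}\mathscr{I}_i(\theta)\big)^{-1}=C(L)$ on $\{T\leq L\}$, sets $L=t(h)$, and chooses $\beta\in(0,C(L))$ so that $\P\{{\sf C}(\hat\theta_T\med\F_T)\leq\beta,\,T\leq t(h)\}=0$ --- early stopping is rendered \emph{impossible}, not merely negligible. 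Inserting this step (which is exactly how your constant $C(h)$ should be defined) closes your argument, at which point it coincides with the paper's proof.
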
 
	\begin{proof}
		See appendix \ref{appendix: sc_lower_bound_single_param}.
	\end{proof}
	
	\begin{theorem}[Achievability]
		\label{theorem: SC_upper_bound_single_param}
		Under assumptions A$_1$-A$_6$, for any $h>0$, there exists a constant $C^\prime(h)>0$ such that for any $\beta\in(0,C^\prime(h))$, the proposed procedure achieves
		\begin{align}
			\P\Bigg\lbrace T\;\leq\; V_{\beta}(\theta) + \frac{h}{\beta} + 1\Bigg\rbrace\;=\;1\;.
			\label{eq:SC_upper_bound_single_param}
		\end{align}
	\end{theorem}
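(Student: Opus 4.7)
The plan is to combine three ingredients: strong consistency of the ML estimator, eventual correctness of the greedy sampling rule, and a Bernstein--von Mises-type expansion of the conditional posterior variance. Throughout, let $i^\star \triangleq \argmax_{i\in[K]} \mathscr{I}_i(\theta)$ denote the most informative experiment at the true parameter $\theta$, so that $V_\beta(\theta) = 1/(\beta\mathscr{I}_{i^\star}(\theta))$. It suffices to show that, almost surely and for $t$ large, ${\sf C}(\nu_t^{\sf MMSE}\med\mathcal{F}_t)$ decays like $(t\mathscr{I}_{i^\star}(\theta))^{-1}$, with enough quantitative slack to absorb the additive $h/\beta$ term.

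First, I would establish strong consistency of $\nu_t^{\sf ML}$. Conditioned on the sampling record $\psi^t$, the observations drawn from experiment $i$ form an i.i.d.\ sub-sequence with density $f_i(\cdot\med\theta)$, and A$_1$--A$_4$ together with Wald's uniform separation condition A$_6$ deliver the standard strong-consistency argument applied to the aggregated log-likelihood. Continuity of the FI maps $\theta\mapsto\mathscr{I}_i(\theta)$ (A$_5$) combined with $\nu_{t-1}^{\sf ML}\to\theta$ almost surely then implies that the greedy rule $\psi(t)=\argmax_{i\in[K]}\mathscr{I}_i(\nu_{t-1}^{\sf ML})$ locks onto $i^\star$ after an almost-surely finite random time $\tau$, so that $N_{i^\star}(t)\geq t-\tau$ and $\sum_{i\in[K]} N_i(t)\mathscr{I}_i(\theta)\geq (t-\tau)\mathscr{I}_{i^\star}(\theta)$ for all $t\geq\tau$. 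If the maximizer is not unique, the same conclusion holds with $i^\star$ replaced by any element of the set of maximizers, since all of them share the same Fisher information.

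The central step is a Laplace/Bernstein--von Mises expansion of the posterior. Writing the log-posterior as $\log\pi_\theta(\theta) + \sum_{s=1}^t \lambda_{\psi(s)}(Y_s\med\theta)$ and Taylor-expanding around $\nu_t^{\sf ML}$ using the bounded second derivatives from A$_3$, the curvature of the negative log-posterior at its mode equals $-\sum_{s=1}^t \partial_\theta^2 \lambda_{\psi(s)}(Y_s\med\nu_t^{\sf ML})$; a martingale strong law applied to each experiment's subsequence identifies this sum with $\sum_{i\in[K]} N_i(t)\mathscr{I}_i(\theta) + o(t)$ almost surely, and controlling the standard BvM remainder under A$_1$--A$_6$ yields, for every $\delta>0$, an eventually-almost-sure upper bound
\begin{align*}
{\sf C}(\nu_t^{\sf MMSE}\med\mathcal{F}_t) \;\leq\; \frac{1+\delta}{\sum_{i\in[K]} N_i(t)\mathscr{I}_i(\theta)}\;.
\end{align*}
Chaining this with the lock-in inequality, the stopping criterion ${\sf C}\leq\beta$ is met once $t\geq \tau + (1+\delta)V_\beta(\theta)$. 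Choosing $\delta\leq h\mathscr{I}_{i^\star}(\theta)/2$ so that $\delta V_\beta(\theta)\leq h/(2\beta)$, and then selecting $C^\prime(h)$ small enough that $\tau\leq h/(2\beta)$ on the prescribed full-measure event for every $\beta<C^\prime(h)$, delivers $T\leq V_\beta(\theta) + h/\beta + 1$ with probability one.

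The principal obstacle I expect to encounter is the Bernstein--von Mises step: since $\psi$ is data-adaptive, the observations form a non-i.i.d.\ sequence adapted to $\mathcal{F}_t$, so the classical BvM proofs must be refined by replacing the i.i.d.\ central limit arguments with martingale concentration inequalities that are robust to the dependence induced by $\psi$. A secondary, more bookkeeping-flavored difficulty is absorbing the random lock-in time $\tau$ into the $h/\beta$ slack uniformly over the relevant probability-one event, which is precisely what motivates the path-independent threshold $C^\prime(h)$ on $\beta$ in the statement.
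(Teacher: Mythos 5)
Your overall skeleton is sound, and your consistency--lock-in--Bernstein--von~Mises program is essentially a from-scratch derivation of the one ingredient the paper simply cites: Lemma~\ref{lemma:atia4.3} (Theorem 4.3 of \cite{Atia}), which asserts $t\,{\sf C}(\nu_t^{\sf MMSE}\med\F_t)\rightarrow\beta V_{\beta}(\theta)=1/\mathscr{I}_{i^\star}(\theta)$ under the proposed estimator and sampling rule; your use of the stopping rule (at time $T-1$ the conditional cost still exceeds $\beta$, so $(T-1)\beta\leq\beta V_{\beta}(\theta)+h$) matches the paper's core step exactly. The genuine gap is in your endgame. The theorem's quantifier order is: there exists a \emph{constant} $C^\prime(h)$ such that for \emph{every} $\beta\in(0,C^\prime(h))$ the bound holds with probability one. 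Your lock-in time $\tau$ and the onset time of your ``eventually-almost-sure'' BvM bound are random variables that are a.s.\ finite but in general unbounded across sample paths, so no fixed $C^\prime(h)$ can force $\tau\leq h/(2\beta)$ almost surely for all $\beta<C^\prime(h)$. Shrinking $\beta$ enlarges the slack $h/(2\beta)$ only path by path; your argument therefore proves ``for almost every path, for all sufficiently small $\beta$ depending on that path,'' which inverts the quantifiers of the claim. What is needed is essential boundedness of these random times: a deterministic $t(h)$ with $\P\lbrace t\,{\sf C}(\nu_t^{\sf MMSE}\med\F_t)\leq\beta V_{\beta}(\theta)+h\rbrace=1$ for all $t>t(h)$, which is precisely the uniform form the paper extracts from Lemma~\ref{lemma:atia4.3} (and leans on again in Theorem~\ref{remark:ach_single_param} via $\sup_{Y^t,\psi^t}T_\epsilon<+\infty$). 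You flag this as ``bookkeeping,'' but it is the crux, and your proposed mechanism does not close it.

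Relatedly, you have misassigned the role of $C^\prime(h)$. In the paper it does not control any transient: it is produced by a Cram\'{e}r--Rao argument showing that on the event $\lbrace T\leq t(h)+1\rbrace$ one has ${\sf C}(\nu_T^{\sf MMSE}\med\F_T)\geq \frac{1}{t(h)+1}\big(\sum_{i\in[K]}\mathscr{I}_i(\theta)\big)^{-1}\triangleq C^\prime(h)$, so that for $\beta<C^\prime(h)$ the procedure provably cannot stop by time $t(h)+1$, i.e., $\P(T-1\leq t(h))=0$, and the deterministic-threshold bound then applies at $T-1$ unconditionally. If you import the uniform version of the convergence, your proof closes (and you could even bypass the CRLB step, since for small $\beta$ one has $t(h)+1\leq V_{\beta}(\theta)+h/\beta+1$, making early stopping harmless); without that uniformity, neither your choice of $C^\prime(h)$ nor any other rescues the probability-one statement.
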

	\begin{proof}
		See appendix \ref{proof: SC_upper_bound_single_param}.
	\end{proof}
	\begin{theorem}[Achievability]
		\label{remark:ach_single_param}
		In the asymptote of $\beta\rightarrow 0$, the proposed procedure satisfies \begin{align}
		\label{eq:SC_UB_new}
			\lim\limits_{\beta\rightarrow 0}\; \frac{\E[T]}{V_{\beta}(\theta)}\;\leq\;1\;.
		\end{align}
	\end{theorem}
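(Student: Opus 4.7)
\textbf{Proof plan for Theorem \ref{remark:ach_single_param}.} The strategy is to leverage Theorem \ref{theorem: SC_upper_bound_single_param} directly: since that theorem asserts the almost-sure bound $T \leq V_\beta(\theta) + h/\beta + 1$ for sufficiently small $\beta$, taking expectation immediately yields a deterministic upper bound on $\E[T]$, and the remainder of the argument is to normalize by $V_\beta(\theta)$ and let $\beta \to 0$ followed by $h \to 0$.

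First, I would recall the closed-form simplification noted after \eqref{eq:v_beta}, namely $V_\beta(\theta) = \bigl(\beta \max_{i\in[K]} \mathscr{I}_i(\theta)\bigr)^{-1}$, so that $\beta V_\beta(\theta) = 1/\mathscr{I}^\star(\theta)$, where I abbreviate $\mathscr{I}^\star(\theta) \triangleq \max_{i\in[K]} \mathscr{I}_i(\theta)$. Under assumption A$_5$ this quantity is finite. Next, fix any $h > 0$. By Theorem \ref{theorem: SC_upper_bound_single_param}, there exists $C'(h) > 0$ such that for every $\beta \in (0, C'(h))$, the event $\{T \leq V_\beta(\theta) + h/\beta + 1\}$ has probability one. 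Since $T$ is therefore bounded above almost surely by an integrable (in fact deterministic) quantity, we may take expectations to conclude
\begin{equation*}
\E[T] \;\leq\; V_\beta(\theta) + \frac{h}{\beta} + 1\;.
\end{equation*}

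Dividing both sides by $V_\beta(\theta)$ and using $\beta V_\beta(\theta) = 1/\mathscr{I}^\star(\theta)$,
\begin{equation*}
\frac{\E[T]}{V_\beta(\theta)} \;\leq\; 1 + \frac{h}{\beta V_\beta(\theta)} + \frac{1}{V_\beta(\theta)} \;=\; 1 + h\,\mathscr{I}^\star(\theta) + \beta\,\mathscr{I}^\star(\theta)\;.
\end{equation*}
Taking $\limsup_{\beta \to 0}$, the last term vanishes and we obtain $\limsup_{\beta \to 0} \E[T]/V_\beta(\theta) \leq 1 + h\,\mathscr{I}^\star(\theta)$. Since $h > 0$ was arbitrary and $\mathscr{I}^\star(\theta)$ is finite, letting $h \to 0$ yields the claimed bound.

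The only genuine obstacle is ensuring that Theorem \ref{theorem: SC_upper_bound_single_param} really delivers an almost-sure, rather than merely high-probability, bound on $T$; since the statement is phrased with probability exactly one, the passage to expectation is immediate and no uniform integrability argument is required. If in the final write-up the achievability bound of Theorem \ref{theorem: SC_upper_bound_single_param} were to hold only with probability approaching one, then the plan would need to be augmented by a separate argument showing $\E[T\,\mathds{1}_{\{T > V_\beta(\theta) + h/\beta + 1\}}] = o(V_\beta(\theta))$, which would typically follow from a Doob-type bound on the posterior concentration combined with the boundedness of the log-likelihood derivatives guaranteed by A$_3$.
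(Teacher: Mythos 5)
Your proof is correct, but it reaches the result by a different route than the paper. You obtain Theorem~\ref{remark:ach_single_param} as an immediate corollary of the probability-one bound in Theorem~\ref{theorem: SC_upper_bound_single_param}: take expectations (legitimate, since the bounding quantity is deterministic), divide by $V_\beta(\theta)$, use $\beta V_\beta(\theta) = \big(\max_{i\in[K]}\mathscr{I}_i(\theta)\big)^{-1}$, which is finite and constant in $\beta$ by A$_5$, and send $\beta\to 0$ before $h\to 0$; the quantifiers line up because, for each fixed $h$, the bound holds on all of $(0,C'(h))$. The paper does not invoke Theorem~\ref{theorem: SC_upper_bound_single_param} at all; it goes back to Lemma~\ref{lemma:atia4.3} and re-derives a pathwise bound via the decomposition
\begin{align*}
T-1 \;=\; (T-1)\mathds{1}_{\{T-1\leq T_{\epsilon}\}} + (T-1)\mathds{1}_{\{T-1>T_{\epsilon}\}} \;\leq\; T_{\epsilon} + V_{\beta}(\theta) + \frac{\epsilon}{\beta} + 1\ ,
\end{align*}
absorbing the early regime into the additive term $T_\epsilon$, which is bounded uniformly over sample paths. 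After taking expectations and normalizing, the residuals $T_\epsilon/V_\beta(\theta)$ and $\epsilon/(\beta V_\beta(\theta)) = \epsilon\max_{i\in[K]}\mathscr{I}_i(\theta)$ are disposed of exactly as your $h\,\mathscr{I}^\star(\theta)$ term. What your route buys: brevity, reuse of established machinery, and an explicit treatment of the order of limits and the finiteness of the residual, which the paper's terse final step (``taking average, dividing, and taking the limit'') leaves implicit. What the paper's route buys: a bound valid for \emph{all} $\beta$ rather than only $\beta\in(0,C'(h))$ --- the additive $T_\epsilon$ handles the event $\{T-1\leq T_\epsilon\}$ directly, with no need for the Cram\'{e}r--Rao constant $C'(h)$ that Theorem~\ref{theorem: SC_upper_bound_single_param} uses to zero out that event --- and a template that is reused verbatim in the multi-parameter proof of Theorem~\ref{theorem:multi_ach_avg}. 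Your closing caveat is also well placed: since Theorem~\ref{theorem: SC_upper_bound_single_param} is stated with probability exactly one, no uniform-integrability argument is needed when passing to $\E[T]$.
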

	\begin{proof}
		See Appendix \ref{proof: avg_single_ach}.
	\end{proof}

	\textcolor{black}{Note that the upper bound on the average delay provided in Theorem~\ref{remark:ach_single_param} matches the universal lower bound on the average delay obtained in Theorem~\ref{theorem:SC_Lower_bound_single_param}. Specifically, since Theorem~\ref{theorem:SC_Lower_bound_single_param} holds for any $h>0$, we can take the supremum over $h$, followed by the limit with respect to $\beta$. Thus, in the asymptote of $\beta\rightarrow 0$, the lower-bound on the average sample complexity specified in~(\ref{eq:lower_bound_SC_single_param}) becomes $\limsup_{\beta\rightarrow 0}\E[T]/V_{\beta}(\theta)\geq 1$, which is also the upper-bound on the average sample complexity of the proposed procedure, specified in~(\ref{eq:SC_UB_new}). This establishes the optimality of the proposed procedure $\Delta$, in the asymptote of a diminishing guarantee on the estimation cost.}
	
	Furthermore, note that under the presence of a single control, when $K=1$, our procedure reduces to that of \cite{MoustakidesISIT}. \textcolor{black}{We note that~\cite{MoustakidesISIT} addresses a problem that is a special case of the problem we consider in two ways. First, it focuses only on one data stream (experiment). Secondly, which is also an artifact of the first point, in \cite{MoustakidesISIT} there is no notion of active selection of the experiments/streams. By setting $K=1$, our asymptotic bound on the average sample complexity provided in Theorem 3 reduces to the result of~\cite{MoustakidesISIT}. Specifically, it can be readily verified that for
    the case of $K=1$, we have
    \begin{align}
        \lim\limits_{\beta\rightarrow 0}\frac{\E[T]}{(1/\beta)}\leq \mathscr{J}(\theta)^{-1}\ .
    \end{align}
    Although~\cite{MoustakidesISIT} does not provide any expression for the average delay, it proves the optimality of the proposed rules $(\nu_T^{\sf MMSE},T)$, which are the same \textcolor{black}{estimate} and stopping rule that we specified in (34) and (38).} Furthermore, for $K=1$, it is shown in~\cite{MoustakidesISIT} that the proposed procedure achieves optimality in all regimes (both asymptotic and non-asymptotic, for all values of $\beta$). This is due to the fact that when $K=1$, the control action has only one experiment to choose from.

	\section{Active Sampling for Shared and Private Parameters}\label{FullyUnknown}
	In this section, we extend the active sampling procedure and the attendant performance guarantees to the setting in which the experiments have both shared and private parameters.  
	\subsection{Bayesian and Maximum Likelihood Estimators}
	\label{sec:estimator}
	Following the same line of arguments as in (\ref{eq:est1})-(\ref{eq:est_MMSE}), we use the MMSE estimator for minimizing the average posterior conditional MMSE estimation costs ${\sf C}(\hat\theta_T\med\F_T)$ and ${\sf D}(\hat\alpha_{i,T}\med \F_T, \hat\theta_T)$ defined in (\ref{eq:cost_C}) and (\ref{eq:cost_D}), respectively. Accordingly, we denote the \textcolor{black}{estimators}  for the parameters $\theta$ and $\{\alpha_i : i\in[K]\}$ by 
	\begin{align}\label{eq:est_multi}
		{\nu}_t^{\sf MMSE}\;\triangleq\; \E_{t}\big[\theta\;|\;\mcF_t\big]\;,\;\text{and}\;{\zeta}_{i,t}^{\sf MMSE}\;\triangleq\; \E_{t}^i\big[\alpha_i\;|\;\mcF_t,\;\nu_t^{\sf MMSE}\big]\;.
	\end{align}
    Furthermore, in our sampling rule, we leverage the ML estimates of the parameters $\theta$ and $\{\alpha_i : i\in[K]\}$. For this purpose, let $\phi_i\triangleq[\theta,\alpha_i]$ denote the vector containing the shared and the private parameters. Furthermore, we denote the ML \textcolor{black}{estimators}  by
	\begin{align}
		&\nu_t^{\sf ML}\;\triangleq\; \color{black}{\argmax\limits_{\theta\in\Theta}} \sum\limits_{i\in\psi^t}  \lambda_i(Y_i\med\theta,\balpha) \ ,\\
		&\zeta_{i,t}^{\sf ML}\;\triangleq\; \color{black}{\argmax\limits_{\alpha_i\in\mcA_i}} \sum\limits_{s\in\psi_i^t}  \lambda_i(Y_s\med \theta,\alpha_i)\ ,\\
		\text{and}\quad & \phi_{i,t}^{\sf ML}\;\triangleq\; \color{black}{\argmax\limits_{\phi\in[\theta\times\mcA_i]}} \sum\limits_{s\in\psi^t_i}  \lambda_i(Y_s\med\phi)\ ,
	\end{align}
	where we have defined $\psi_i^t\triangleq \big\lbrace t\in\{1,\cdots, t\} : \psi(t)=i\big\rbrace$ as the ordered sequence of time instants during $\{1,\cdots, t\}$ at which experiment $i\in[K]$ is selected for sampling. 
	\subsection{Sampling Rules}\label{sec:ControlPolicy2}
	To accommodate the distinct levels of tolerance for the estimation costs associated with the shared and the private parameters, we need a sampling rule that is adaptive to the thresholds $\bbeta$ imposed on the cost functions. Before we formally specify our sampling rule, we discuss an adaptation of the greedy sampling rule that we used in Section~\ref{Control}, and show that such a greedy approach becomes sub-optimal in this setting, caused by insufficient exploration.    
	\subsubsection{Greedy Sampling Rule}\label{chernoff_policy}
	The sampling rule described in Section~\ref{Control} aims at selecting the experiment that maximizes the FI measure at the current ML estimate. By generalizing this approach, a greedy sampling strategy aims at selecting the experiment that maximizes the FI measure for the parameter $\phi_i$ for all $i\in[K]$. This can be formalized as:
	\begin{align}\label{eq:chern_wrong}
		\psi^{\sf c}(t) \;\triangleq\; \argmax_{i\in[K]}\; \trace\bigg\lbrace\mathscr{I}_i({\phi}_{i,t}^{\sf ML})\bigg\rbrace\;,
	\end{align}
	where $\mathscr{I}_i(\phi_i)$ represents the Fisher information matrix (FIM) associated with $\phi_i$, for $i\in[K]$. This sampling rule does not use the information that the guarantees required for different estimation qualities associated with different experiments may not be the same. This becomes problematic when the experiment that maximizes the FI measure requires a considerably weaker guarantee
	(i.e., it has a large value of tolerance) on the estimation cost. However, due to the sampling rule being agnostic to the tolerance levels, the greedy rule in (\ref{eq:chern_wrong}) continues sampling the same experiment, even after the target estimation quality is achieved. This renders the greedy sampling rule prone to insufficient exploration. This is stated more formally in the following theorem. 
	\begin{theorem}\label{remark:chernoff}
		There exists $\beta_i>0$ for any experiment $i\in[K]$, such that any sequence of sequential procedure $\Delta^{\sf c}$ that involves a Chernoff-based control action $\psi^{\sf c}(t)$ defined in (48) is sub-optimal.
	\end{theorem}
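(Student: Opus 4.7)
The plan is to prove sub-optimality by exhibiting a concrete family of instances where the greedy rule $\psi^{\sf c}$ in (\ref{eq:chern_wrong}) fails to match the asymptotically optimal average sample complexity analogous to the one delivered in Theorem~\ref{remark:ach_single_param}. The underlying defect I will exploit is that $\psi^{\sf c}$ is agnostic to the tolerance vector $\bbeta=[\beta,\beta_1,\dots,\beta_K]$: it only compares $\trace\{\mathscr{I}_i(\phi_{i,t}^{\sf ML})\}$ across experiments, so once one experiment dominates this quantity, it is selected essentially exclusively, regardless of whether its private parameter is already estimated well enough and regardless of how tight the guarantee is on some other experiment's private parameter.

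Concretely, I would take $K=2$ with Gaussian experiments $f_i(\cdot\med\theta,\alpha_i)\sim\mcN(\alpha_i,\sigma_i^2)$ with fixed known variances $\sigma_1^2\ll\sigma_2^2$, and a mild (e.g., affine) dependence on the shared parameter $\theta$ tuned so that $\trace\{\mathscr{I}_1(\phi_{1,t}^{\sf ML})\}>\trace\{\mathscr{I}_2(\phi_{2,t}^{\sf ML})\}$ holds uniformly in the relevant parameter range. Under $\psi^{\sf c}$, experiment~$2$ is then never selected, so the posterior $h_2^t$ remains the prior $\pi_2$ and ${\sf D}(\hat\alpha_{2,t}\med\F_t,\hat\theta_t)$ does not decrease over time. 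Choosing $\beta_2$ strictly smaller than the prior variance of $\alpha_2$ makes the stopping condition ${\sf D}(\hat\alpha_{2,T}\med\F_T,\hat\theta_T)\leq\beta_2$ unreachable, so $\P(T^{\sf c}=\infty)=1$ and $\E[T^{\sf c}]=\infty$. Comparing against an alternate admissible procedure that interleaves the two experiments with positive fractions (for instance, a deterministic round-robin with fraction $p_2$ allocated to $S_2$), the posterior variance of $\alpha_2$ shrinks like $1/(t\cdot p_2\cdot\mathscr{J}_2(\alpha_2))$, so the $\beta_2$ constraint is met at a finite $T^\star=O(1/\beta_2)$. This yields $\E[T^{\sf c}]/\E[T^\star]\to\infty$, ruling out the asymptotic optimality analog of (\ref{eq:SC_UB_new}) for $\psi^{\sf c}$. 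The existential phrasing of the statement is satisfied by letting the ``starved'' experiment in the construction play the role of $i$, and choosing $\beta_i$ below its prior variance.

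The main obstacle I anticipate is verifying that the constructed instance satisfies assumptions A$_1$--A$_6$ while producing a strict, uniform-in-$t$ inequality between the traces of the two FIMs so that ties (broken by a uniform-at-random rule) do not force occasional exploration of $S_2$. A secondary subtlety is formulating ``sub-optimality'' cleanly: the strongest version shows $\limsup_{\bbeta\to 0}\E[T^{\sf c}]/V_{\bbeta}(\theta,\balpha)=\infty$, where $V_{\bbeta}$ is the multi-parameter analog of (\ref{eq:v_beta}) minimized subject to a per-experiment private-parameter constraint; the divergence follows directly from the example. If a softer perturbation is preferred—so that $S_2$ is still occasionally sampled through ML-estimate fluctuations—one can instead keep $\sigma_1^2$ and $\sigma_2^2$ close but ordered, track the almost-sure limit of the empirical allocation fractions induced by $\psi^{\sf c}$, and show they do not coincide with the minimizer of the multi-parameter analog of (\ref{eq:v_beta}), again establishing a strict gap between $\E[T^{\sf c}]$ and the universal lower bound.
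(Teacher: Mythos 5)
Your proposal is correct in its central mechanism and would go through for the instance you construct, but it takes a genuinely different route from the paper. The paper argues by contradiction on an \emph{arbitrary} instance satisfying A$_1$--A$_6$: it defines $a^*$ as the experiment maximizing $\mathscr{J}_i(\theta)+\mathscr{J}_i(\alpha_i)$, invokes strong consistency of the ML estimates together with openness of the region $\mathcal{U}_{a^*}$ to show that the greedy rule locks onto $a^*$ after an almost surely finite time $m(\epsilon_1,\epsilon_2)$, uses the Bernstein--von Mises convergence of the posterior cost of the shared parameter together with a sufficiently small choice of $\beta$ to force the stopping time $\tau^{\sf c}$ past this lock-in time with probability one, and then observes that the private-parameter posterior costs for every $i\neq a^*$ are frozen after lock-in at some level $\gamma>0$; choosing $\beta_i<\gamma$ makes the constraints in (\ref{eq:var_multi}) unsatisfiable, contradicting the assumed finite stopping time. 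You instead exhibit one explicit two-experiment Gaussian family in which the trace dominance holds uniformly from $t=1$, so experiment $2$ is never sampled at all, its posterior remains the prior, and any $\beta_2$ below the prior variance of $\alpha_2$ yields an infinite stopping time, while an interleaving comparator stops in $O(1/\beta_2)$ samples. Your version is more elementary (no consistency or lock-in machinery, no random threshold time $\tau^*$) and gives a cleaner quantitative gap; the paper's version buys generality, showing the failure is generic to the greedy rule rather than an artifact of a crafted model.

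One caveat on scope. The theorem is phrased within the paper's general setting, and the paper's proof delivers the per-instance claim: for whatever instance is at hand, suitable tolerances defeat any greedy-based procedure. Your counterexample, as written, establishes only the existential claim over instances. If the per-instance reading is intended, your argument needs the upgrade that the paper carries out: replace ``never sampled'' by ``sampled only finitely often after almost-sure lock-in onto the FI-maximizing experiment,'' and replace ``prior variance'' by the (random) frozen cost level $\gamma$ attained at lock-in---precisely the two ingredients your closing ``softer perturbation'' paragraph gestures at but does not execute. Note also that both arguments must rule out ties in the greedy criterion: your uniform-dominance construction does so by design, whereas the paper's general argument implicitly assumes $a^*$ is the unique maximizer, so that the ground truth lies in the interior of $\mathcal{U}_{a^*}$ and the random tie-breaking in (\ref{eq:psi})-style rules never re-activates the starved experiments.
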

	\begin{proof}
		See Appendix \ref{R1}. 
	\end{proof}
	This theorem shows that a greedy sampling strategy based on selecting the most informative experiment, along with any choice of stopping rule that satisfies the constraints in~(\ref{eq:var_multi}) at stopping, renders an infinite average sample complexity for specific choices of the estimation guarantees.

	\subsubsection{Cost-aware Sampling Rule} 
	Next, we propose a sampling rule that maintains a balance between exploiting the control actions that maximize the return and exploring actions that have not yet been sufficiently sampled. This ensures that the algorithm does not get stuck in using only the most informative experiments. This is critically needed to avoid insufficient exploration of the less informative experiments, since this leads to significant disparity among different estimation qualities. The more explored experiments will be over-sampled, achieving estimation qualities stronger than the prescribed thresholds. This penalizes the overall sample complexity of the sampling process. To circumvent such oversampling, we propose a sampling rule that involves randomly sampling from the distribution defined as
	\begin{align}
		\bar{q}_t\; \triangleq\; \textcolor{black}{\arg\min}_{q\in\Q_K}\bigg\lbrace &\frac{1}{\beta}\Big (\sum\limits_{i\in[K]} q(i)\mathscr{J}_i({\nu}_t^{\sf ML})\Big )^{-1} \nonumber\\&\qquad+
		\sum\limits_{i\in[K]}\frac{1}{\beta_i} \Big (q(i)\mathscr{J}_i({\zeta}_{i,t}^{\sf ML})\Big )^{-1}\bigg\rbrace\ .
		\label{eq:Control2}
	\end{align}
	Note that $\bar{q}_t$ does not necessarily place the entire mass on \emph{one} of the control actions, thus facilitating exploration. Furthermore, the distribution $\bar q_t$ converges to a limiting distribution in the limit of $t\rightarrow\infty$. This is attributed to the fact that the FI measures are computed at the ML estimates, and, by the strong consistency of the ML estimates~\cite{MLE}, $\nu_t^{\sf ML}\xrightarrow{a.s.}\theta$, and $\zeta_{i,t}^{\sf ML}\xrightarrow{a.s.}\alpha_i$ for every $i\in[K]$. Thus, as $t\rightarrow\infty$, the ML estimates converge to the respective ground truths $\theta$ and $\{\alpha_i : i\in[K]\}$, and $\bar q_t$ converge to its limiting distribution. We denote this limiting distribution by
	\begin{align}
		q^* \triangleq \textcolor{black}{\arg\min}_{q\in\Q_K}\; \bigg\lbrace &\frac{1}{\beta}\Big (\sum\limits_{i\in[K]} q(i)\mathscr{J}_i(\theta)\Big )^{-1} \\
\nonumber\\&\qquad+
		\sum\limits_{i\in[K]}\frac{1}{\beta_i}\Big (q(i)\mathscr{J}_i(\alpha_i)\Big )^{-1}\bigg\rbrace\;.
		\label{Control2_optimal}
	\end{align} 
	\subsection{Stopping Rule}\label{Stopping Rule 2}
	We design a stopping rule that takes into account the fidelity guarantees on the shared parameter and the private parameters. Specifically, at each time instant, we compute the conditional posterior MMSE cost of the estimated shared parameter ${\sf C} (\nu_t^{\sf MMSE}\;|\;\F_t)$ and the private parameters $\{{\sf D} (\zeta_{i,t}^{\sf MMSE}\;|\;\F_t,\;\nu_t^{\sf MMSE}) : i\in[K]\}$. Our proposed stopping rule $T$ is given by
	
	\begin{align}\label{eq:stop2}
		T\;\triangleq\; \inf\;\Big\lbrace &t\in\N\;:\; {\sf C} (\nu_t^{\sf MMSE}\;|\;\F_t)\;\leq\;\beta,\nonumber\\&\quad {\sf D} (\zeta_{i,t}^{\sf MMSE}\;|\;\F_t,\;\nu_t^{\sf MMSE})\;\leq\;\beta_i, \;\forall\;i\in[K]  \Big\rbrace\ .
	\end{align}
	Based on this, the sampling process terminates at the first instant at which all estimation cost constraints are satisfied.
	\subsection{Performance Guarantees}
	In this section, we present the optimality guarantees of the proposed active sampling procedure. We begin by stating a lower bound on the average sample complexity for any algorithm that solves $\mcP(\bbeta)$, defined in (\ref{eq:var_multi}). Next, we provide an upper bound on the average sample complexity of the proposed sequential procedures. Specifically, we provide a high probability upper bound on the sample complexity as well as an asymptotic upper bound on the average sample complexity. The average sample complexity is shown to achieve the universal lower bound asymptotically up to a constant term. 
	To characterize the average sample complexity, we define
	\begin{align}
		W_{\bbeta}(\theta,\balpha)\;\triangleq\; \inf\limits_{q\in\Q_K}\; \bigg\{\frac{1}{\beta}\bigg(&\sum\limits_{i\in[K]}q(i)\mathscr{J}_i(\theta)\bigg)^{-1} \nonumber\\&\quad+ \sum\limits_{i\in[K]}\frac{1}{\beta_i}\Big(q(i)\mathscr{J}_i(\alpha_i)\Big)^{-1}\bigg\}\ .
	\end{align}
	Next, we define
	\begin{align}
		\label{eq:Vs}
		&\color{black}{V(\theta)}\;\triangleq\; \frac{1}{\beta}\Bigg(\sum\limits_{i\in[K]}q^*(i)\mathscr{J}_i(\theta)\Bigg)^{-1}\ ,\nonumber\\ \text{and}\quad &V_i(\alpha_i)\;\triangleq\; \frac{1}{\beta_i}\Bigg(q^*(i)\mathscr{J}_i(\alpha_i)\Bigg)^{-1}\ ,\;\forall\;i\in[K]\ .
	\end{align}
	Accordingly, we define define
	\begin{align}
		&V_{\max}(\theta,\balpha)\;\triangleq\;\max\bigg({V(\theta)},{V_1(\alpha_1)},\cdots,{V_K(\alpha_K)}\bigg)\ ,\nonumber\\ \text{and}\quad &V_{\min}(\theta,\balpha)\;\triangleq\;\min\bigg({V(\theta)},{V_1(\alpha_1)},\cdots,{V_K(\alpha_K)}\bigg ) \ .
	\end{align}
	Furthermore, define $\beta_{\max}$ and $\beta_{\min}$ as the maximum and minimum tolerance levels on the estimation costs, i.e.,
	\begin{align}
		\beta_{\max}\triangleq\max\{\beta,\beta_1,\cdots,\beta_K\},\;\;\text{and}\;\; \beta_{\min}\triangleq\min\{\beta,\beta_1\cdots,\beta_K\}\ .
	\end{align}
	Based on these definitions, we provide a lower bound on the average sample complexity.
	\begin{theorem}[Converse]
		\label{theorem:multi_sc_converse}
		Under assumptions A$_1$-A$_6$, for any sequential procedure $\Delta$, and for any $h>0$ there exist constants $C(h)>0$ and $\{D_i(h) > 0 : i\in[K]\}$, such that for any $\beta\in(0,C(h))$ and $\beta_i\in(0,D_i(h))$ for all $i\in[K]$, we have
		\begin{align}
			\E[T]\;\geq\; \frac{W_{\bbeta}(\theta,\balpha)-h}{K+1}\cdot\Big(1-(K+1)h\Big)\;.
		\end{align}
	\end{theorem}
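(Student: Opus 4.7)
The plan is to generalize the converse argument of Theorem~\ref{theorem:SC_Lower_bound_single_param} by exploiting the fact that at the stopping time, all $K+1$ conditional posterior cost constraints in (\ref{eq:var_multi}) must be satisfied simultaneously. Each of these constraints yields a lower bound on $\E[T]$ (one originating from the shared parameter $\theta$ and one from each private parameter $\alpha_i$), and the $1/(K+1)$ factor appearing in the statement arises by combining these $K+1$ individual bounds via the definition of $W_{\bbeta}$ as an infimum.

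Concretely, I would first establish a posterior Cram\'er--Rao type lower bound (a van Trees inequality) for each of the $K+1$ conditional MMSE costs in terms of the per-experiment sample counts. Denoting $N_i(T)\triangleq|S_i^T|$ and $p_i \triangleq \E[N_i(T)]/\E[T]$, which forms a probability measure on $[K]$, the constraint ${\sf C}(\nu_T^{\sf MMSE}\med\F_T)\leq\beta$ translates, for $\beta$ sufficiently small, into $\sum_i N_i(T)\mathscr{J}_i(\theta)\gtrsim 1/\beta$ up to a prior-Fisher correction and an approximation error that vanishes as $\beta\to 0$. Taking expectations and invoking continuity of the FI (A$_5$) together with strong consistency of the MLE (ensured by A$_1$--A$_6$) to substitute $\mathscr{J}_i$ at the estimate by $\mathscr{J}_i$ at the ground truth yields, for every $i\in[K]$,
\begin{align*}
\frac{1}{\beta\sum_{j\in[K]} p_j\mathscr{J}_j(\theta)} \;\leq\; \E[T](1+o(1))\;, \quad\text{and}\quad \frac{1}{\beta_i\, p_i \mathscr{J}_i(\alpha_i)}\;\leq\; \E[T](1+o(1))\;.
\end{align*}
The thresholds $C(h)$ and $D_i(h)$ quantify how small $\beta$ and $\beta_i$ must be for the approximation error incurred in each of these substitutions to be bounded by $h$.

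Summing the $K+1$ inequalities gives an upper bound on the functional whose infimum over $\Q_K$ defines $W_{\bbeta}(\theta,\balpha)$, evaluated at the specific frequency $\bp$ induced by the procedure; since $W_{\bbeta}$ is the infimum, this yields
\begin{align*}
W_{\bbeta}(\theta,\balpha)\;\leq\; \frac{1}{\beta\sum_i p_i\mathscr{J}_i(\theta)} + \sum_{i\in[K]}\frac{1}{\beta_i\, p_i\mathscr{J}_i(\alpha_i)} \;\leq\; (K+1)\E[T](1+o(1))\;,
\end{align*}
which rearranges to $\E[T]\geq (W_{\bbeta}-h)/(K+1)$ after tracking the $h$-level approximation errors. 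The multiplicative $(1-(K+1)h)$ factor is then obtained by union-bounding the $K+1$ posterior-concentration events (each failing with probability at most $h$) to produce a high-probability lower bound on $T$, and then converting it to a lower bound on $\E[T]$, exactly as in the single-parameter converse.

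The main technical obstacle is rigorously establishing the per-constraint posterior variance lower bound in this sequential, data-adaptive setting, where both $\{N_i(T)\}$ and $T$ are random and intertwined with the sampling rule. A plausible route is to condition on the sampling path $\psi^T$, apply a van Trees inequality pathwise with the resulting (now path-deterministic) sample counts, and then integrate; one must then carefully track that replacing $\mathscr{J}_i(\nu_T^{\sf ML})$ with $\mathscr{J}_i(\theta)$ using strong consistency contributes only $O(h)$ error, uniformly for $\beta,\beta_i$ below the thresholds $C(h),D_i(h)$. Once the per-constraint lower bounds are in hand, the remaining combinatorial steps (summing the $K+1$ inequalities and invoking the definition of $W_{\bbeta}$) are algebraic.
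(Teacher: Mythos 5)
Your proposal follows essentially the same route as the paper: per-constraint posterior Cram\'er--Rao bounds, summing the $K+1$ normalized costs at stopping to produce the $(K+1)T$ factor, invoking the infimum definition of $W_{\bbeta}(\theta,\balpha)$ at the realized sampling frequencies, choosing $C(h)$ and $D_i(h)$ via the CRLB to rule out stopping before the posterior-concentration threshold, and converting the union-bounded high-probability event (failure probability $(K+1)h$) into the expectation bound via Markov's inequality --- this is exactly the content of the paper's Lemma~\ref{lemma:con_multi} combined with the argument in Appendix~\ref{proof:multi_sc_converse}. The only cosmetic differences are your Wald-type intermediate step with $p_i=\E[N_i(T)]/\E[T]$ (the paper instead argues pathwise with empirical frequencies and their limiting distribution) and the unnecessary substitution of $\mathscr{J}_i$ at the MLE, which plays no role in the converse since the CRLB is stated directly at the ground truth.
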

	\begin{proof}
		See Appendix ~\ref{proof:multi_sc_converse}.
	\end{proof}
	\begin{theorem}[Achievability]
		\label{theorem:multi_sc_ach}
		Under assumptions A$_1$-A$_6$, for any $h>0$ there exist constants $L(h)>0$ and $\{M_i(h) > 0 : i\in[K]\}$, such that for any $\beta\in(0,L(h))$ and $\beta_i\in(0,M_i(h))$ for all $i\in[K]$, the proposed procedure comprised of the sequence of \textcolor{black}{estimates} in~(\ref{eq:est_multi}), sampling rule in~(\ref{eq:Control2}) and the stopping rule in~(\ref{eq:stop2}) satisfies
		\begin{align}
			\P\Bigg\{T\leq \frac{W_{\bbeta}(\theta,\balpha)}{K+1} &+ \Big(V_{\max}(\theta,\balpha)-V_{\min}(\theta,\balpha)\Big)\nonumber\\&\qquad+\bigg(\frac{1}{\beta_{\min}} + \frac{1}{\beta_{\max}}\bigg)h + 2\Bigg\}\;=\; 1\ .
		\end{align}
	\end{theorem}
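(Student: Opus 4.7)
The plan is to mirror the single-parameter achievability argument (Theorem~\ref{theorem: SC_upper_bound_single_param}), while accounting for the fact that the stopping time in~(\ref{eq:stop2}) is the first time \emph{all} $K+1$ constraints (one for $\theta$, one for each $\alpha_i$) are met, and that the sampling distribution $\bar q_t$ is now a nondegenerate randomization that must be shown to converge to $q^\star$. First I would reduce the posterior costs to Fisher-information quantities: by a Laplace/Bernstein-von Mises expansion around the posterior mode (justified by assumptions A$_1$--A$_6$), one can show that for large $t$,
\begin{align*}
\mathsf{C}(\nu_t^{\sf MMSE}\med\F_t) \;=\; \Bigl(\textstyle\sum_{i\in[K]} N_i(t)\,\mathscr{J}_i(\theta)\Bigr)^{-1} + o\bigl(1/t\bigr)\ ,\\
\mathsf{D}(\zeta_{i,t}^{\sf MMSE}\med\F_t,\nu_t^{\sf MMSE}) \;=\; \bigl(N_i(t)\,\mathscr{J}_i(\alpha_i)\bigr)^{-1} + o\bigl(1/t\bigr)\ ,
\end{align*}
where $N_i(t)\triangleq|\psi_i^t|$. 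These expansions use strong consistency of the MMSE estimators (which, like the ML estimates, converge a.s.\ to the truth under A$_1$--A$_6$) and continuity of the Fisher information in its arguments.

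Next I would control the empirical sampling frequencies. Because $\bar q_t$ in~(\ref{eq:Control2}) is computed from the ML estimates $\nu_t^{\sf ML}$ and $\zeta_{i,t}^{\sf ML}$, and these converge a.s.\ to $\theta$ and $\alpha_i$ respectively, continuity of the objective in~(\ref{eq:Control2}) together with compactness of $\Q_K$ gives $\bar q_t \xrightarrow{a.s.} q^\star$. A Chung-type strong-law argument for randomized sampling from a convergent distribution then yields $N_i(t)/t \xrightarrow{a.s.} q^\star(i)$ for every $i\in[K]$. Consequently, for any $h>0$ there is an a.s.-finite time $t_h$ beyond which simultaneously
\begin{align*}
\mathsf{C}(\nu_t^{\sf MMSE}\med\F_t) \;\le\; \bigl(t\,\textstyle\sum_i q^\star(i)\mathscr{J}_i(\theta)\bigr)^{-1} + h/(\beta t),\quad
\mathsf{D}(\zeta_{i,t}^{\sf MMSE}\med\F_t,\nu_t^{\sf MMSE}) \;\le\; \bigl(t\,q^\star(i)\mathscr{J}_i(\alpha_i)\bigr)^{-1} + h/(\beta_i t).
\end{align*}

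Assembling the bound, the $\theta$-constraint is enforced once $t\ge V(\theta)+h/\beta$, and the $\alpha_i$-constraint once $t\ge V_i(\alpha_i)+h/\beta_i$. Since $T$ is the first time \emph{all} constraints hold,
\begin{align*}
T \;\le\; \max\bigl\{V(\theta),\,V_1(\alpha_1),\dots,V_K(\alpha_K)\bigr\} + \Bigl(\tfrac{1}{\beta_{\min}}+\tfrac{1}{\beta_{\max}}\Bigr)h + 1 \;=\; V_{\max}(\theta,\balpha) + \Bigl(\tfrac{1}{\beta_{\min}}+\tfrac{1}{\beta_{\max}}\Bigr)h + 1.
\end{align*}
To convert $V_{\max}$ into the stated form involving $W_{\bbeta}/(K+1)$, I would use the elementary identity $W_{\bbeta}(\theta,\balpha)=V(\theta)+\sum_i V_i(\alpha_i)\ge (K+1)V_{\min}(\theta,\balpha)$, which gives $V_{\min}\le W_{\bbeta}/(K+1)$ and hence
\begin{equation*}
V_{\max} \;=\; V_{\min} + (V_{\max}-V_{\min}) \;\le\; \tfrac{W_{\bbeta}(\theta,\balpha)}{K+1} + \bigl(V_{\max}(\theta,\balpha)-V_{\min}(\theta,\balpha)\bigr)\ ,
\end{equation*}
and adding a rounding slack of $+1$ absorbs integer-ceiling effects into the constant $2$ in the theorem.

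The main obstacle I expect is the Laplace expansion in the first step: unlike the single-parameter case, the marginal posterior of $\theta$ requires integrating out $\{\alpha_i\}$ over the non-product likelihood, and the conditional posterior of $\alpha_i$ depends on the plugged-in $\nu_t^{\sf MMSE}$. Making the $o(1/t)$ remainders uniform in the sampling path (so that the bound holds \emph{almost surely}, not only in expectation) requires uniform versions of the consistency and information-matrix regularity in A$_3$--A$_6$, and a careful choice of the threshold constants $L(h)$ and $M_i(h)$ to ensure $t_h$ is reached before any constraint is violated. The synchronization of the $K+1$ constraints via the randomized rule $\bar q_t$ — rather than the degenerate $\argmax$ of Section~\ref{Control} — is precisely what enables the almost-sure, rather than merely asymptotic, bound.
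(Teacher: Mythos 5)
Your proposal is correct, and its analytic core coincides with the paper's: your Bernstein--von Mises expansions of ${\sf C}(\nu_t^{\sf MMSE}\mid\mathcal{F}_t)$ and ${\sf D}(\zeta_{i,t}^{\sf MMSE}\mid\mathcal{F}_t,\nu_t^{\sf MMSE})$, together with $N_i(t)/t\xrightarrow{a.s.}q^*(i)$, are exactly the content of Lemma~\ref{lemma:multi_ach} (equations (\ref{eq:ach2_slast})--(\ref{eq:ach2_last})), and your remark that the burn-in time must be forced below the stopping time by shrinking $\bbeta$ is carried out in the paper via the Cram\'{e}r--Rao construction (\ref{eq:const1})--(\ref{eq:const3}), which is routine since $V(\theta)$ and $V_i(\alpha_i)$ scale like $1/\beta$ and $1/\beta_i$. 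Where you genuinely diverge is the final assembly. The paper introduces the ordered constraint-hitting times $T-\tau_i$, sums the resulting $K+1$ inequalities to obtain $(K+1)T-\sum_i\tau_i\leq W_{\bbeta}(\theta,\balpha)+\cdots$ in (\ref{eq: multi_ach6}), and then bounds each pairwise gap $\tau_j-\tau_i\leq V_{\max}-V_{\min}+\big(\tfrac{1}{\beta_{\min}}+\tfrac{1}{\beta_{\max}}\big)h+1$ via the two-sided convergence statements (\ref{eq: multi_ach7})--(\ref{eq: multi_ach10}). You instead bound $T$ directly by the largest single-constraint threshold, $T\leq V_{\max}(\theta,\balpha)+h/\beta_{\min}+O(1)$, and recover the stated form from the elementary identity $W_{\bbeta}(\theta,\balpha)=V(\theta)+\sum_{i\in[K]}V_i(\alpha_i)\geq(K+1)V_{\min}(\theta,\balpha)$, hence $V_{\max}\leq W_{\bbeta}/(K+1)+(V_{\max}-V_{\min})$; this identity is valid because $q^*$ attains the infimum defining $W_{\bbeta}$. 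Your route is more economical: it needs only the upper tails of the convergences (avoiding the lower-tail bookkeeping in (\ref{eq: multi_ach8})--(\ref{eq: multi_ach10}), where the paper's displayed bound $T-\tau_j\geq V_{\min}+h/\beta_{\max}$ has an apparent sign slip and should read $-h/\beta_{\max}$), and it actually yields a bound at least as tight as the theorem's, since $V_{\max}\leq \tfrac{W_{\bbeta}}{K+1}+(V_{\max}-V_{\min})$ is exactly the slack the paper's statement builds in. What the paper's $\tau_i$-decomposition buys in exchange is machinery that is reused verbatim in the average-complexity proof of Theorem~\ref{theorem:multi_ach_avg} (Appendix~\ref{proof:multi_ach_avg}), where the hitting-time $\tau_a$ argument is essential; your max-threshold shortcut would not transfer there without modification.
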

	\begin{proof}
		See Appendix ~\ref{proof:multi_sc_ach}.
	\end{proof}
	\begin{theorem}[Achievability]
		\label{theorem:multi_ach_avg}
		In the asymptote of $\bbeta\rightarrow \mathbf{0}$, there exists $h\in(0,1)$ such that the proposed procedure comprised of the sequence of \textcolor{black}{estimates} in~(\ref{eq:est_multi}), sampling rule in~(\ref{eq:Control2}), and the stopping rule in~(\ref{eq:stop2}) satisfies
		\begin{align}
			\lim\limits_{\bbeta\rightarrow \mathbf{0}}\frac{\E[T]}{W_{\bbeta}(\theta,\balpha)}\;\leq\; \frac{1}{K+1} + h\;.
		\end{align}
	\end{theorem}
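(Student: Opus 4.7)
The plan is to deduce the claim as an asymptotic consequence of the almost-sure sample complexity bound in Theorem~\ref{theorem:multi_sc_ach}. Because that bound holds with probability one, it immediately furnishes a deterministic upper bound on $T$, and hence on $\E[T]$,
\begin{align*}
\E[T]\;\leq\;\frac{W_{\bbeta}(\theta,\balpha)}{K+1} + \bigl(V_{\max}(\theta,\balpha)-V_{\min}(\theta,\balpha)\bigr) + \Bigl(\frac{1}{\beta_{\min}} + \frac{1}{\beta_{\max}}\Bigr)h_0 + 2,
\end{align*}
for any $h_0>0$ small enough that the hypotheses of Theorem~\ref{theorem:multi_sc_ach} are satisfied (i.e., $\beta<L(h_0)$ and $\beta_i<M_i(h_0)$ for all $i\in[K]$). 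What remains is to divide through by $W_{\bbeta}(\theta,\balpha)$ and control the three residual terms as $\bbeta\to\mathbf{0}$.

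First I would note that, because $q^*$ attains the minimum in the definition of $W_{\bbeta}$, one has the identity $W_{\bbeta}(\theta,\balpha) = V(\theta)+\sum_{i\in[K]} V_i(\alpha_i)$, which implies $W_{\bbeta}\to\infty$ as any component of $\bbeta$ shrinks to zero; consequently $2/W_{\bbeta}\to 0$. Next, since each of $1/\beta_{\min}$ and $1/\beta_{\max}$ is of the same order as the corresponding summand of $W_{\bbeta}$, with proportionality constants involving the bounded Fisher information values and $q^*$, the ratios $(1/\beta_{\min})/W_{\bbeta}$ and $(1/\beta_{\max})/W_{\bbeta}$ remain bounded by some constant $\kappa>0$ throughout the limit. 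Hence this middle residual contributes at most $\kappa h_0$ asymptotically and can be made arbitrarily small through the initial choice of $h_0$.

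The main obstacle is the residual $\bigl(V_{\max}-V_{\min}\bigr)/W_{\bbeta}$, which carries no $h_0$ factor and does not vanish in general. To handle it, I would observe that $V_{\max}$ is just one of the $K+1$ strictly positive summands constituting $W_{\bbeta}$, so $V_{\max}/W_{\bbeta}<1$ strictly; moreover, provided the path $\bbeta\to\mathbf{0}$ is regular (so that the ratios $\beta/\beta_i$ stay controlled, using assumption A$_5$ on continuity and boundedness of the Fisher information), the limit $c\triangleq \limsup_{\bbeta\to\mathbf{0}}(V_{\max}-V_{\min})/W_{\bbeta}$ lies in $[0,1)$ and is determined by the underlying $\mathscr{J}_i(\theta)$, $\mathscr{J}_i(\alpha_i)$, and $q^*$.

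Combining the three residuals yields
\begin{align*}
\limsup_{\bbeta\to\mathbf{0}} \frac{\E[T]}{W_{\bbeta}(\theta,\balpha)}\;\leq\;\frac{1}{K+1} + c + \kappa h_0.
\end{align*}
Setting $h\triangleq c+\kappa h_0$ and then choosing $h_0$ small enough that $h\in(0,1)$ (which is possible because $c<1$ strictly) delivers the claimed asymptotic bound.
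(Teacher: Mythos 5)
Your proof is correct in substance but reaches the result by a genuinely different route than the paper. The paper does not derive Theorem~\ref{theorem:multi_ach_avg} from Theorem~\ref{theorem:multi_sc_ach}; instead it restarts from Lemma~\ref{lemma:multi_ach}, introduces the instants $T-\tau_a$ (first constraint satisfied) and $T-\tau_b$ (last constraint satisfied, $\tau_b=0$), bounds $T-\tau_a$ by $T_\epsilon + \frac{W_{\bbeta}(\theta,\balpha)+\epsilon}{K+1}+1$ via the aggregated convergence, bounds $\tau_a\leq T\leq T^*_\epsilon+V_{\max}(\theta,\balpha)+\epsilon/\beta_b+2$ via the per-parameter convergences in~(\ref{eq:ach2_slast})--(\ref{eq:ach2_last}), and then takes expectations of the combined pathwise bound~(\ref{eq:th6_1}). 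You instead integrate the almost-sure bound already stated in Theorem~\ref{theorem:multi_sc_ach} and control the three residuals after dividing by $W_{\bbeta}(\theta,\balpha)$; this is legitimate (a probability-one bound passes to $\E[T]$ immediately, and the hypotheses $\beta<L(h_0)$, $\beta_i<M_i(h_0)$ are eventually met as $\bbeta\to\mathbf{0}$ with $h_0$ fixed), and it is more economical since it avoids redoing the $\tau_a/\tau_b$ decomposition. Your residual analysis is also correct: the identity $W_{\bbeta}(\theta,\balpha)=V(\theta)+\sum_{i\in[K]}V_i(\alpha_i)$ holds because $q^*$ attains the infimum, so $W_{\bbeta}\to\infty$ kills the additive constant, and the bound $(1/\beta_j)/W_{\bbeta}\leq q^*(j)\mathscr{J}_j(\alpha_j)\leq \max_i \mathscr{J}_i(\alpha_i)$ (and its analogue $(1/\beta)/V(\theta)\leq\max_i\mathscr{J}_i(\theta)$ when the shared tolerance is extremal) gives your $\kappa$ via assumption A$_5$. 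Notably, your dominant residual $(V_{\max}-V_{\min})/W_{\bbeta}$ is smaller than the paper's $V_{\max}/W_{\bbeta}$, so your constant $h=c+\kappa h_0$ is in fact slightly tighter than what the paper's own argument yields.

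One caveat, which you correctly flagged but should not regard as a gap relative to the paper: the claim $c=\limsup_{\bbeta\to\mathbf{0}}(V_{\max}-V_{\min})/W_{\bbeta}<1$ is not guaranteed along arbitrary paths $\bbeta\to\mathbf{0}$. If one component of $\bbeta$ vanishes much faster than the others (e.g., $\beta_1=\delta^2$ while the rest scale as $\delta$), the corresponding summand dominates and $V_{\max}/W_{\bbeta}\to 1$ even though $q^*$ adapts, so $h\in(0,1)$ can fail. Along proportional paths $\bbeta=\delta\,\mathbf{c}$ the optimizer $q^*$ is scale-invariant and $c<1$ strictly, so your conditioning on a ``regular'' path is exactly the right qualification. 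The paper's proof has the identical exposure through its $V_{\max}(\theta,\balpha)/W_{\bbeta}(\theta,\balpha)$ term and does not address it, so on this point your write-up is, if anything, more careful than the original.
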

	\begin{proof}
		See Appendix~\ref{proof:multi_ach_avg}.
	\end{proof}
	
	\section{Numerical Experiments}\label{Simulation}

\subsection{Shared Parameter}
Consider a network of $K$ sensors, where $K$ is an even number. Sensor $i\in[K]$ generates its samples according to the distribution $\mathcal{N}(0,\sigma^2_i(\theta))$, where
\begin{align}
    \sigma^2_i(\theta) \triangleq \left\{
    \begin{array}{ll}
          \frac{(i-1)^2}{K(K-i+1)} + \frac{K-2i+2}{K-i+1}\cdot \theta\ , & \forall i\in\left\{1,\dots,\frac{K}{2}\right\}\\
          &\\
          \frac{i}{K} + \frac{K-2i}{i} \cdot\theta\ , & \forall i\in\left\{\frac{K}{2}+1,\dots K\right\}\;
    \end{array}\right. \ .
\end{align}
The choice of the variance values for the Gaussian distribution corresponding to each sensor makes them the \textit{most informative} one under a specific regime of the underlying shared parameter $\theta$. More specifically, note that the FI measure for the Gaussian distribution $\mathcal{N}(0,\sigma^2)$ with respect to $\sigma$ is $\mathcal{I}\triangleq 1/2\sigma^4$. It can be readily verified that each sensor maximizes the FI under the following regimes of $\theta$: for $\theta\in((i-1)/K\;,\;i/K)$ for $i\in[K]$, sensor $i$ maximizes the FI. 
In our evaluations, we set $K=4$. 
$\theta$ is assumed to have a uniform prior distribution ${\sf Unif}[0.01,0.99]$. It can be readily verified that if the true value of $\theta$ is less than $0.25$, Sensor $1$ is the most informative sensor. Otherwise, if $0.25<\theta< 0.5$, Sensor $2$ becomes the most informative one. Similarly, for $0.5<\theta<0.75$ and $0.75<\theta<1$, Sensor $3$ and Sensor $4$, respectively, become the most informative ones. Thus, an effective sampling rule is characterized by its ability to identify and converge to the best sensor using as few samples as possible. For our experiment, we set $\theta=0.2$. 
\par Figure \ref{fig:N_vs_beta} shows the average number of samples required $\mathbb{E}[T]$ versus various levels of tolerance $\beta$, and compares them against the following four approaches for sensor selection. 
\begin{itemize}
	\item[1.] \textit{Random selection:} Random selection forms a baseline for comparison. Essentially, it refers to sampling one of the sensors $S_1$, $S_2$, $S_3$, or $S_4$ uniformly at random. The same stopping rule specified in (\ref{eq:N}) is used for fair comparison.
	\item[2.] \textit{Genie-aided sampling:} In this setting, we consider a genie-aided scenario in which the sampling rule is informed what the most informative sensor for estimating the unknown $\theta$ is.
	\item[3.] \textit{Approach of \cite{Atia}:} The algorithm prescribed in \cite{Atia} proposes a different stopping rule based on a fixed cost of sampling $c$, while keeping the same sensor-selection policy. The approach trades off the estimation performance against the accumulated cost of sampling, and thus does not have an explicit performance guarantee on the estimation cost (or a counterpart of $\beta$ in our setting). For comparison, for any given $\beta$, we find out a value of $c$ that ensures that the estimation cost of \cite{Atia} falls below $\beta$, and use that to generate the variations of $\E[T]$ versus~$\beta$.
\end{itemize} 
The performance shown in Figure~\ref{fig:N_vs_beta} correspond to averaging over~$100$ Monte Carlo realizations. Note that in Figure~\ref{fig:N_vs_beta}, Chernoff-like greedy sampling corresponds to the proposed sampling rule, which is also a special case of the look-ahead active sampling rule proposed in (\ref{eq:Control2}), for the case that there is only the shared parameter. This figure shows that our sampling rule outperforms the random sampling strategy, and the procedure prescribed in \cite{Atia}. \textcolor{black}{Note that the genie-aided sampling strategy uses the fewest number of samples to meet the prescribed level $\beta$. This is because this sampling strategy knows the most informative sensor from $t=1$ and always samples from that sensor. On the other hand, our proposed Chernoff-like greedy sampling rule requires a few more samples to guess the most informative sensor before it starts drawing samples from that sensor. Thus, its sample complexity is worse than that of the genie-aided strategy but better than those of all other strategies. Furthermore, while the approach of~\cite{Atia} uses the same sampling strategy like ours, its sample complexity suffers due to the choice of stopping rule. Specifically, the stopping rule in~\cite{Atia} is designed to minimize a unified objective comprising the estimation cost and delay, which is different from the objective in~(\ref{eq:var}). Finally, the random sampling strategy puts equal sampling effort on each sensor, thus, requiring a more significant average number of samples to reach the same guarantee on the estimation cost.}

\textcolor{black}{Furthermore, to gain more insight regarding the scaling behavior of the proposed algorithm with respect to the number of sensors $K$, Figure~\ref{fig:N_vs_K} plots the average number of samples against $K$. Clearly, as the number of sensors increases, the number of samples required by the proposed strategy in identifying the most informative sensor increases, thus, increasing the average sample complexity. For this experiment, we set $\beta=0.005$, and the other parameters remain the same.}

\begin{figure}[htb]
	\centering 
	\includegraphics[width=0.8\linewidth]{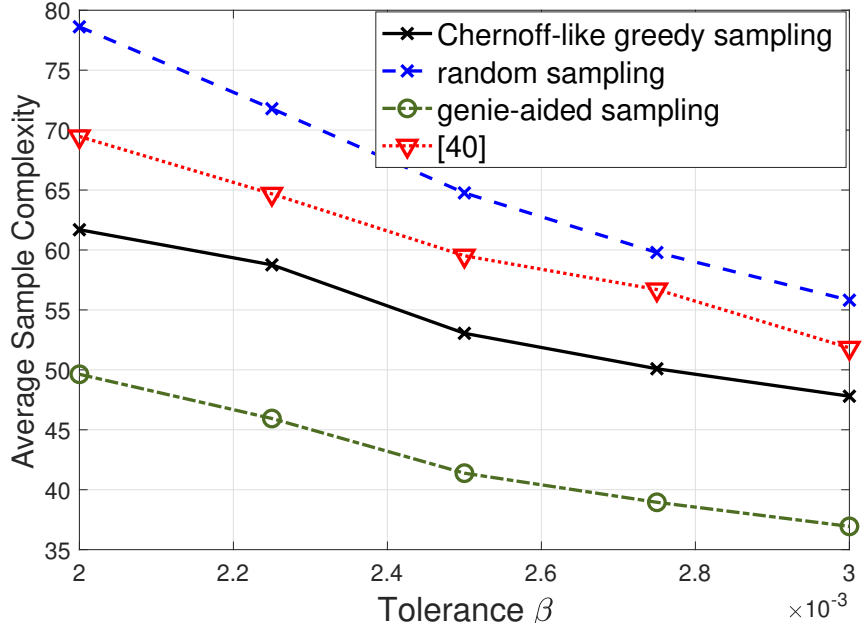} 
	\caption{Average sample complexity versus  prescribed tolerance $\beta$.}
	\label{fig:N_vs_beta} 
\end{figure}

\begin{figure}[htb]
	\centering 
	\includegraphics[width=0.8\linewidth]{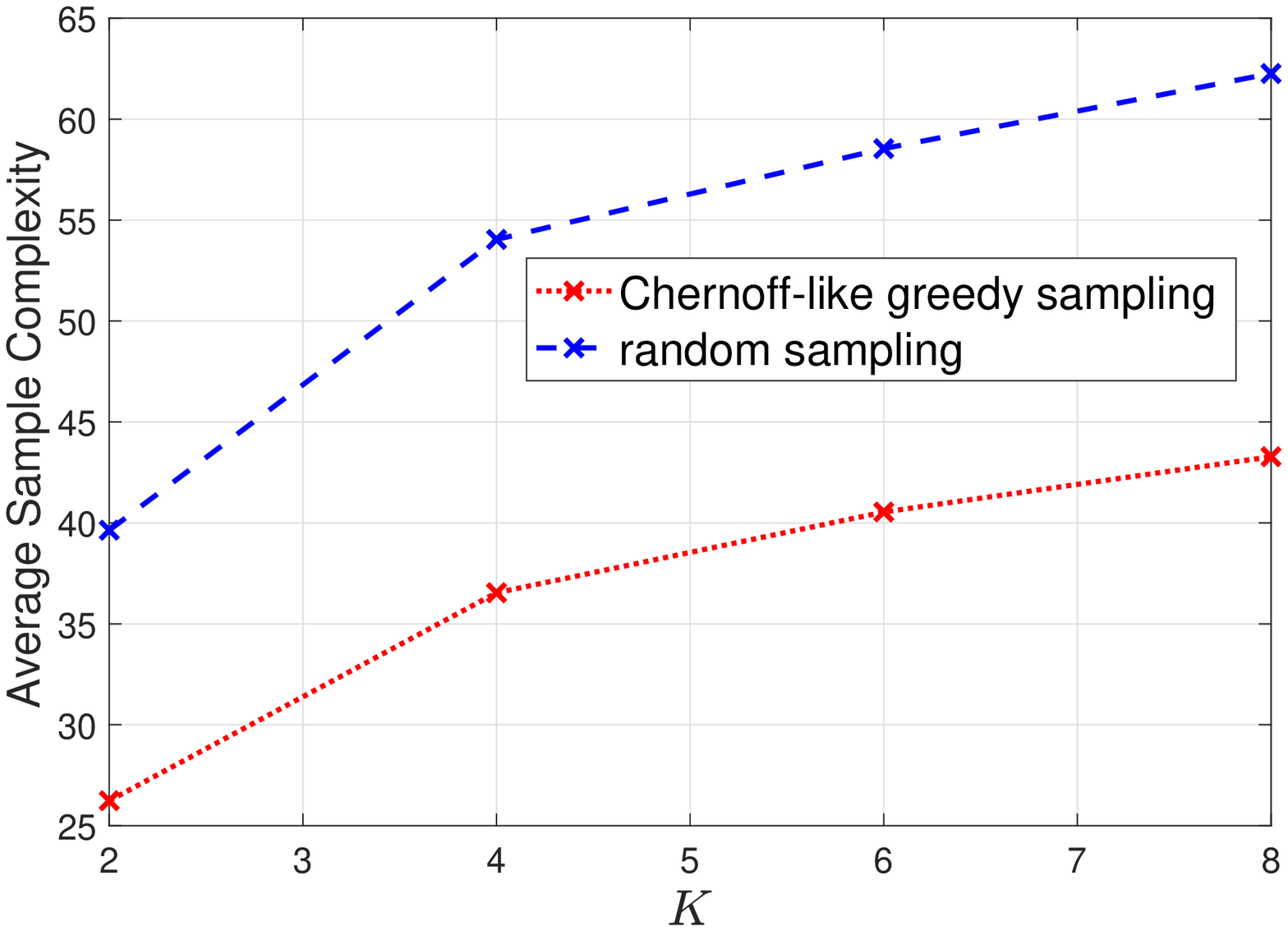} 
	\caption{Average sample complexity versus varying number of sensors $K$.}
	\label{fig:N_vs_K} 
\end{figure}

\subsection{Shared and Private Parameters}
The second experiment showcases the performance of the active sequential estimation algorithm proposed in Section~\ref{FullyUnknown}. We consider a simple network of two sensors, each generating random samples of~$\theta$ contaminated by noise. Each sensor is subject to a different level of noise variance. Sensors $1$ and $2$ have distributions $\mathcal{N}(\theta,\alpha_1)$ and $\mathcal{N}(\theta,\alpha_2)$, respectively. We assume a uniform prior for the mean, i.e., for given $a_0, b_0\in\R$,  
\begin{align}
	\pi_{\theta}(\theta) = \frac{1}{b_0-a_0}\mathds{1}_{\lbrace a_0\leq\theta\leq b_0\rbrace}\;, \quad b_0>a_0>0\ .
\end{align}
Similarly, the variance of each sensor has a uniform prior, i.e.,
\begin{align}
	&\pi_i(\alpha_i) = \frac{1}{b_i-a_i}\mathds{1}_{\lbrace a_i\leq\alpha_i\leq b_i\rbrace}\;,\quad b_i>a_i>0\ ,\;i\in\{1,2\}\ .
\end{align}  
The posterior distributions of the unknown mean and variance are analytically intractable and they are computed numerically. The performance of our proposed procedure for this setup is depicted in Fig. \ref{fig:N_vs_beta_learn}. For this experiment, we have set $\theta=0.25$, $\alpha_1 = 0.25$, and $\alpha_2=4$. The corresponding priors are parameterized by $a_0 = 0.1$, $b_0=4$, $a_1 = 0.1$, $b_1 = 0.7$, $a_2 = 1$, and $b_2=5$. The confidence levels on the estimates of the variance $\alpha_1$ and $\alpha_2$ are fixed at $\beta_1 = 0.1$ and $\beta_2 = 0.05$. 
The results show that the proposed sampling strategy outperforms the random selection strategy, which we use as a baseline in this case, as well as the greedy sampling strategy described in Section~\ref{chernoff_policy}. This matches our theoretical analysis, where we prove in Appendix \ref{R1} that the Chernoff-based greedy sampling rule is sub-optimal. Intuitively,
it is clear from the setting that such a sampling strategy focuses on exploiting Sensor $1$ in the long run. However, that could result in an insufficient exploration of Sensor $2$, resulting in a bad estimate of the variance of the second sensor.

\begin{figure}[htb]
	\centering 
	\includegraphics[width=0.8\linewidth]{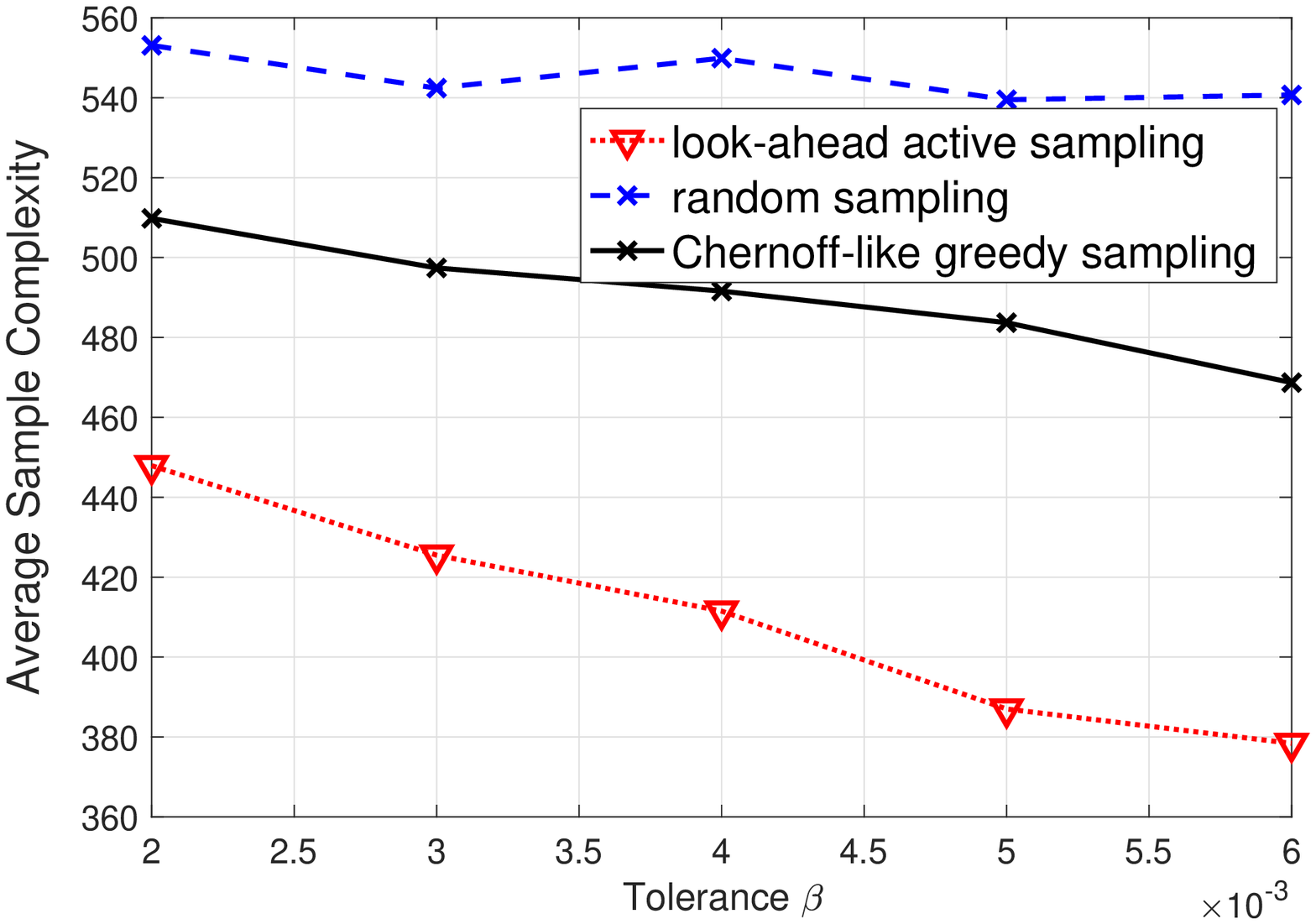} 
	\caption{Average sample complexity versus $\beta$, $\beta_1=0.05$, and $\beta_2=0.1$.}
	\label{fig:N_vs_beta_learn} 
\end{figure}

\subsection{Real-world Data}

\textcolor{black}{In this section, we provide an experiment on a real-world dataset, comparing the performance of the proposed look-ahead sampling procedure against the random selection strategy. For this purpose, we use the Chicago beach weather station dataset, which consists of three different weather stations recording hourly measurements on various parameters, such as the air temperature, humidity, rain intensity, wind direction, and wind speed. We select two of these weather stations, namely the $63^{\sf rd}$ Street weather station and the Foster weather station. Our goal is to estimate the average air temperature in the month of September at 10 AM from noisy measurements of the temperature. We set the ground truth of the air temperature to the average temperature recorded by the two weather stations in September at 10 AM over the years $2019$, $2020$, and $2021$. The true variance for each weather station is also set to the variance computed from the data over these years. It is noteworthy that the Foster weather station has a larger variance of $6.7975$ in recorded temperatures, compared to the $63^{\sf rd}$ Street weather station, which has a variance of $4.1216$. We consider the setting in which the variance of the weather station sensors is also unknown (in addition to the temperature that we intend to estimate). Each station is assumed to generate noisy measurements of the temperature, drawn from a Gaussian distribution with mean and variance set as specified. We set the levels required on the cost of estimating the variance for both the weather stations to $\beta_1 = \beta_2 = 0.1$. The prior distributions of the mean values are assumed to be ${\sf Unif}[10,20]$, while those of the variances are assumed to be ${\sf Unif}[2,8]$ for both the weather stations. Figure~\ref{fig:RW} demonstrates the average sample complexity against various levels of $\beta$. We observe that our proposed sequential procedure yields a far superior performance compared to random sampling, thus, clearly depicting the advantage of our proposed active sampling procedure.} 

\begin{figure}[t]
	\centering 
	\includegraphics[width=0.8\linewidth]{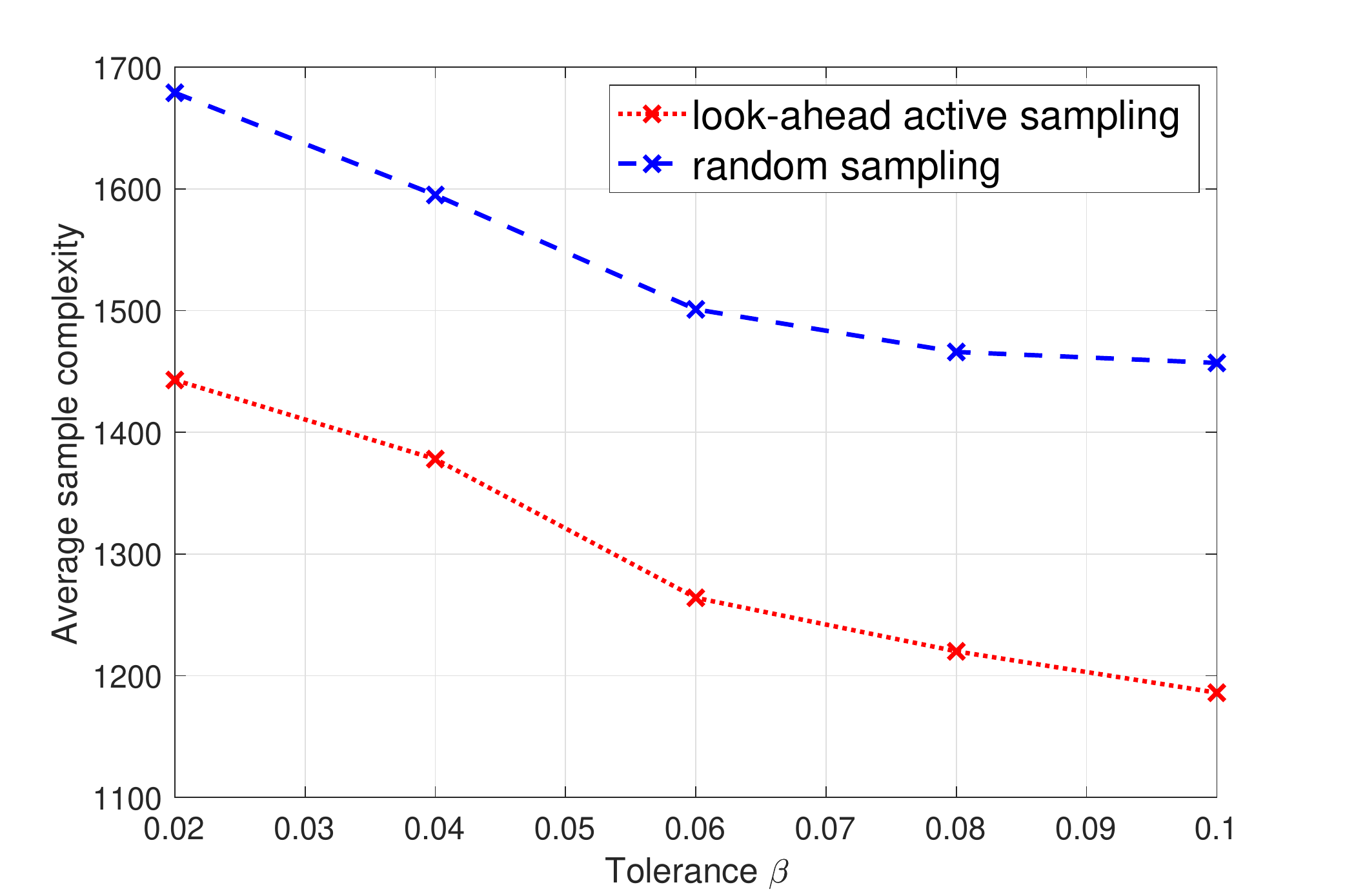} 
	\caption{Average sample complexity versus prescribed tolerance $\beta$.}
	\label{fig:RW} 
\end{figure}
	
	\section{Conclusions}
	In this paper, we have investigated the problem of active sequential estimation. Specifically, we have considered a setup in which the problem instance may have both shared as well as private parameters, and the goal is to estimate each of the parameters while meeting a prescribed level of confidence on their estimation qualities. Sequential decision rules are proposed, where the sampling rule involves maximizing the Fisher Information measure when the setting comprises shared parameters only (greedy sampling rule), while in the setting of both private and shared parameters, the rule maintains a trade-off between exploiting the most informative action and exploring the scarcely sampled actions in order to meet the prescribed tolerances on the estimation quality of every parameter. The proposed decision rules have been shown to be asymptotically optimal, and we have provided numerical experiments to evaluate their advantage compared to the existing strategies.
	
	\appendix
	
	\section{Useful Lemmas}
	In this section, we provide a few lemmas which we will be using throughout our analysis.
	\begin{lemma}[Lemma 4.1, \cite{Atia}]
		\label{lemma:atia4.1}
		When we have only a shared parameter (setting of Section \ref{section: single_param}), under assumptions A$_1$-A$_6$, for any $h>0$, there exists $t(h)\in \N$ such that for all $t>t(h)$:
		\begin{align}
			\P\Big\lbrace t\cdot \inf_{\hat\theta_t,\;\psi^t}\;{\sf C}(\hat\theta_t\;|\;\F_t)\;\geq\; \beta V_{\beta}(\theta)-h\Big\rbrace\;>\;1-h\;.
		\end{align}
	\end{lemma}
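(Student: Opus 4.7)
The plan is to exploit the quadratic loss structure to turn the inner infimum into a posterior variance, and then to establish a probabilistic lower bound on this posterior variance via an asymptotic Bayesian Cramer-Rao analysis, where the observed Fisher information is universally bounded by $t\max_{i}\mathscr{I}_i(\theta)$ across all adapted sampling paths.

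First, I would simplify the left-hand side. Under the squared-error loss used throughout Section~\ref{section: single_param}, for any realization of $\F_t$ the inner infimum over $\hat\theta_t$ is attained at the posterior mean $\nu_t^{\sf MMSE}$ and equals the posterior variance $\mathrm{Var}(\theta\med\F_t)$. Hence the claim reduces to showing
\begin{equation*}
\P\Bigl\{\, t\cdot\inf_{\psi^t}\mathrm{Var}(\theta\med\F_t)\;\geq\;\beta V_\beta(\theta)-h\,\Bigr\}\;>\;1-h\,,
\end{equation*}
where, as noted right before the statement, $\beta V_\beta(\theta)=1/\max_{i\in[K]}\mathscr{I}_i(\theta)$.

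Second, I would use a Laplace / Bernstein--von Mises expansion of the posterior. Assumptions A$_1$--A$_6$ (compact parameter space, identifiability in A$_6$, and the boundedness of the first and second derivatives of the log-likelihood in A$_3$, together with continuity of the Fisher information in A$_5$) are exactly what is needed to run a standard second-order Taylor expansion of the log-posterior $\log\pi_\theta(\theta)+\sum_{s=1}^t\lambda_{\psi(s)}(Y_s\med\theta)$ around the MLE $\nu_t^{\sf ML}$. This yields, with high probability, $\mathrm{Var}(\theta\med\F_t)=(J_{\pi_\theta}+I_t(\theta))^{-1}(1+o(1))$, where $I_t(\theta)\triangleq -\sum_{s=1}^t \partial_\theta^2\lambda_{\psi(s)}(Y_s\med\theta)$ is the observed Fisher information and $J_{\pi_\theta}=O(1)$ is the prior contribution.

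Third, I would control $I_t(\theta)$ uniformly over all adapted sampling rules. Because $\psi(s)$ is $\F_{s-1}$-measurable, the conditional mean of each term is $\E[-\partial_\theta^2\lambda_{\psi(s)}(Y_s\med\theta)\med\F_{s-1}]=\mathscr{I}_{\psi(s)}(\theta)\leq\max_i\mathscr{I}_i(\theta)$. Combined with the boundedness granted by A$_3$, a martingale strong law (or Azuma-Hoeffding) gives $\tfrac1t I_t(\theta)\leq\max_i\mathscr{I}_i(\theta)+o_{\P}(1)$ with the exceptional probability bounded uniformly in the adapted choice of $\psi^t$. Plugging this back into the Taylor expansion yields
\begin{equation*}
t\cdot\mathrm{Var}(\theta\med\F_t)\;\geq\;\frac{t}{J_{\pi_\theta}+t\max_i\mathscr{I}_i(\theta)+o(t)}\;\xrightarrow{t\to\infty}\;\frac{1}{\max_i\mathscr{I}_i(\theta)}\;=\;\beta V_\beta(\theta)\,,
\end{equation*}
uniformly over $\psi^t$, from which the announced high-probability bound follows for all $t>t(h)$.

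The main obstacle is the fact that $\psi^t$ is allowed to be an arbitrary adapted random path inside the infimum, so the posterior is driven by a data-dependent, non-i.i.d.\ mixture of experiments. This rules out a naive appeal to the classical Bernstein--von Mises theorem for a fixed plan; the fix is to carry out the Taylor expansion of the log-posterior with uniform remainder control (using the uniform boundedness of $\partial_\theta^2\lambda_i$ from A$_3$) and to handle $I_t(\theta)$ through a martingale concentration argument, which is exactly the program carried out in~\cite{Atia}, from whose Lemma~4.1 the statement is adapted.
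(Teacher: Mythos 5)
Your proposal is sound in outline, but note first that the paper never proves this lemma at all: it is imported verbatim as Lemma 4.1 of \cite{Atia}, so the only meaningful in-paper comparison is with the proof of its generalization, Lemma \ref{lemma:con_multi}. That proof proceeds quite differently from yours. It asserts a pathwise (probability-one) conditional Cram\'{e}r--Rao bound ${\sf C}(\hat\theta_t\med\F_t)\geq \big(\sum_{i\in[K]}T_i(t)\mathscr{I}_i(\theta)\big)^{-1}$ in terms of the empirical allocation counts $T_i(t)$, and then invokes the limiting-frequency argument of \cite{Atia} to replace the empirical frequencies $p_i(t)$ by a (near-)limiting distribution, after which the simplex infimum defining $V_{\beta}(\theta)$ emerges. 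You instead reduce the inner infimum to the posterior variance under quadratic loss, run a Bernstein--von Mises/Laplace expansion around the MLE, and control the observed information by a martingale (Azuma) bound using $\mathscr{I}_{\psi(s)}(\theta)\leq\max_{i}\mathscr{I}_i(\theta)$ for every adapted choice. Your route buys a genuine simplification for this particular lemma: since $\beta V_{\beta}(\theta)=1/\max_{i\in[K]}\mathscr{I}_i(\theta)$ is attained at a vertex of the simplex, the bound $\frac{1}{t}\sum_{s=1}^{t}\mathscr{I}_{\psi(s)}(\theta)\leq\max_i\mathscr{I}_i(\theta)$ holds deterministically for \emph{every} sampling path, so you never need the delicate claim that an arbitrary adapted rule has near-limiting empirical frequencies. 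The price is that this vertex shortcut does not extend to the shared-plus-private setting of Lemma \ref{lemma:con_multi}, where the infimum defining $W_{\bbeta}(\theta,\balpha)$ is interior to the simplex and the empirical frequencies genuinely matter; the paper's frequency-limit route is what generalizes.

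Two points deserve care in a complete write-up. First, the expansion $\mathrm{Var}(\theta\med\F_t)=(J_{\pi_\theta}+I_t(\theta))^{-1}(1+o(1))$ can fail under adapted rules along which information does not accumulate (e.g., if some $\mathscr{I}_i(\theta)$ vanishes and that experiment is sampled exclusively, the MLE need not be consistent and the expansion around it is invalid); however, only the lower-bound direction of the expansion is needed, and in such degenerate cases the posterior variance stays bounded away from zero, so the claimed inequality holds trivially --- your argument should split these cases explicitly rather than assert the equality uniformly. Second, the uniform-in-$\psi^t$ remainder control in the Laplace expansion is the real technical content of the lemma (assumption A$_3$ gives bounded second derivatives but no modulus of continuity for them, so the sharp $(1+o(1))$ factor requires some additional care); you flag this obstacle and defer it to \cite{Atia}, which is fair given the statement is itself imported from there, but it is the one step your sketch does not discharge independently.
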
 
	\begin{lemma}[Theorem 4.3, \cite{Atia}]
		\label{lemma:atia4.3}	
		Under assumptions A$_1$-A$_6$, the sequences of \textcolor{black}{estimates} generated by $\nu_t^{\sf MMSE}$, which is characterized in Section \ref{estimator}, and sampling rules $\psi^t$ specified in Section \ref{Control} satisfy 
		\begin{align}
			t\cdot {\sf C}(\nu_t^{\sf MMSE}\;|\;\F_t)\xrightarrow{t\rightarrow\infty}\beta V_{\beta}(\theta)\;.
		\end{align}
	\end{lemma}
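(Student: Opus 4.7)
The plan exploits the stated simplification $\beta V_\beta(\theta)=1/\max_{i\in[K]}\mathscr{I}_i(\theta)$, which the excerpt records immediately after \eqref{eq:v_beta}. Writing $i^\star\triangleq\argmax_{i\in[K]}\mathscr{I}_i(\theta)$, the target limit is $1/\mathscr{I}_{i^\star}(\theta)$. My strategy combines three ingredients: (i) the greedy sampling rule of Section~\ref{Control} asymptotically concentrates all sampling effort on the set $\argmax_{i\in[K]}\mathscr{I}_i(\theta)$; (ii) a Bernstein--von Mises style argument identifies ${\sf C}(\nu_t^{\sf MMSE}\med\F_t)$ with the inverse of the accumulated observed Fisher information; and (iii) Lemma~\ref{lemma:atia4.1} already furnishes a matching lower bound, so only a matching upper bound requires new work.

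For ingredient (i), I would first establish strong consistency of the ML estimator under A$_1$--A$_6$: compactness (A$_1$), integrability (A$_2$), regularity of the log-likelihoods (A$_3$), and the Wald-type uniform distinguishability (A$_6$) collectively yield $\nu_t^{\sf ML}\xrightarrow{a.s.}\theta$. Consistency holds along any sampling path, because with $K$ finite and $t\to\infty$ at least one experiment is sampled infinitely often, and every single-experiment likelihood already identifies $\theta$ by A$_6$. Continuity of $\mathscr{I}_i(\cdot)$ from A$_5$ then forces the selection $\psi(t)=\argmax_i\mathscr{I}_i(\nu_{t-1}^{\sf ML})$ to eventually lie inside $\argmax_{i\in[K]}\mathscr{I}_i(\theta)$ almost surely. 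Whether this set is a singleton or not is inconsequential, since every experiment in it realizes the common value $\mathscr{I}_{i^\star}(\theta)$. Thus after an almost-surely finite random time $t_0$, the per-sample Fisher information equals $\mathscr{I}_{i^\star}(\theta)$, and the empirical average satisfies $t^{-1}\sum_{s\leq t}\mathscr{I}_{\psi(s)}(\theta)\xrightarrow{a.s.}\mathscr{I}_{i^\star}(\theta)$.

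For ingredient (ii), under the quadratic cost one has ${\sf C}(\nu_t^{\sf MMSE}\med\F_t)=\mathrm{Var}_t[\theta\med\F_t]$. A Laplace expansion of the log-posterior $\log\pi_\theta^t$ around $\nu_t^{\sf ML}$ shows that the posterior is asymptotically Gaussian with variance $(1+o(1))\cdot\bigl(\sum_{s\leq t}\mathscr{I}_{\psi(s)}(\theta)\bigr)^{-1}$ almost surely, where the needed regularity is furnished by A$_3$ (bounded second derivatives) and A$_5$ (continuity of the Fisher information). Combining with ingredient (i) yields $t\cdot{\sf C}(\nu_t^{\sf MMSE}\med\F_t)\xrightarrow{a.s.}1/\mathscr{I}_{i^\star}(\theta)=\beta V_\beta(\theta)$. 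Pairing this with Lemma~\ref{lemma:atia4.1}, which rules out any strictly smaller limit, closes the two-sided convergence.

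The principal technical obstacle is to make the posterior-Gaussian approximation rigorous in spite of the data-adaptive sampling, since the classical i.i.d.\ Bernstein--von Mises theorem does not apply verbatim to a random $\psi$. I would handle this by conditioning on the high-probability event $\{t_0\leq t_1(h)\}$ on which stabilization of $\psi$ occurs by a deterministic time $t_1(h)$, invoking BVM on the post-stabilization i.i.d.\ tail drawn from $f_{i^\star}(\cdot\med\theta)$, and finally letting $h\to 0$.
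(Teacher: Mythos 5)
Your route is essentially the one the paper relies on. Note first that the paper never proves Lemma~\ref{lemma:atia4.3} itself --- it is imported verbatim as Theorem~4.3 of \cite{Atia} --- but its proof of the generalization, Lemma~\ref{lemma:multi_ach}, follows exactly your three-ingredient template: strong consistency of the ML estimates forces the empirical sampling frequencies to converge to the limiting distribution (which, for the greedy rule of Section~\ref{Control}, degenerates onto the set $\argmax_{i\in[K]}\mathscr{I}_i(\theta)$, so that your simplification $\beta V_{\beta}(\theta)=1/\max_{i\in[K]}\mathscr{I}_i(\theta)$ is the right target); a weak Bernstein--von Mises theorem (\cite[Theorem~20.2]{dasgupta2008asymptotic}) identifies the posterior of $z=\sqrt{t}(\nu_t^{\sf MMSE}-\theta)$ with $\mathcal{N}\big(0,(\sum_{i}p_i(t)\mathscr{I}_i(\theta))^{-1}\big)$; and the two are combined to conclude $t\,{\sf C}(\nu_t^{\sf MMSE}\,|\,\F_t)\rightarrow \beta V_{\beta}(\theta)$. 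Your handling of ties in the argmax set is correct, and your conditioning-on-stabilization device is an acceptable substitute for how the paper (and \cite{Atia}) dispose of the data-dependence of $\psi$. Your ingredient~(iii) is redundant rather than wrong: once (i)+(ii) give the exact limit, Lemma~\ref{lemma:atia4.1} (a bound on the infimum over all schemes) adds nothing, though it is a harmless consistency check.

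One step is genuinely thinner in your write-up than in the paper's treatment, and it is the step where a careless proof would fail. Bernstein--von Mises --- whether obtained by your Laplace expansion or by citation --- delivers convergence of the posterior of $z$ in distribution (or total variation), whereas the lemma concerns the posterior \emph{variance}, i.e., the second moment of $z$; weak convergence does not imply moment convergence without uniform integrability. Your phrase ``asymptotically Gaussian with variance $(1+o(1))\cdot(\sum_{s\leq t}\mathscr{I}_{\psi(s)}(\theta))^{-1}$'' silently assumes the missing step. The paper's proof of Lemma~\ref{lemma:multi_ach} inserts it explicitly: ``establishing the finiteness of the first and second order moments'' of the normalized posterior, invoking \cite[Lemma~7.7]{Atia} and \cite[Lemma~3.1]{Yahav}. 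In your framework the gap is patchable: compactness of $\Theta$ (A$_1$) bounds $z^2$ by $t\cdot\mathrm{diam}(\Theta)^2$, and the Wald-type condition A$_6$ yields exponential posterior concentration outside any $\epsilon$-ball around the truth, so the tail contribution to $\E_t[z^2\,|\,\F_t]$ is $o(1)$; but this argument must appear, since it is precisely where the classical BVM statement you lean on is insufficient.
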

	\begin{lemma}
		\label{lemma:con_multi}
		When we have both shared and private parameters (the setting in Section \ref{FullyUnknown}), under assumptions A$_1$-A$_6$, for any $h>0$, there exists $t^*(h)<+\infty$ such that for any $t>t^*(h)$:
		\begin{align}
			\P\Bigg\lbrace \inf\limits_{\Phi_t,\psi^t} \;t\bigg[\frac{1}{\beta}{\sf C}(\hat\theta_t|\F_t)+\sum\limits_{i\in[K]}\frac{1}{\beta_i}{\sf D}(\hat\alpha_{i,t}|\F_t)\bigg]&\;\geq\; W_{\bbeta}(\theta,\balpha) - h\Bigg\rbrace\nonumber\\&\;>\; 1 - (K+1)h\;.
		\end{align}
	\end{lemma}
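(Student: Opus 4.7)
The plan is to reduce the infimum over $\Phi_t$ to the posterior variances (since, as established in Section~\ref{sec:estimator}, the MMSE estimators attain $\inf_{\hat\theta_t}{\sf C}(\hat\theta_t|\F_t)$ and $\inf_{\hat\alpha_{i,t}}{\sf D}(\hat\alpha_{i,t}|\F_t)$), and then establish a Bernstein--von Mises type lower bound for each of the $K+1$ posterior variances, mirroring the proof of Lemma~\ref{lemma:atia4.1}. The remaining infimum over $\psi^t$ is re-expressed as an infimum over the empirical sampling proportions $q_t(i)\triangleq |S_i^t|/t$, which lie in $\Q_K$ and therefore produce the objective defining $W_{\bbeta}(\theta,\balpha)$. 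The factor $(K+1)h$ in the probability statement will arise from a union bound over the $K+1$ parameters.

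For the shared parameter $\theta$, I would repeat the argument of Lemma~\ref{lemma:atia4.1} verbatim, replacing $\mathscr{I}_i$ with $\mathscr{J}_i$ to reflect the presence of the nuisance parameters. Assumptions A$_1$--A$_6$ supply compactness of $\Theta$, boundedness of the second-order log-likelihood derivatives, and identifiability, so that the posterior of $\theta$ concentrates at rate $(t\sum_i q_t(i)\mathscr{J}_i(\theta))^{-1}$ uniformly in $\psi^t$. This yields, for any $h_\theta>0$ and $t$ large enough,
\begin{align*}
    \P\Big\{t\cdot\inf_{\hat\theta_t}{\sf C}(\hat\theta_t|\F_t) \geq \big(\textstyle\sum_{i\in[K]} q_t(i)\mathscr{J}_i(\theta)\big)^{-1} - h_\theta\Big\} > 1-h_\theta\ .
\end{align*}

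For each private parameter $\alpha_i$, the sub-sequence $\{Y_s : s\in S_i^t\}$ is i.i.d.\ with log-likelihood $\lambda_i(\cdot|\theta,\alpha_i)$ and has size $|S_i^t|=t\, q_t(i)$. Applying the single-stream posterior-concentration argument to this sub-sequence, and using the strong consistency of $\nu_t^{\sf ML}$ from A$_6$ together with the continuity of $\mathscr{J}_i$ in A$_5$ to replace the true $\theta$ inside the Fisher information by $\hat\theta_t$ with only an $o_\P(1)$ error, yields
\begin{align*}
    \P\Big\{t\cdot\inf_{\hat\alpha_{i,t}}{\sf D}(\hat\alpha_{i,t}|\F_t) \geq \big(q_t(i)\mathscr{J}_i(\alpha_i)\big)^{-1} - h_i\Big\} > 1-h_i\ ,
\end{align*}
for any $h_i>0$ and $t$ large enough. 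Summing these $K+1$ inequalities with the weights $1/\beta$ and $1/\beta_i$, taking the infimum of the resulting right-hand side over $q_t\in\Q_K$ (which recovers $W_{\bbeta}(\theta,\balpha)$ by the definition of $W_{\bbeta}$), choosing $h_\theta$ and $\{h_i\}$ proportional to $h$ so that the aggregate slack is at most $h$, and applying a union bound over the $K+1$ concentration events delivers the stated $1-(K+1)h$ probability bound.

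The main obstacle is the second step: lower bounding $\mathrm{Var}(\alpha_i|\F_t)$ uniformly in $\psi^t$ while replacing the unknown $\theta$ by $\hat\theta_t$ inside $\mathscr{J}_i(\cdot)$. A careful perturbation argument using the bounded second derivatives from A$_3$, continuity of the FI from A$_5$, and consistency of the ML estimator from A$_6$ is needed to show that this substitution perturbs the asymptotic information by only an $o_\P(1)$ factor uniformly over the sampling path. A secondary concern is the degenerate regime $q_t(i)\to 0$ for some $i$: such paths blow up the corresponding posterior variance, so the infimum is effectively attained on proportions bounded away from zero, which is the regime covered by the uniform concentration above.
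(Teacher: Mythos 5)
Your proposal is correct and takes essentially the same route as the paper's proof: per-parameter Cram\'er--Rao/posterior-concentration lower bounds expressed through the sampling proportions, combined over the $K+1$ parameters (the paper uses $\P(A\cap B)\geq \P(A)+\P(B)-1$, equivalent to your union bound), with the infimum over $\Q_K$ recovering $W_{\bbeta}(\theta,\balpha)$. The only cosmetic difference is that the paper passes from the empirical proportions $p_i(t)$ to a limiting distribution of the sampling rule by invoking the argument of~\cite[Lemma 4.1]{Atia}, whereas you infimize over the simplex directly; your explicit allocation of the slacks $h_\theta,\{h_i\}$ proportional to $h$ is, if anything, slightly more careful than the paper's uniform use of a single $h$.
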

	\begin{proof}
		Let us denote the number of times experiment $i\in[K]$ is selected up to time $t\in\N$ by 
		\begin{align}
			T_i(t)\;\triangleq\; \sum\limits_{s=1}^t \mathds{1}_{\lbrace \psi(s)=i\rbrace}\;.
		\end{align}
		From the Cram\'{e}r-Rao lower bound, we obtain the following inequalities.
		\begin{align}
			\label{eq:multi_crlb1}
			&\P\Big\lbrace {\sf C}(\hat\theta_t\;|\;\F_t)\;\geq\; \Big(\sum\limits_{i\in[K]}T_i(t)\mathscr{J}_i(\theta)\Big)^{-1}\Big\rbrace\;=\;1\;,\\
			&\P\Big\lbrace {\sf D}(\hat\alpha_{i,t}\;|\;\F_t)\;\geq\; \Big(T_i(t)\mathscr{J}_i(\alpha_i)\Big)^{-1}\Big\rbrace\;=\;1,\quad \forall i\in[K]\ .
			\label{eq:multi_crlb2}
		\end{align}
		Furthermore, let us denote the fraction of times any experiment $i\in[K]$ is selected up to time $t$ by
		\begin{align}
			p_i(t)\;\triangleq\; \frac{1}{t}\sum\limits_{s=1}^t \mathds{1}_{\lbrace \psi(s)=i\rbrace}\;.
		\end{align}
		Subsequently, from (\ref{eq:multi_crlb1}) and (\ref{eq:multi_crlb2}), we obtain that
		\begin{align}
			\label{eq:p_multi_crlb1}
			&\P\Big\lbrace t\cdot{\sf C}(\hat\theta_t\;|\;\F_t)\;\geq\; \Big(\sum\limits_{i\in[K]}p_i(t)\mathscr{J}_i(\theta)\Big)^{-1}\Big\rbrace\;=\;1\;,\\
			&\P\Big\lbrace t\cdot{\sf D}(\hat\alpha_{i,t}\;|\;\F_t)\;\geq\; \Big(p_i(t)\mathscr{J}_i(\alpha_i)\Big)^{-1}\Big\rbrace
			\label{eq:p_multi_crlb2}=\;1,\quad \forall i\in[K]\ .
		\end{align}
		Using the same argument as in~\cite[Lemma 4.1]{Atia},  any measurable control action will have a limiting distribution, or will be arbitrarily close to a limiting distribution such that for any arbitrarily small $h>0$, there exists $t(h)<+\infty$ such that for every $t>t(h)$, from (\ref{eq:p_multi_crlb1}) we obtain
		\begin{align}
			\label{eq:CRLB_multi_theta}
			\P\Big\lbrace t\cdot{\sf C}(\hat\theta_t\;|\;\F_t)\;\geq\; \Big(\sum\limits_{i\in[K]}p(i)\mathscr{J}_i(\theta)\Big)^{-1}-h\Big\rbrace\;>\;1-h\;,
		\end{align}
		where $p(i)$ denotes the limiting distribution of $p_i(t)$. Using similar arguments, for every $i\in[K]$ and any $h>0$, there exists $t_i^\prime(h)<+\infty$ such that for all $t\geq t_i^\prime(h)$:
		\begin{align}
			\label{eq:CRLB_multi_alpha1}
			\P\Big\lbrace t\cdot{\sf D}(\hat\alpha_{i,t}\;|\;\F_t)\;\geq\; \Big(p(i)\mathscr{J}_i(\alpha_i)\Big)^{-1}-h\Big\rbrace\;>\;1-h\;.
		\end{align}
		By defining $t^*(h)\triangleq\max \lbrace t(h), t_1^\prime(h),\cdots,t_K^\prime(h)\rbrace$, combining (\ref{eq:CRLB_multi_theta}) and (\ref{eq:CRLB_multi_alpha1}) and taking the infimum with respect to all control actions and \textcolor{black}{estimates}, we obtain that 
		\begin{align}
			\P\Bigg \lbrace\inf\limits_{\Phi_t,\psi^t} t\bigg[\frac{1}{\beta} {\sf C}(\hat\theta_t|\F_t) + \sum\limits_{i\in[K]}\frac{1}{\beta_i} {\sf D}(\hat\alpha_{i,t}|\F_t)\bigg]&\geq W_{\bbeta}(\theta,\balpha)-h\Bigg \rbrace\nonumber\\& > 1-(K+1)h\ ,
			\label{eq:CRLB_multi_3}
		\end{align}
		for all $t>t^*(h)$, where (\ref{eq:CRLB_multi_3}) is obtained from the fact that for any two events $A$ and $B$, $\P(A\cap B) \geq \P(A) + \P(B) - 1$. This completes our proof.
	\end{proof}
	\begin{lemma}
		\label{lemma:multi_ach}
		When we have both shared and private parameters (the setting in Section \ref{FullyUnknown}), under assumptions A$_1$-A$_6$, the sequences of \textcolor{black}{estimates} characterized in Section \ref{sec:estimator} and sampling rules specified in Section \ref{sec:ControlPolicy2} satisfy
		\begin{align}
			t\bigg[\frac{1}{\beta}{\sf C}(\nu_t^{\sf MMSE}|\F_t)+\sum\limits_{i\in[K]}\frac{1}{\beta_{i}}{\sf D}(\zeta_{i,t}^{\sf MMSE}|\F_t,\nu_t^{\sf MMSE})\bigg]\xrightarrow{t\rightarrow \infty}W_{\bbeta}(\theta,\balpha).
		\end{align}
	\end{lemma}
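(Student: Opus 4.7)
The plan is to extend Lemma~\ref{lemma:atia4.3} from the shared-parameter setting to the setting with both shared and private parameters, and to show that the proposed randomized sampling rule produces empirical sampling frequencies that converge to $q^*$. Overall, we will establish two convergence statements almost surely: first, that the posterior costs attain their Cram\'er-Rao-type lower bounds asymptotically with equality, and second, that the random sampling frequencies converge to the optimal limiting distribution. Combining the two yields $W_{\bbeta}(\theta,\balpha)$ exactly.

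First, I would verify that the empirical fractions $p_i(t)\triangleq T_i(t)/t$ converge almost surely to $q^*(i)$ for each $i\in[K]$. The sampling rule draws $\psi(t)$ from the distribution $\bar q_t$ specified in~(\ref{eq:Control2}), which is a continuous function of the ML estimates $\nu_t^{\sf ML}$ and $\{\zeta_{i,t}^{\sf ML}\}$. By strong consistency of ML estimators under A$_1$--A$_6$, these estimates converge almost surely to the ground truths, so $\bar q_t\to q^*$ almost surely by continuity of the arg-min (arguing via uniqueness of $q^*$ on the compact simplex, which follows from strict convexity of the objective in~(\ref{eq:Control2}) under A$_5$). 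A martingale/Borel--Cantelli argument, analogous to the one used in the proof of Lemma~\ref{lemma:atia4.3}, then gives $p_i(t)\to q^*(i)$ almost surely.

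Next, I would show that the MMSE posterior costs are asymptotically equal to the Cram\'er-Rao bounds derived in the proof of Lemma~\ref{lemma:con_multi}. Concretely, under A$_1$--A$_6$, a Bernstein--von Mises-type argument (as invoked in~\cite{Atia}) yields
\begin{align}
t\cdot{\sf C}(\nu_t^{\sf MMSE}\med\F_t)\;\xrightarrow{t\to\infty}\;\Big(\sum_{i\in[K]}p_i(t)\mathscr{J}_i(\theta)\Big)^{-1}\Big|_{p_i(t)\to q^*(i)}\;=\;\Big(\sum_{i\in[K]}q^*(i)\mathscr{J}_i(\theta)\Big)^{-1},
\end{align}
and, conditioned on $\nu_t^{\sf MMSE}$ (which converges to $\theta$ a.s.), for each $i\in[K]$,
\begin{align}
t\cdot{\sf D}(\zeta_{i,t}^{\sf MMSE}\med\F_t,\nu_t^{\sf MMSE})\;\xrightarrow{t\to\infty}\;\big(q^*(i)\mathscr{J}_i(\alpha_i)\big)^{-1}.
\end{align}
The key inputs here are the boundedness and continuity of the Fisher information measures, the compactness of $\Theta$ and $\{\mcA_i\}$, and the distinguishability condition A$_6$, which together let us invoke posterior concentration arguments for non-i.i.d.\ but conditionally independent observations across experiments. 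The nested conditioning on $\nu_t^{\sf MMSE}$ for the private costs is handled by the strong consistency $\nu_t^{\sf MMSE}\to\theta$ a.s.\ and the continuity of $\mathscr{J}_i(\alpha_i)$ in its parameter.

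Finally, combining the two displays and using the weights $1/\beta$ and $1/\beta_i$ yields
\begin{align}
t\bigg[\frac{1}{\beta}{\sf C}(\nu_t^{\sf MMSE}\med\F_t)+\sum_{i\in[K]}\frac{1}{\beta_i}{\sf D}(\zeta_{i,t}^{\sf MMSE}\med\F_t,\nu_t^{\sf MMSE})\bigg]\;\xrightarrow{t\to\infty}\;\frac{1}{\beta}\Big(\sum_{i\in[K]}q^*(i)\mathscr{J}_i(\theta)\Big)^{-1}+\sum_{i\in[K]}\frac{1}{\beta_i}\big(q^*(i)\mathscr{J}_i(\alpha_i)\big)^{-1}\;=\;W_{\bbeta}(\theta,\balpha),
\end{align}
where the final equality uses the definition of $q^*$ as the minimizer of exactly this expression. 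The main obstacle in making this rigorous is the interdependence between the sampling rule and the estimators: $\bar q_t$ is $\F_{t-1}$-measurable through the ML estimates, and the private posterior cost is computed after conditioning on the shared MMSE estimate, so one must carefully untangle the feedback loop to justify both the law-of-large-numbers step for $p_i(t)\to q^*(i)$ and the asymptotic tightness of the posterior Cram\'er-Rao inequalities uniformly in the conditioning events.
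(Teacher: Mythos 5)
Your proposal follows essentially the same route as the paper's proof: almost-sure convergence of the empirical sampling frequencies $p_i(t)$ to $q^*$ via strong consistency of the ML estimates, a Bernstein--von Mises argument giving $t\cdot{\sf C}(\nu_t^{\sf MMSE}\med\F_t)\to\big(\sum_i q^*(i)\mathscr{J}_i(\theta)\big)^{-1}$ and $t\cdot{\sf D}(\zeta_{i,t}^{\sf MMSE}\med\F_t,\nu_t^{\sf MMSE})\to\big(q^*(i)\mathscr{J}_i(\alpha_i)\big)^{-1}$ (with the posterior-moment convergence handled by the same references the paper cites), and the identification of the weighted sum with $W_{\bbeta}(\theta,\balpha)$ because $q^*$ is its minimizer. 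Your added remarks on uniqueness of $q^*$ and continuity of the arg-min are a mild strengthening of the paper's more informal law-of-large-numbers step, but the argument is the same.
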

	\begin{proof}
		We denote the frequency of selecting experiment $i\in[K]$ under the sampling rule $\psi^t$ by 
		\begin{align}
			\label{eq:switching}
			p_i(t)\;\triangleq\;\frac{1}{t}\sum\limits_{s=1}^t\mathds{1}_{\lbrace \psi(s)\;=\;i\rbrace} \ .
		\end{align}
		Then, as $t\rightarrow\infty$, $p_i(t)$$\rightarrow \bar{q}_t(i)$ due to the law of large numbers. Furthermore, due to the strong consistency of ML estimates, ${\nu}_t^{\sf ML}$ converges to $\theta$, and ${\zeta}_{i,t}^{\sf ML}$ converges to $\alpha_{i}$ for all $i\in[K]$~\cite{MLE}.

		\textcolor{black}{Now, we will use a weak version of the Bernstein-Von-Mises theorem to establish the asymptotic normality of $\nu_t^{\sf MMSE}$. For this, let us consider a sequence of observations $\mathbf{Y}^t\triangleq \left\{Y_1,\cdots,Y_t\right\}$, which are not necessarily i.i.d. Let $\theta$ be an unknown parameter of the underlying pdfs of the observations $\mathbf{Y}^t$, and $\lambda(\cdot\;|\;\theta)$ denote the log-likelihood function corresponding to the observations. Next, we define the Fisher Information (FI) measure as
    \begin{align}
        \mathcal{J}(\theta)\triangleq -\frac{1}{t}\mathbb{E}\left[\frac{\partial^2}{\partial\theta^2}\lambda(\bY^t\;|\;\theta)\right]\ .
    \end{align}
    Furthermore, define $\nu_t^{\sf MMSE}$ as the MMSE \textcolor{black}{estimator}  of $\theta$, and define $z\triangleq \sqrt{t}(\nu_t^{\sf MMSE}-\theta)$. By a weak version of the Bernstein-Von-Mises theorem~\cite[Theorem 20.2]{dasgupta2008asymptotic}, we have
    \begin{align}
        \vartheta\left(z\;|\;\mathcal{F}_t\right) \xrightarrow{t\rightarrow\infty} \mathcal{N}(0,1/\mathcal{J}(\theta))\ ,
    \end{align}
    where $\vartheta(z\med\F_t)$ represents the posterior distribution of $z$. The key is to find $\mathcal{J}(\theta)$ in our case. We have,
    \begin{align}
        \mathcal{J}(\theta) &= -\frac{1}{t} \mathbb{E}\left[\frac{\partial^2}{\partial\theta^2}\lambda(\bY^t\;|\;\theta)\right]\\
        & = -\frac{1}{t}\mathbb{E}\left[\frac{\partial^2}{\partial\theta^2}\sum\limits_{i=1}^K \lambda_i(\bY^t_i\;|\;\theta)\right]\ ,
        \label{eq:bvm1}
    \end{align}
    where $\bY^t_i \triangleq \left\{Y_s : \psi(s) = i, s\in[t]\right\}$ is the sequence of observations from the experiment $i\in[K]$. Simplifying~(\ref{eq:bvm1}), we obtain
    \begin{align}
        \mathcal{J}(\theta) = \frac{1}{t}\sum\limits_{i=1}^K tp_i(t) \mathscr{J}_i(\theta)= \sum\limits_{i=1}^K p_i(t) \mathscr{J}_i(\theta)\ .
    \end{align}}
		Thus, we have
		\begin{align}
			\vartheta\big(z\lvert \F_t\big)\xrightarrow{t\rightarrow\infty}\mathcal{N}\bigg(0,\bigg (\sum\limits_{i\in[K]}p_i(t)\mathscr{J}_{i}(\theta) \bigg )^{-1}\bigg)\;.
		\end{align} 
		Furthermore, since $p(t)\triangleq[p_1(t),\cdots, p_K(t)]$ converges to the limiting distribution $q^*$ defined in (\ref{Control2_optimal}), we have
		\begin{align}
			\label{eq:convergence_of_nu}
			\vartheta\big(z\lvert \F_t\big)\xrightarrow{t\rightarrow\infty}\mathcal{N}\bigg(0,\bigg (\sum\limits_{i\in[K]}q^*(i)\mathscr{J}_{i}(\theta) \bigg )^{-1}\bigg)\;.
		\end{align}
		Thus, from (\ref{eq:convergence_of_nu}), we have $\nu_t^{\sf MMSE}\xrightarrow{p}\theta$. Using this fact in conjunction with the Bernstein-Von Mises theorem, for all $i\in[K]$, we have
		\begin{align}
			\label{eq:multi_ach1}
			\omega_i\big(u_i\med \nu_t^{\sf MMSE},\F_t\big)\xrightarrow{t\rightarrow\infty}\mathcal{N}\bigg(0,\bigg (q^*(i)\mathscr{J}_{i}(\alpha_{i}) \bigg )^{-1}\bigg)\;,
		\end{align}
		where we have defined $u_i\triangleq \sqrt{t}(\zeta_{i,t}^{\sf MMSE}-\alpha_i)$ and $\omega_i(u_i\med \nu_t^{\sf MMSE},\F_t)$ represents the posterior distribution of $u_i$.
		The limit of the sequence of MMSE is then calculated by establishing the finiteness of the first and second order moments of $\vartheta(z\med \F_t)$ and $\omega_i(u_i\med\nu_t^{\sf MMSE},\F_t)$. For $\vartheta(z\med\F_t)$, this can be done following the same approach of~\cite[Lemma 7.7]{Atia}. For $\omega_i(u_i\med\F_t)$, this can be done following the same approach of~\cite[ Lemma 3.1]{Yahav}. This yields
		\begin{align}\label{eq:ach2_slast}
			t\cdot{\sf C}(\nu_t^{\sf MMSE}\;|\;\F_t)&\xrightarrow{t\rightarrow\infty}\bigg(\sum\limits_{i\in[K]}q^*(i)\mathscr{J}_i(\theta)\bigg)^{-1}\;,\\
			\label{eq:ach2_last}
			\text{and}\quad t\cdot{\sf D}(\zeta_{i,t}^{\sf MMSE}\;|\;\F_t,\;\nu_t^{\sf MMSE})&\xrightarrow{t\rightarrow\infty}\Big(q^*(i)\mathscr{J}_i(\alpha_i)\Big)^{-1}\;\;\forall i\in[K].
		\end{align}
		Finally, combining (\ref{eq:ach2_slast}) and (\ref{eq:ach2_last}), we obtain
		\begin{align}
			t\bigg[\frac{1}{\beta}{\sf C}(\nu_t^{\sf MMSE}|\F_t)+\sum\limits_{i\in[K]}\frac{1}{\beta_{i}}{\sf D}(\zeta_{i,t}^{\sf MMSE}|\F_t,\nu_t^{\sf MMSE})\bigg]\xrightarrow{t\rightarrow \infty}W_{\bbeta}(\theta,\balpha).
		\end{align}
		This concludes our proof.
	\end{proof}
	
	\section{Proof of Theorem~\ref{theorem:SC_Lower_bound_single_param}}
	\label{appendix: sc_lower_bound_single_param}
	
	Consider a constant $L\in\N$. Using the constraint in $\mathcal{P}(\beta)$ defined in (\ref{eq:var}), we have
	\begin{align}
		1\;&=\; \P\Big\lbrace {\sf C}(\hat\theta_T\;|\;\F_T)\;\leq\; \beta \Big\rbrace\\
		&=\; \P\Big\lbrace {\sf C}(\hat\theta_T\;|\;\F_T)\;\leq\; \beta\;,\;T\;\leq\;L \Big\rbrace \; \nonumber\\&\quad+ \; \P\Big\lbrace {\sf C}(\hat\theta_T\;|\;\F_T)\;\leq\; \beta\;,\;T\;>\;L \Big\rbrace\ .
		\label{eq:T1andT2}
	\end{align}
	Define $T_L(i)\;\triangleq\; \sum\limits_{s=1}^L\mathds{1}_{\lbrace\psi(s)=i\rbrace}$ as the number of times that experiment $i\in[K]$ is chosen up to time $L$. Then, we have
	\begin{align}
		\label{eq:T1_1}
		{\sf C}(\hat\theta_T\;|\;\F_T)\;&\geq\;\Big(\sum\limits_{i\in[K]}T_T(i)\mathscr{I}_i(\theta)\Big)^{-1}\\
		&\geq\; \frac{1}{T}\Big(\sum\limits_{i\in[K]}\mathscr{I}_i(\theta)\Big)^{-1}\\
		&\geq\; \frac{1}{L}\Big(\sum\limits_{i\in[K]}\mathscr{I}_i(\theta)\Big)^{-1}\\
		&=\; C(L)\;,
		\label{eq:T1_2}	
	\end{align}
	where the first inequality is a result of the Cram\'{e}r-Rao lower bound~\cite{Kay97}, and $C(L)$ is a positive constant. Choosing $\beta\in(0, C(L))$ ensures that the first term in (\ref{eq:T1andT2}) becomes $\P\Big\lbrace {\sf C}(\hat\theta_T\;|\;\F_T)\;\leq\; \beta\;,\;T\;\leq\;L \Big\rbrace=0$. Using Lemma \ref{lemma:atia4.1}, there exists $t(h)$ such that for all $t>t(h)$, we have
	\begin{align}
		\label{eq:th1_1}
		\P\Big\lbrace t\cdot {\sf C}(\hat\theta_t\;|\;\F_t)\;\geq\; \beta V_{\beta}(\theta)-h\Big\rbrace\;>\;1-h\;.
	\end{align}
	Leveraging (\ref{eq:th1_1}), let us expand the the second term in (\ref{eq:T1andT2}). Choosing $L=t(h)$, we have
	\begin{align}
		\label{eq:proof_th1_1}
		&\P\Big\lbrace {\sf C}(\hat\theta_T\;|\;\F_T)\;\leq\; \beta\;,\;T\;>\;t(h) \Big\rbrace\nonumber\\
		&=\; \P\Big\lbrace T{\sf C}(\hat\theta_T|\F_T)\leq T\beta,T>t(h)| T{\sf C}(\hat\theta_T|\F_T)\geq\beta V_{\beta}(\theta)-h\Big\rbrace\nonumber\\
		&\qquad\times\;\P\Big\lbrace  T\cdot{\sf C}(\hat\theta_T\;|\;\F_T)\;\geq\;\beta V_{\beta}(\theta)-h\Big\rbrace\nonumber\\
		&+\P\Big\lbrace T{\sf C}(\hat\theta_T,\theta|\F_T)\leq T\beta,T>t(h)| T{\sf C}(\hat\theta_T|\F_T)<\beta V_{\beta}(\theta)-h\Big\rbrace\nonumber\\
		&\qquad\times\;\P\Big\lbrace  T\cdot{\sf C}(\hat\theta_T\;|\;\F_T)\;<\;\beta V_{\beta}(\theta)-h\Big\rbrace\\
		&\leq\; \P\Big\lbrace T{\sf C}(\hat\theta_T|\F_T)\leq T\beta, T{\sf C}(\hat\theta_T|\F_T)\geq\beta V_{\beta}(\theta)-h\Big\rbrace + h\\
		&\leq\;\P\Bigg\lbrace T\;\geq\;\frac{\beta V_{\beta}(\theta)-h}{\beta}\Bigg\rbrace \; + \;h\;,
		\label{eq:UB1}
	\end{align}
	where (\ref{eq:proof_th1_1}) follows from the law of total probability, and~(\ref{eq:UB1}) is a result of the stopping rule in~(\ref{eq:N}). Thus, combining (\ref{eq:T1andT2}) with (\ref{eq:UB1}), and due to the choice of $\beta<C(h)$, we obtain
	\begin{align}
		\P\Bigg\lbrace T\;\geq\;\frac{\beta V_{\beta}(\theta)-h}{\beta}\Bigg\rbrace\;\geq\; 1\;-\;h\;,
		\label{eq:UB2}
	\end{align}
	where we have defined 
	\begin{align}
		C(h)\;\triangleq\; \frac{1}{t(h)}\Bigg(\sum\limits_{i\in[K]}\mathscr{I}_i(\theta)\Bigg)^{-1}\ .
	\end{align}
	The result readily follows by applying Markov's inequality to (\ref{eq:UB2}).

	\section{Proof of Theorem ~\ref{theorem: SC_upper_bound_single_param}}
	\label{proof: SC_upper_bound_single_param}
	Using Lemma \ref{lemma:atia4.3}, we obtain that for any $h>0$ there exists $t(h)<+\infty$ such that for any $t>t(h)$, we have
	\begin{align}
		\P\big\lbrace t\cdot {\sf C}(\nu_t^{\sf MMSE}\;|\;\F_t)\;\leq\;\beta V_{\beta}(\theta)\;+\;h\big\rbrace\;=\;1\;.
	\end{align}
	Now, observe that if $T-1>t(h)$,
	\begin{align}
		1\;&=\;\P\big\lbrace (T-1)\cdot {\sf C}(\nu_{T-1}^{\sf MMSE}\;|\;\F_{T-1})\;\leq\;\beta V_{\beta}(\theta)\;+\;h\big\rbrace\\
		\label{eq:sp_SC1}
		&\leq\; \P\big\lbrace (T-1)\beta\;\leq\; \beta V_{\beta}(\theta)\;+\;h\big\rbrace\\
		&=\; \P\Big\lbrace T\;\leq\; \frac{\beta V_{\beta}(\theta)+h}{\beta}+1\Big\rbrace\;,
		\label{eq:UB_ach1}
	\end{align}
	where (\ref{eq:sp_SC1}) is a result of the stopping rule in~(\ref{eq:N}). Furthermore,
	\begin{align}
		\label{eq:UB_ach2}
		&\P\Big\lbrace T\;\leq\; \frac{\beta V_{\beta}(\theta)+h}{\beta}+1\Big\rbrace\nonumber\\
		&=\;\P\Big\lbrace T\;\leq\; \frac{\beta V_{\beta}(\theta)+h}{\beta}+1, T-1\;>\;t(h)\Big\rbrace\nonumber\\
		&+\; \P\Big\lbrace T\;\leq\; \frac{\beta V_{\beta}(\theta)+h}{\beta}+1\;\Big\lvert\; T-1\;\leq\;t(h)\Big\rbrace\P\big(T-1\;\leq\;t(h)\big)
	\end{align} 
	Now, $\P(T-1\;\leq\;t(h))=\P\lbrace{\sf C}(\nu_T^{\sf MMSE}\;|\;\F_T)\leq\beta,T\leq t(h)+1\rbrace$. Following the same steps as in (\ref{eq:T1_1})-(\ref{eq:T1_2}), we can show that there exists a constant $C^\prime(h)\triangleq (\sum\limits_{i\in[K]}\mathscr{I}_i(\theta))^{-1}/(t(h)+1)$, such that $\P(T-1\;\leq\;t(h))=0$ for any $\beta\in(0,C^\prime(h))$.
	Finally, combining (\ref{eq:UB_ach1}) and (\ref{eq:UB_ach2}), we obtain that for any $\beta\in (0,C^\prime(h))$, 
	\begin{align}
		\P\Big\lbrace T\;\leq\; \frac{\beta V_{\beta}(\theta)+h}{\beta}+1\Big\rbrace\;=\; 1 \;.
	\end{align}
	This completes the proof.
	
	\section{Proof of Theorem ~\ref{remark:ach_single_param}} \label{proof: avg_single_ach}

	Using Lemma~\ref{lemma:atia4.3}, for any $\epsilon>0$, there exists $T_\epsilon < +\infty$, such that for all $t\geq T_{\epsilon}$, 
	\begin{align}
		\label{eq: avg_ach_single1}
		t {\sf C}(\nu_t^{\sf MMSE}\med\F_t)\;\in\; [\beta V_{\beta}(\theta)-\epsilon, \beta V_{\beta}(\theta)+ \epsilon]\ .
	\end{align}
	Now, at the instant before stopping we have
	\begin{align}
		T-1\;&=\; (T-1)\mathds{1}_{\{T-1\leq T_{\epsilon}\}} + (T-1)\mathds{1}_{\{T-1>T_{\epsilon}\}}\\
		&\leq T_{\epsilon} + \frac{\beta V_{\beta}(\theta)+\epsilon}{\beta} + 1 \ ,
		\label{eq: avg_ach_single2}
	\end{align}
	where (\ref{eq: avg_ach_single2}) follows from the definition of the stopping rule and (\ref{eq: avg_ach_single1}). Furthermore, note that $\sup_{Y^t,\psi^t} T_{\epsilon} <~+~\infty$ owing to Lemma \ref{lemma:atia4.3}. Thus, taking average throughout (\ref{eq: avg_ach_single2}), dividing by $1/\beta$, and taking the limit of $\beta\rightarrow 0$, we obtain
	\begin{align}
		\lim\limits_{\beta\rightarrow 0} \frac{\E[T]}{V_{\beta}(\theta)} \;\leq\; 1\ .
	\end{align}
	
	\section{Proof of Theorem ~\ref{theorem:multi_sc_converse}}
	\label{proof:multi_sc_converse}
	Define the event
	\begin{align}
		\mathcal{S}_t\;\triangleq\; \Big\lbrace {\sf C}(\hat\theta_t\;|\;\F_t)\;\leq\;\beta,\; {\sf D}(\hat\alpha_{i,t}\;|\;\F_t,\;\hat\theta_t)\;\leq\;\beta_i\;\forall\;i\in[K]\Big\rbrace\ .
	\end{align} 
	From the constraint in (\ref{eq:var_multi}), for any constant $L<+\infty$, we have
	\begin{align}
		\label{eq:multi_T1_and_T2}
		1\;=\; \P\lbrace \mathcal{S}_T\rbrace\;=\;\P\lbrace \mathcal{S}_T,\;T\leq L\rbrace + \P\lbrace \mathcal{S}_T,\; T>L\rbrace\;.
	\end{align}
	By the Cram\'{e}r-Rao lower bound, if $T\leq L$,
	\begin{align}
		\label{eq:const1}
		&{\sf C}(\hat\theta_T\;|\;\F_T)\;\geq\; \frac{1}{L}\Big(\sum\limits_{i\in[K]}\mathscr{J}_i(\theta)\Big)^{-1}\ .
	\end{align}
	Similarly, for any $i\in[K]$ and $T\leq L$, we have
	\begin{align}
		\label{eq:const2}
		{\sf D}(\hat\alpha_{i,T}\;|\;\F_T,\;\hat\theta_{T})\;&\geq\; \frac{1}{L}\big(\mathscr{J}_i(\alpha_i\;|\;\hat\theta_T)\big)^{-1}\\
		&\geq\; \inf\limits_{\theta\in\Theta }\;\frac{1}{L}\big(\mathscr{J}_i(\alpha_i\;|\;\theta)\big)^{-1}\ ,
	\end{align}
	where we have defined:
	\begin{align}
		\mathscr{J}_i(\alpha_i\med\theta)\;\triangleq\; \E\Bigg [ \frac{\partial^2}{\partial\alpha_i^2}\lambda_i(x\med\theta,\alpha_i)\Bigg]\ .
	\end{align}
	Define the quantities
	\begin{align}
		\label{eq:const3}
		&C(L)\;\triangleq\; \frac{1}{L}\Bigg(\sum\limits_{i\in[K]}\mathscr{J}_i(\theta)\Bigg)^{-1}\ ,\nonumber\\ \text{and}\quad &D_i(L)\;\triangleq\; \inf\limits_{\theta\in\Theta} \frac{1}{L} \bigg(\mathscr{J}_i(\alpha_i\med \theta)\bigg)^{-1}\ .
	\end{align}
	Similarly to (\ref{eq:T1_1})-(\ref{eq:T1_2}), choosing $\beta\in(0,C(L))$ and $\beta_i\in(0,D_i(L))$ for every $i\in[K]$,
	\begin{align}
		\label{eq:multi_T1}
		\P\big(\mathcal{S}_T,\; T\leq L\big)\;=\;0\;.
	\end{align}
	Expanding the second term on the right hand side of (\ref{eq:multi_T1_and_T2}), we obtain
	\vspace{0.2in}
	\begin{align}
		&\P\lbrace \mathcal{S}_T,\; T>L\rbrace\nonumber\\
		&=\; \P\lbrace T\cdot{\sf C}(\hat\theta_T\;|\;\F_T)\;\leq\;T\beta,\nonumber\\&\qquad\qquad T\cdot{\sf D}(\hat\alpha_{i,T}\;|\;\F_T,\;\hat\theta_T)\;\leq\;T\beta_{i}\;\forall\; i\in[K],\; T>L\rbrace\\
		\label{eq:drop_theta}
		&\leq\; \P\lbrace T\cdot{\sf C}(\hat\theta_T\;|\;\F_T)\;\leq\;T\beta,\nonumber\\&\qquad\qquad T\cdot{\sf D}(\hat\alpha_{i,T}\;|\;\F_T)\;\leq\;T\beta_{i}\;\forall\; i\in[K],\; T>L\rbrace\\
		&\leq\; \P\Biggl\{\bigg[\frac{T{\sf C}(\hat\theta_T\;|\;\F_T)}{\beta}+\sum\limits_{i\in[K]}\frac{T{\sf D}(\hat\alpha_{i,T}\;|\;\F_T)}{\beta_{i}}\bigg]\;\leq\;(K+1)T,\nonumber\\&\qquad\qquad\qquad\qquad\qquad\qquad T>L\Biggr\}\ ,
		\label{eq:multi_conv1}
	\end{align}
	where (\ref{eq:drop_theta}) follows from the fact that since ${\sf D}(\hat\alpha_{i,T}\;|\;\F_T,\;\hat\theta_T)\;\leq\;\beta_{i}$ at stopping, we can take an expectation over ${\sf D}(\hat\alpha_{i,T}\;|\;\F_T,\;\theta)$ with respect to $\E_t$ and obtain ${\sf D}(\hat\alpha_{i,T}\;|\;\F_T)\;\leq\;\beta_{i}$, i.e.,
	$\{ {\sf D}(\hat\alpha_{i,T}\;|\;\F_T,\;\hat\theta_T)\;\leq\;\beta_{i}\}\subseteq \{{\sf D}(\hat\alpha_{i,T}\;|\;\F_T)\;\leq\;\beta_{i}\}$.
	Furthermore, for any $h>0$, define the event 
	\begin{align}
		\mathcal{E}_t\;\triangleq\; \Biggl\{t\bigg[\frac{{\sf C}(\hat\theta_t\;|\;\F_t)}{\beta}+\sum\limits_{i\in[K]}\frac{{\sf D}(\hat\alpha_{i,t}\;|\;\F_t)}{\beta_{i}}\bigg]\;\geq\;W_{\bbeta}(\theta,\balpha)-h\Biggr\}
	\end{align}
	Next, we will use Lemma \ref{lemma:con_multi} and set $L=t^*(h)$. Expanding (\ref{eq:multi_conv1}) we have
	\begin{align}
		&\P\Biggl\{\bigg[\frac{T{\sf C}(\hat\theta_T\;|\;\F_T)}{\beta}+\sum\limits_{i\in[K]}\frac{T{\sf D}(\hat\alpha_{i,T}\;|\;\F_T)}{\beta_{i}}\bigg]\;\leq\;(K+1)T,\nonumber\\&\qquad\qquad\qquad\qquad\qquad\qquad T>t^*(h)\Biggr\}\nonumber\\
		&=\;\P\Biggl\{\bigg[\frac{T{\sf C}(\hat\theta_T\;|\;\F_T)}{\beta}+\sum\limits_{i\in[K]}\frac{T{\sf D}(\hat\alpha_{i,T}\;|\;\F_T)}{\beta_{i}}\bigg]\;\leq\;(K+1)T,\nonumber\\&\qquad\qquad\qquad\qquad\qquad\qquad T>t^*(h),\;\mathcal{E}_T\Biggr\}\nonumber\\
		\label{eq:multi_conv2}
		&+\; \P\Biggl\{\bigg[\frac{T{\sf C}(\hat\theta_T\;|\;\F_T)}{\beta}+\sum\limits_{i\in[K]}\frac{T{\sf D}(\hat\alpha_{i,T}\;|\;\F_T)}{\beta_{i}}\bigg]\;\leq\;(K+1)T,\nonumber\\&\qquad\qquad\qquad\qquad\qquad\qquad T>t^*(h)\;\bigg\lvert\;\overline{\mathcal{E}_T}\Biggr\}\cdot\P\big(\overline{\mathcal{E}_T}\big)\\
		\label{eq:multi_conv3}
		&\leq\; \P\Bigl\{W_{\bbeta}(\theta,\balpha)-h\;\leq\;(K+1)T\Bigr\}\;+\;(K+1)h\;,
	\end{align}
	where (\ref{eq:multi_conv2}) is a result of applying the law of total probability, and (\ref{eq:multi_conv3}) is a result of Lemma \ref{lemma:con_multi}. Finally, (\ref{eq:multi_conv3}) can be rewritten using (\ref{eq:multi_T1_and_T2}) and (\ref{eq:multi_T1}), as follows.
	\begin{align}
		\P\biggl\{T\;\geq\;\frac{W_{\bbeta}(\theta,\balpha)-h}{K+1}\biggr\}\;>\;1-h\;.
	\end{align}
	Subsequently, applying the Markov's inequality yields
	\begin{align}
		\E[T]\;\geq\; \frac{W_{\bbeta}(\theta,\balpha)-h}{K+1}\cdot \Big(1-(K+1)h\Big)\;.
	\end{align}
	\section{Proof of Theorem ~\ref{theorem:multi_sc_ach}}
	\label{proof:multi_sc_ach}
	Using Lemma \ref{lemma:multi_ach}, we have the following two convergence properties.
	\begin{align}
		\label{eq:multi_ach_limits}
		t\cdot {\sf C}\Big(\nu_t^{\sf MMSE}\med\;\F_t\Big)&\xrightarrow{t\rightarrow\infty}\beta V(\theta)\;,\nonumber\\ \text{and}\quad
		t\cdot {\sf D}(\zeta_{i,t}^{\sf MMSE}\;|\;\F_t,\;\nu_t^{\sf MMSE})&\xrightarrow{t\rightarrow\infty}\beta_i V_i(\alpha_i)\;.
	\end{align}
	Thus, for any $h>0$, there exists $t(h)<+\infty$ such that for all $t>t(h)$,
	\begin{align}
		\label{eq:multi_sc_ach1}
		\P \biggl\{t\cdot
		{\sf C}(\nu_t^{\sf MMSE}\;|\;\F_t)\;\leq\; \beta V(\theta)+h\biggr\}\;=\;1\;,\\ \P \biggl\{t\cdot
		{\sf D}(\zeta_{i,t}^{\sf MMSE}\;|\;\F_t,\;\nu_t^{\sf MMSE})\;\leq\; \beta_i V_i(\alpha_i)+h\biggr\}\;=\;1\;.
		\label{eq:multi_sc_ach2}
	\end{align} 
	Define $(T-\tau_i)$ as the $i^{\rm th}$ time instant at which the guarantee on one of the parameters is met. Furthermore, define $S(i)\in[K]$ as the experiment to which the parameter belongs, in case that it is a private parameter. Thus, $\tau_{K+1}=0$. Furthermore, using (\ref{eq:multi_sc_ach1}) and (\ref{eq:multi_sc_ach2}), we obtain that for any $h>0$ and $T-\tau_i-1\geq t(h)$,  
	\begin{align}
		\label{eq: multi_ach4}
		&\P\Bigg\{T-\tau_i\leq \frac{V_{S(i)}(\alpha_{S(i)})+h}{\beta_{S(i)}} + 1\Bigg\}\;=\;1\ ,\\
		\text{or,}\quad&\P\Bigg\{T-\tau_i\leq \frac{\beta V(\theta)+h}{\beta} + 1\Bigg\}\;=\;1\ ,
		\label{eq: multi_ach5}
	\end{align} 
	depending on whether the parameter on which the tolerance guarantee is achieved is shared or private. Combining (\ref{eq: multi_ach4}) and (\ref{eq: multi_ach5}), we obtain:
	\begin{align}
		\label{eq: multi_ach6}
		\P\Bigg\{(K+1)T\leq \frac{\beta V(\theta)+h}{\beta} &+ \sum\limits_{i\in[K]}\frac{\beta_i V_i(\alpha_i)+h}{\beta_i}\nonumber\\&\quad + \sum\limits_{i=1}^K \tau_i + (K+1)\Bigg\}\;=\;1\ .
	\end{align}
	Furthermore, define $\beta_{\min}\triangleq\min\{\beta,\beta_1,\cdots, \beta_K\}$ and $\beta_{\max}\triangleq\max\{\beta,\beta_1,\cdots, \beta_K\}$. Note that using (\ref{eq:multi_sc_ach1}) and (\ref{eq:multi_sc_ach2}), along with the definition of $(t-\tau_i)$, for $t-\tau_i-1>t(h)$, we have
	\begin{align}
		\label{eq: multi_ach7}
		\P\Bigg\{T-\tau_i\leq V_{\max}(\theta,\balpha) +\frac{h}{\beta_{\min}}+1\Bigg\}\;=\; 1\ .
	\end{align}
	Similarly, (\ref{eq:multi_ach_limits}) combined with the definition of $(T-\tau_j)$ for any experiment $S(j)\in[K]$ and for $t-\tau_j>t(h)$, we have
	\begin{align}
		\label{eq: multi_ach8}
		&\P\Bigg\{T-\tau_j\geq \frac{\beta_{S(j)} V_{S(j)}(\alpha_{S(j)})-h}{\beta_{S(j)}}\Bigg\}\;=\;1\ ,\\
		\text{or,}\quad&\P\Bigg\{T-\tau_j\geq\frac{\beta V(\theta)-h}{\beta}\Bigg\}\;=\;1\ ,
		\label{eq: multi_ach9}
	\end{align}
	resulting in
	\begin{align}
		\label{eq: multi_ach10}
		\P\Bigg\{T-\tau_j\geq V_{\min}(\theta,\balpha) + \frac{h}{\beta_{\max}}\Bigg\} \;=\; 1 \ .
	\end{align}
	Combining (\ref{eq: multi_ach7}) and (\ref{eq: multi_ach10}), we obtain  that the following relationship holds with probability 1:
	{\small
	\begin{align}
		\label{eq: multi_ach11}
		\tau_j-\tau_i\leq V_{\max}(\theta,\balpha)&-V_{\min}(\theta,\balpha)\nonumber+\bigg(\frac{1}{\beta_{\min}} + \frac{1}{\beta_{\max}}\bigg)h+1\ .
	\end{align}
	}
	Combining (\ref{eq: multi_ach6}) and (\ref{eq: multi_ach11}), we obtain that with probability 1
{\small
	\begin{align}
	\nonumber 	 T & \leq 1+  \frac{W_{\bbeta}(\theta,\balpha)}{K+1} \\
		& + \frac{K}{K+1}\bigg(V_{\max}(\theta,\balpha)-V_{\min}(\theta,\balpha) +\bigg(\frac{1}{\beta_{\min}} + \frac{1}{\beta_{\max}}\bigg)h +1\bigg) \ .
	\end{align}
	}
	Finally, to ensure that $T-\tau_i-1>t(h)$ for every parameter, we follow the same steps as (\ref{eq:const1}), (\ref{eq:const2}) and (\ref{eq:const3}) in order to obtain constants $L(h)$ and $M_i(h)$ for every $i\in[K]$, such that $\beta\in(0,L(h))$ and $\beta_i\in(0,M_i(h))$ ensure the required conditions. The corresponding choices of $L(h)$ and $M_i(h)$ are given by:
	\begin{align}
		&L(h)\;\triangleq\; \frac{1}{t(h)}\Bigg(\sum\limits_{i\in[K]}\mathscr{J}_i(\theta)\Bigg)^{-1},\nonumber\\ \text{and}\quad &M_i(h)\;\triangleq\; \inf\limits_{\theta\in\Theta} \frac{1}{t(h)} \Bigg(\mathscr{J}_i(\alpha_i\med\theta)\Bigg)^{-1}\ .
	\end{align}
	\section{Proof of Theorem~\ref{theorem:multi_ach_avg}}
	\label{proof:multi_ach_avg}
	From Lemma \ref{lemma:multi_ach}, there exists $T_\epsilon<+\infty$ such that for all $t\geq T_{\epsilon}$, 
	\begin{align}
		\label{eq:multi_ach_avg1}
		t\Bigg\{\frac{{\sf C}(\nu_t^{\sf MMSE}\med\F_t)}{\beta} &+ \sum\limits_{i\in[K]} \frac{{\sf D}(\zeta_{i,t}^{\sf MMSE}\med\F_t)}{\beta_i}\Bigg\}\nonumber\\&\quad\in[W_{\bbeta}(\theta,\balpha)-\epsilon, W_{\bbeta}(\theta,\balpha)+\epsilon]\ .
	\end{align}
	Let $a\in[K]$ be the first experiment for which the estimation guarantee is achieved, and let $(T-\tau_a)$ denote the time instant at which this guarantee is achieved. Then, using (\ref{eq:multi_ach_avg1}), along with the stopping criterion in (\ref{eq:stop2}) we have
	\begin{align}
		T-\tau_a - 1\;&=\; (T-\tau_a-1)\mathds{1}_{\{T-\tau_a-1\leq T_\epsilon\}} \nonumber\\&\quad+ (T-\tau_a-1)\mathds{1}_{\{T-\tau_a-1>T_\epsilon\}}\\
		&\stackrel{(\ref{eq:multi_ach_avg1})}{\leq}\; T_\epsilon + \frac{W_{\bbeta}(\theta,\balpha)+\epsilon}{K+1} + 1\ .
	\end{align}
	Hence,
	\begin{align}
		\label{eq:multi_ach_avg2}
		T\;\leq\; T_\epsilon + \frac{W_{\bbeta}(\theta,\balpha)+\epsilon}{K+1} + \tau_a + 2 \ .
	\end{align}
	Furthermore, using (\ref{eq:ach2_slast}), there exists $T_{0}(\epsilon)<+\infty$ such that for all $t\geq T_{0}(\epsilon)$
	\begin{align}
		\label{eq:multi_ach_avg3}
		t{\sf C}(\nu_t^{\sf MMSE}\med\F_t)\in[\beta V(\theta) -\epsilon, \beta V(\theta) + \epsilon]\ ,
	\end{align}
	where $V(\theta)$ is defined in (\ref{eq:Vs}).
	Similarly, using (\ref{eq:ach2_last}) there exists $T_{i}(\epsilon)<+\infty$ for $i\in[K]$, such that for all $t\geq T_{i}(\epsilon
	)$
	\begin{align}
		\label{eq:multi_ach_avg4}
		t{\sf D}(\zeta_{i,t}^{\sf MMSE}\med\F_t)\in[\beta_i V_i(\alpha_i) -\epsilon, \beta_i V_i(\alpha_i) + \epsilon]\ ,
	\end{align}
	where $V_i(\alpha_i)$ is defined in (\ref{eq:Vs}). Let $T^*_\epsilon\triangleq \max\{T_0(\epsilon), \cdots, T_{K}(\epsilon)\}$. 
	Furthermore, let $b\in[K]$ be the experiment selected at the stopping time. Thus, $\tau_b=0$. Subsequently, for any $\epsilon>0$, (\ref{eq:multi_ach_avg4}) yields
	\begin{align}
		\label{eq:multi_ach_avg6}
		T\;&\leq\; T^*(\epsilon) + \frac{V_b(\alpha_b)+\epsilon}{\beta_b} + 2\\
		&\leq\; T^*(\epsilon) + V_{\max}(\theta,\balpha) +\frac{\epsilon}{\beta_b} + 2\ ,
	\end{align}
	where (\ref{eq:multi_ach_avg6}) is a result of the fact that at stopping, ${\sf D}(\zeta_{i,t}^{\sf MMSE}\med \nu_t^{\sf MMSE}, \F_t)\leq\beta_i$, which implies that ${\sf D}(\zeta_{i,t}^{\sf MMSE}\med\F_t)\leq\beta_i$, which is obtained by taking an expectation over ${\sf D}(\zeta_{i,t}^{\sf MMSE}\med\nu_t^{\sf MMSE},\F_t)$ with respect to the measure $\E_t^i$.
	Thus, noting that $\tau_a\leq T$, we have
	\begin{align}
		\label{eq:multi_ach_avg7}
		\tau_a \;\leq\; T^*_\epsilon + V_{\max}(\theta,\balpha) +\frac{\epsilon}{\beta_b} + 2\ .
	\end{align}
	Finally, combining (\ref{eq:multi_ach_avg2}) and (\ref{eq:multi_ach_avg7}), we obtain:
	\begin{align}
		\label{eq:th6_1}
		T\;\leq\; T_{\epsilon} + T^*_\epsilon + \frac{W_{\bbeta}(\theta,\balpha)}{K+1} + V_{\max}(\theta,\balpha) +\frac{\epsilon}{\beta_b} + 4\ .
	\end{align} 
	Note that $\sup_{Y_t,\psi^t} T_\epsilon < +\infty$, and $\sup_{Y_t,\psi^t} T^*_\epsilon < +\infty$ due to Lemma~\ref{lemma:multi_ach}. The proof is completed by taking the expectation on both sides of (\ref{eq:th6_1}), dividing by $W_{\bbeta}(\theta,\balpha)$, and taking the limit. 
	\section{Proof of Theorem \ref{remark:chernoff}}\label{R1}
	
	By contradiction, we prove that a purely Chernoff-based sampling strategy is sub-optimal. We begin by assuming that the Chernoff-based sampling strategy described in section \ref{chernoff_policy} combined with an almost surely finite stopping time $\tau^{\sf c}$ is asymptotically optimal. This implies that every constraint given in the problem (\ref{eq:var_multi}) is satisfied at stopping. First, let us define the following quantities that are instrumental for our argument:
	\begin{align}
		a^*\;\triangleq\;\argmax_{i\in[K]}\;\bigg\lbrace \mathscr{J}_i(\theta)+\mathscr{J}_i(\alpha_{i})\bigg\rbrace\;,
	\end{align}
	\begin{align}
		&\mathcal{U}\;\triangleq\; \big\lbrace \{\theta,\alpha_i\} : \theta\in\Theta, \alpha_i\in\mcA_i \;\forall i\in[K] \big\rbrace\;,\\
		&\mathcal{U}_i\;\triangleq\;\bigg\lbrace \theta\in\Theta,\alpha_i\in\mcA_i\; : \; \mathscr{J}_i(\theta) + \mathscr{J}_i(\alpha_i)\nonumber\\ &\qquad\qquad\qquad > \; \mathscr{J}_j(\theta) + \mathscr{J}_j(\alpha_i)\; \forall\;j\in[K]\setminus i\bigg\rbrace\;,
	\end{align}
	where $a^*$ represents the most informative experiment that maximizes the overall FI,	$\mathcal{U}$ is a set containing all possible pairs of the parameters $\{\theta,\alpha_i\}$ for every $i\in[K]$, and $\mathcal{U}_i$ represents a subset of $\mathcal{U}$ that maximizes the FI computed for the pair $\{\theta,\alpha_i\}$ for the experiment $i\in[K]$. It can be readily verified that $(\theta,\alpha_{a^*})\in\mathcal{U}_{a^*}$. Now, by the strong consistency of the ML estimate~\cite{MLE}, we have,
	\begin{align}
		{\nu}_t^{\sf ML}\rightarrow\theta,\quad\quad{\zeta}_{a^*,t}^{\sf ML}\rightarrow \alpha_{a^*} \quad\text{  a.s.  }
	\end{align}
	This implies that there exists a finite $m(\epsilon_1,\epsilon_2)\in\N$ such that, almost surely, $\lvert {\nu}_t^{\sf ML} - \theta\rvert\;<\;\epsilon_1$, and $\lvert{\zeta}_{a^*,t}^{\sf ML}-\alpha_{a^*}\rvert\;<\;\epsilon_2$ for every $t\;\geq\;m(\epsilon_1,\epsilon_2)$. We also note that the ground truth $(\theta,\alpha_{a^*})$ is an interior point of the set $\mathcal{U}_{a^*}$, since $\mathcal{U}_i$ is an open set by definition. Thus, if the FI measures $\mathscr{I}(\theta)$ and $\mathscr{J}(\alpha_i)$ for every $i\in[K]$ are continuous functions of the respective parameters $\theta$ and $\alpha_i$, there exists an $\epsilon$-ball around the point $(\theta,\alpha_{a^*})$, $\epsilon\;>\;0$, such that any set of parameters $\{\theta^\prime,\alpha_{a^*}^\prime\}$ within the $\epsilon$-ball also maximizes the FI. Selecting $m(\epsilon_1,\epsilon_2)$ to be sufficiently large, such that $\epsilon\;<\;\min(\epsilon_1,\epsilon_2)$, we have that $({\nu}_t^{\sf ML},{\zeta}_{a^*,t}^{\sf ML})\in\mathcal{U}_{a^*}$ for every $t\geq m(\epsilon_1,\epsilon_2)$. Next, we will show that for a range of the confidence interval $\beta$, the optimal stopping rule uses at least $m(\epsilon_1,\epsilon_2)$ samples with a high probability. Combining the fact that $\bar{p}_t\rightarrow p^*$, where $\bar p_t$ represents the probability mass function over the experiments $i\in[K]$ due to the Chernoff-based sampling strategy, and $p^*$ denotes the distribution which selects the experiment $a^*$ with probability $1$, and the Bernstein-Von Mises Theorem (\cite{dasgupta2008asymptotic}), we obtain 
	\begin{align}
		\label{eq:chernoff_convergence}
		t\cdot{\sf C}(\nu_t^{\sf MMSE}\;|\;\F_t^{\sf c})\xrightarrow{t\rightarrow\infty}1/\mathscr{J}_{a^*}(\theta)\;,
	\end{align}
	where we have defined $\F_t^{\sf c}\triangleq\{Y^t,\psi_{\sf c}^t\}$ and $\psi_{\sf c}^t\triangleq\{\psi^{\sf c}(1),\cdots,\psi^{\sf c}(t)\}$. Let $\tau^{\sf c}$ denote the optimal stopping rule that minimizes (\ref{eq:var_multi}) under $\psi_{\sf c}^t$. Using (\ref{eq:chernoff_convergence}), there exists $\tau_\epsilon<+\infty$, such that for all $t\geq\tau_\epsilon$, we have
	\begin{align}
		1\;&=\;\P\Big\lbrace t\cdot{\sf C}(\nu_t^{\sf MMSE}\;|\;\F_{t}^{\sf c})\;\geq\; 1/\mathscr{J}_{a^*}(\theta)-\epsilon\Big\rbrace\ .
	\end{align}
	Furthermore, choosing $\tau^* = \max\{\tau_\epsilon, m(\epsilon_1,\epsilon_2)\}$, we have that for all $t\geq\tau^*$,
	\begin{align}
		1\;&=\;\P\Big\lbrace \tau^{\sf c}\cdot{\sf C}(\nu_{\tau^{\sf c}}^{\sf MMSE}\;|\;\F_{\tau^{\sf c}}^{\sf c})\;\geq\; 1/\mathscr{J}_{a^*}(\theta)-\epsilon,\tau^{\sf c}\;\leq\;\tau^*\Big\rbrace\nonumber\\
		\nonumber &\quad +\;\P\Big\lbrace \tau^{\sf c}\cdot{\sf C}(\nu_{\tau^{\sf c}}^{\sf MMSE}\;|\;\F_{\tau^{\sf c}}^{\sf c})\;\geq\; 1/\mathscr{J}_{a^*}(\theta)-\epsilon,\tau^{\sf c}\;>\;\tau^*\Big\rbrace\\
		&\leq\; \P\Big\lbrace \tau^{\sf c}\beta\;\geq\; 1/\mathscr{J}_{a^*}(\theta)-\epsilon,\tau^{\sf c}\;\leq\;\tau^*\Big\rbrace\nonumber\\
				\label{eq:chern1}
 &\quad+\;\P\Big\lbrace \tau^{\sf c}\cdot{\sf C}(\nu_{\tau^{\sf c}}^{\sf MMSE}\;|\;\F_{\tau^{\sf c}}^{\sf c})\;\geq\; 1/\mathscr{J}_{a^*}(\theta)-\epsilon,\tau^{\sf c}\;>\;\tau^*\Big\rbrace\;,
	\end{align}
	where (\ref{eq:chern1}) holds since at the stopping time, ${\sf C}(\nu_{\tau^{\sf c}}^{\sf MMSE}\med\F_{\tau^{\sf c}}^{\sf c})\leq\beta$. Selecting $\beta<\frac{1}{\tau^*}\Big(\frac{1}{\mathscr{J}(\theta)}-\epsilon\Big)$, we observe that $\P\Big\lbrace \tau^{\sf c}\beta\;\geq\; 1/\mathscr{J}_{a^*}(\theta)-\epsilon,\tau^{\sf c}\;\leq\;\tau^*\Big\rbrace=0$. Thus, we have
	\begin{align}
		\P\Big\lbrace \tau^{\sf c}\;>\;\tau^*\Big\rbrace\;=\;1\;.
	\end{align}
	Finally, note that for all $t>\tau^*$ and for each experiment $i\in[K]\setminus a^*$, ${\sf D}(\zeta_{i,{\tau^{\sf c}}}^{\sf MMSE}|\F_{\tau^{\sf c}}^{\sf c},\;\nu_{\tau^{\sf c}}^{\sf MMSE})=$ ${\sf D}(\zeta_{i,{\tau^*}}^{\sf MMSE}|\F_{\tau^*}^{\sf c},\;\nu_{\tau^*}^{\sf MMSE})$, since for all $t\geq\tau^*$, the Chernoff-based control action selects experiment $a^*$. Let $\gamma\triangleq\min\limits_{i\in[K]\setminus a^*}{\sf D}(\zeta_{i,{\tau^{\sf c}}}^{\sf MMSE}|\F_{\tau^{\sf c}}^{\sf c},\nu_{\tau^{\sf c}}^{\sf MMSE})$, $\gamma>0$. Thus, if we choose $\beta_i<\gamma$ for any $i\in[K]\setminus a^*$, the sampling strategy forces the estimation cost ${\sf C}(\zeta_{i,{\tau^c}}^{\sf MMSE}|\F_{\tau^{\sf c}}^{\sf c},\;\nu_{\tau^{\sf c}}^{\sf MMSE})>\beta_i$, hence, failing to satisfy the constraints in~(\ref{eq:var_multi}), which is a contradiction to our initial assumption.

	\bibliographystyle{IEEEtran}
	\bibliography{CSERef3.bib}

\begin{thebibliography}{10}
\providecommand{\url}[1]{#1}
\csname url@samestyle\endcsname
\providecommand{\newblock}{\relax}
\providecommand{\bibinfo}[2]{#2}
\providecommand{\BIBentrySTDinterwordspacing}{\spaceskip=0pt\relax}
\providecommand{\BIBentryALTinterwordstretchfactor}{4}
\providecommand{\BIBentryALTinterwordspacing}{\spaceskip=\fontdimen2\font plus
\BIBentryALTinterwordstretchfactor\fontdimen3\font minus
  \fontdimen4\font\relax}
\providecommand{\BIBforeignlanguage}[2]{{%
\expandafter\ifx\csname l@#1\endcsname\relax
\typeout{** WARNING: IEEEtran.bst: No hyphenation pattern has been}%
\typeout{** loaded for the language `#1'. Using the pattern for}%
\typeout{** the default language instead.}%
\else
\language=\csname l@#1\endcsname
\fi
#2}}
\providecommand{\BIBdecl}{\relax}
\BIBdecl

\bibitem{Chernoff}
H.~Chernoff, ``Sequential design of experiments,'' \emph{Annals of Mathematical
  Statistics}, vol.~30, no.~3, pp. 755--770, September 1959.

\bibitem{TajerMarkov}
A.~Tajer, J.~Heydari, and H.~V. Poor, ``Active sampling for the quickest
  detection of markov networks,'' \emph{IEEE Transactions on Information
  Theory}, vol.~68, no.~4, pp. 2479--2508, 2022.

\bibitem{PAC}
C.~Dimitrakakis and C.~Savu-Krohn, ``Cost-minimising strategies for data
  labelling: Optimal stopping and active learning,'' in \emph{Proc.
  International Conference on Foundations of Information and Knowledge
  Systems}, Pisa, Italy, February 2008.

\bibitem{tong:01}
S.~Tong and D.~Koller, ``Active learning for parameter estimation in {B}ayesian
  networks,'' in \emph{Proc. International Conference on Neural Information
  Processing Systems}, Denver, CO, November 2000.

\bibitem{bandit}
V.~Anantharam, P.~Varaiya, and J.~Walrand, ``Asymptotically efficient
  allocation rules for the multiarmed bandit problem with multiple
  plays---{P}art {I}: {I.I.D.} rewards,'' \emph{IEEE Transactions on Automatic
  Control}, vol.~32, pp. 968 -- 976, December 1987.

\bibitem{app4}
D.~{Bouneffouf}, I.~{Rish}, and C.~{Aggarwal}, ``Survey on applications of
  multi-armed and contextual bandits,'' in \emph{Proc. IEEE Congress on
  Evolutionary Computation}, Glasgow, UK, July 2020.

\bibitem{rev1}
H.-H. Chang and Z.~Ying, ``A global information approach to computerized
  adaptive testing,'' \emph{Applied Psychological Measurement}, vol.~20, no.~3,
  pp. 213--229, September 1996.

\bibitem{app5}
A.~G. {Tartakovsky}, B.~L. {Rozovskii}, R.~B. {Blazek}, and H.~{Kim}, ``A novel
  approach to detection of intrusions in computer networks via adaptive
  sequential and batch-sequential change-point detection methods,'' \emph{IEEE
  Transactions on Signal Processing}, vol.~54, no.~9, pp. 3372--3382, September
  2006.

\bibitem{app6}
M.~{Marcus} and P.~{Swerling}, ``Sequential detection in radar with multiple
  resolution elements,'' \emph{IRE Transactions on Information Theory}, vol.~8,
  no.~3, pp. 237--245, April 1962.

\bibitem{Bessler}
S.~Bessler, ``Theory and applications of the sequential design of experiments,
  $k$-actions and infinitely many experiments: Part {I}. {T}heory,'' Ph.D.
  dissertation, Department of Statistics, Stanford University, March 1960.

\bibitem{Albert1961}
A.~E. Albert, ``The sequential design of experiments for infinitely many states
  of nature,'' \emph{The Annals of Mathematical Statistics}, vol.~32, no.~3,
  pp. 774--799, September 1961.

\bibitem{box1967}
G.~E.~P. Box and W.~J. Hill, ``Discrimination among mechanistic models,''
  \emph{Technometrics}, vol.~9, no.~1, pp. 57--71, February 1967.

\bibitem{Meeter}
W.~J. Blot and D.~A. Meeter, ``Sequential experimental design procedures,''
  \emph{Journal of the American Statistical Association}, vol.~68, no. 343, pp.
  586--593, Sep. 1973.

\bibitem{Atia2}
S.~Nitinawarat, G.~K. Atia, and V.~V. Veeravalli, ``Controlled sensing for
  multihypothesis testing,'' \emph{IEEE Transactions on Automatic Control},
  vol.~58, no.~10, pp. 2451--2464, May 2013.

\bibitem{Veeravalli2}
S.~Nitinawarat and V.~V. Veeravalli, ``Controlled sensing for sequential
  multihypothesis testing with controlled {M}arkovian observations and
  non-uniform control cost,'' \emph{Sequential Analysis}, vol.~34, no.~1, pp.
  1--24, August 2015.

\bibitem{Zhao:IT15}
K.~Cohen and Q.~Zhao, ``Active hypothesis testing for anomaly detection,''
  \emph{IEEE Transactions on Information Theory}, vol.~61, no.~3, pp.
  1432--1450, March 2015.

\bibitem{vaidhiyan2015active}
N.~K. Vaidhiyan and R.~Sundaresan, ``Active search with a cost for switching
  actions,'' in \emph{Proc. Information Theory and Applications Workshop}, San
  Diego, CA, February 2015.

\bibitem{Naghshvar2013}
M.~Naghshvar and T.~Javidi, ``Active sequential hypothesis testing,''
  \emph{Annals of Statistics}, vol.~41, no.~6, pp. 2703--2738, December 2013.

\bibitem{Wang:Arxiv15}
J.~Wang, ``Optimal sequential multi-class diagnosis,'' \emph{Operations
  Research}, November 2020.

\bibitem{Zhao:SP14}
K.~Cohen, Q.~Zhao, and A.~Swami, ``Optimal index policies for anomaly
  localization in resource-constrained cyber systems,'' \emph{IEEE Transactions
  on Signal Processing}, vol.~62, no.~16, pp. 4224--4236, August 2014.

\bibitem{Zhao:SP15}
K.~Cohen and Q.~Zhao, ``Asymptotically optimal anomaly detection via sequential
  testing,'' \emph{IEEE Transactions on Signal Processing}, vol.~63, no.~11,
  pp. 2929--2941, June 2015.

\bibitem{TajerDetectionLearning}
J.~{Heydari} and A.~{Tajer}, ``Quickest search and learning over correlated
  sequences: Theory and application,'' \emph{IEEE Transactions on Signal
  Processing}, vol.~67, no.~3, pp. 638--651, February 2019.

\bibitem{J26}
J.~Heydari, A.~Tajer, and H.~V. Poor, ``{Quickest Linear Search over Correlated
  Sequences},'' \emph{IEEE Transactions on Information Theory}, vol.~62,
  no.~10, pp. 5786--5808, October 2016.

\bibitem{Yahav}
P.~J. Bickel and J.~A. Yahav, ``Asymptotically pointwise optimal procedures in
  sequential analysis,'' in \emph{Proc. Berkeley Symposium on Mathematical
  Statistics and Probability, Volume 1: Statistics}, Berkeley, CA, 1967, pp.
  401--413.

\bibitem{blot73}
W.~J. Blot and D.~A. Meeter, ``Sequential experimental design procedures,''
  \emph{Journal of the American Statistical Association}, vol.~68, no. 343, pp.
  586--593, April 1973.

\bibitem{Zhao}
K.~{Cohen} and Q.~{Zhao}, ``Active hypothesis testing for anomaly detection,''
  \emph{IEEE Transactions on Information Theory}, vol.~61, no.~3, pp.
  1432--1450, March 2015.

\bibitem{mitra}
D.~{Zois}, M.~{Levorato}, and U.~{Mitra}, ``Active classification for pomdps: A
  {K}alman-like state estimator,'' \emph{IEEE Transactions on Signal
  Processing}, vol.~62, no.~23, pp. 6209--6224, December 2014.

\bibitem{atiaClassification}
J.~G. {Ligo}, G.~K. {Atia}, and V.~V. {Veeravalli}, ``A controlled sensing
  approach to graph classification,'' \emph{IEEE Transactions on Signal
  Processing}, vol.~62, no.~24, pp. 6468--6480, December 2014.

\bibitem{rangi2018distributed}
A.~Rangi, M.~Franceschetti, and S.~Marano, ``Distributed {C}hernoff test:
  Optimal decision systems over networks,'' \emph{IEEE Transactions on
  Information Theory}, vol.~67, no.~4, pp. 2399--2425, April 2021.

\bibitem{schwarz1962}
G.~Schwarz, ``Asymptotic shapes of bayes sequential testing regions,''
  \emph{Annals of Mathematical Statistics}, vol.~33, no.~1, pp. 224--236, March
  1962.

\bibitem{kiefer1963}
J.~Kiefer and J.~Sacks, ``Asymptotically optimum sequential inference and
  design,'' \emph{Annals of Mathematical Statistics}, vol.~34, no.~3, pp.
  705--750, September 1963.

\bibitem{lalley1986}
S.~P. Lalley and G.~Lorden, ``A control problem arising in the sequential
  design of experiments,'' \emph{Annals of Probability}, vol.~14, no.~1, pp.
  136--172, January 1986.

\bibitem{rev3}
T.~L. Lai, ``Sequential analysis: Some classical problems and new challenges,''
  \emph{Statistica Sinica}, vol.~11, no.~2, pp. 303 -- 351, 2001.

\bibitem{rev4}
B.~K. Ghosh, ``Sequential analysis: Tests and confidence intervals (david
  siegmund),'' \emph{SIAM Review}, vol.~29, no.~2, pp. 315--318, 1987.

\bibitem{ghosh2011sequential}
M.~Ghosh, N.~Mukhopadhyay, and P.~K. Sen, \emph{{S}equential
  {E}stimation}.\hskip 1em plus 0.5em minus 0.4em\relax John Wiley \& Sons, New
  York, NY, 2011.

\bibitem{seq_est1}
M.~Ghosh and N.~Mukhopadhyay, ``On two fundamental problems of sequential
  estimation,'' \emph{Sankhy{\=a}: The Indian Journal of Statistics, Series B},
  vol.~38, no.~3, pp. 203--218, 1976.

\bibitem{seq_est2}
F.~J. Anscombe, ``Large-sample theory of sequential estimation,''
  \emph{Mathematical Proceedings of the Cambridge Philosophical Society},
  vol.~48, no.~4, pp. 600--607, 1952.

\bibitem{yohai1973}
V.~J. Yohai, ``Asymptotically optimal {B}ayes sequential design of experiments
  for estimation,'' \emph{Annals of Statistics}, vol.~1, no.~5, pp. 822--837,
  September 1973.

\bibitem{rev2}
C.-D. Fuh and I.~Hu, ``{Asymptotically efficient strategies for a stochastic
  scheduling problem with order constraints},'' \emph{The Annals of
  Statistics}, vol.~28, no.~6, pp. 1670 -- 1695, December 2000.

\bibitem{Atia}
G.~{Atia} and S.~{Aeron}, ``Asymptotic optimality results for controlled
  sequential estimation,'' in \emph{Proc. Annual Allerton Conference on
  Communication, Control, and Computing}, Monticello, IL, October 2013.

\bibitem{MoustakidesISIT}
{G. V. {Moustakides} and T. {Yaacoub} and Y. {Mei}}, ``Sequential estimation
  based on conditional cost,'' in \emph{Proc. IEEE International Symposium on
  Information Theory}, Aachen, Germany, July 2017.

\bibitem{CAT_nstop1}
D.~O. Segall, ``Multidimensional adaptive testing,'' \emph{Psychometrika},
  vol.~61, no.~2, pp. 331--354, 1996.

\bibitem{CAT_nstop2}
J.~Mulder, V.~D. Linden, and J.~Wim, ``Multidimensional adaptive testing with
  optimal design criteria for item selection,'' \emph{Psychometrika}, vol.~74,
  no.~2, pp. 273--296, 2009.

\bibitem{CAT_nstop3}
B.~P. Veldkamp, V.~D. Linden, and J.~Wim, ``Multidimensional adaptive testing
  with constraints on test content,'' \emph{Psychometrika}, vol.~67, no.~4, pp.
  575--588, 2002.

\bibitem{CAT_stop1}
S.~W. Choi, M.~W. Grady, and B.~G. Dodd, ``A new stopping rule for computerized
  adaptive testing,'' \emph{Educational and {P}sychological {M}easurement},
  vol.~70, no.~6, pp. 1--17, December 2010.

\bibitem{CAT_stop2}
S.~B. Morris, M.~Bass, and R.~E. Neapolitan, ``Stopping rules for computer
  adaptive testing when item banks have nonuniform information.''
  \emph{International Journal of Testing}, vol.~20, no.~2, pp. 146--168, July
  2019.

\bibitem{CAT_stop3}
C.~Wang, D.~J. Weiss, and Z.~Shang, ``Variable-length stopping rules for
  multidimensional computerized adaptive testing,'' \emph{Psychometrika},
  vol.~84, no.~3, pp. 749--771, 2019.

\bibitem{wang2013deriving}
C.~Wang, H.-H. Chang, and K.~A. Boughton, ``Deriving stopping rules for
  multidimensional computerized adaptive testing,'' \emph{Applied Psychological
  Measurement}, vol.~37, no.~2, pp. 99--122, 2013.

\bibitem{Veeravalli}
S.~{Nitinawarat}, G.~K. {Atia}, and V.~V. {Veeravalli}, ``Controlled sensing
  for multihypothesis testing,'' \emph{IEEE Transactions on Automatic Control},
  vol.~58, no.~10, pp. 2451--2464, October 2013.

\bibitem{TTEI}
C.~Qin, D.~Klabjan, and D.~Russo, ``Improving the expected improvement
  algorithm,'' in \emph{Proc. Advances in Neural Information Processing
  Systems}, Long Beach, CA, December 2017.

\bibitem{pmlr-v49-garivier16a}
A.~Garivier and E.~Kaufmann, ``Optimal best arm identification with fixed
  confidence,'' in \emph{Proc. Conference on Learning Theory}, New York, NY,
  June 2016.

\bibitem{pmlr-v108-shang20a}
X.~Shang, R.~de~Heide, P.~Menard, E.~Kaufmann, and M.~Valko, ``Fixed-confidence
  guarantees for {B}ayesian best-arm identification,'' in \emph{Proc.
  International Conference on Artificial Intelligence and Statistics}, Sicily,
  Italy, August 2020, pp. 1823--1832.

\bibitem{MLE}
W.~Newey and D.~McFadden, ``Large sample estimation and hypothesis testing,''
  \emph{Handbook of Econometrics}, vol.~4, pp. 2111--2245, 1986.

\bibitem{dasgupta2008asymptotic}
A.~DasGupta, \emph{Asymptotic Theory of Statistics and Probability}, ser.
  Springer Texts in Statistics.\hskip 1em plus 0.5em minus 0.4em\relax
  Springer, New York, NY, 2008.

\bibitem{Kay97}
S.~M. Kay, \emph{Fundamentals of Statistical Signal Processing: Estimation
  Theory}.\hskip 1em plus 0.5em minus 0.4em\relax Prentice Hall, NJ, 1997.

\end{thebibliography}
	\vspace{0.2in}

\end{document}